\newcommand\vldbdoi{10.14778/3476249.3476308}
\newcommand\vldbpages{2627 - 2641}
\newcommand\vldbvolume{14}
\newcommand\vldbissue{11}
\newcommand\vldbyear{2021}
\newcommand\vldbauthors{\authors}
\newcommand\vldbavailabilityurl{https://github.com/mkuchnik/PCR_Release}
\newcommand\vldbpagestyle{empty}
\def\eqref#1{equation~\ref{#1}}
\def\1{\bm{1}}
\newcommand{\train}{\mathcal{D}}
\DeclareMathAlphabet{\mathsfit}{\encodingdefault}{\sfdefault}{m}{sl}
\SetMathAlphabet{\mathsfit}{bold}{\encodingdefault}{\sfdefault}{bx}{n}
\def\sB{{\mathbb{B}}}
\newcommand{\E}{\mathbb{E}}
\newcommand\leftRightCrop[4]{\adjustbox{trim={#1\width} {0} {#4\width} {0},clip,center}{\includegraphics[width=\linewidth * \real{#3}]{#2}}}
\newcommand\centerCrop[2]{\adjustbox{trim={#1\width} {#1\height} {#1\width} {#1\height},clip,center}{\includegraphics[width=\linewidth * \real{0.29} / \real{#1}]{#2}}}
\newcommand\centerCropEx[1]{\centerCrop{0.2}{#1}}
\newtheorem{theorem}{Theorem}[section]
\newtheorem{lemma}[theorem]{Lemma}
\newcommand\newcontent[1]{{#1}}
\newcounter{obscount}
\newcommand{\observation}{ %
\addtocounter{obscount}{1} %
\textbf{Observation \arabic{obscount}: }}
\DeclareMathOperator{\similarity}{sim}
\begin{document}
\setcounter{page}{1}
\title{Progressive Compressed Records:\\ Taking a Byte out of Deep Learning Data}

\author{Michael Kuchnik}
\affiliation{%
  \institution{Carnegie Mellon University}
}
\email{mkuchnik@cmu.edu}

\author{George Amvrosiadis}
\affiliation{%
  \institution{Carnegie Mellon University}
}
\email{gamvrosi@cmu.edu}

\author{Virginia Smith}
\affiliation{%
  \institution{Carnegie Mellon University}
}
\email{smithv@cmu.edu}

\begin{abstract}
Deep learning accelerators efficiently train over vast and growing
amounts of data, placing a newfound burden on commodity networks and storage
devices.
A common approach to conserve bandwidth involves resizing or compressing data prior to training.
We introduce \textit{Progressive Compressed Records} (PCRs), a data format that uses compression to reduce the overhead of fetching and transporting data, effectively reducing the training time required to achieve a target accuracy. 
PCRs deviate from previous storage formats by combining progressive compression with an efficient storage layout to view a single dataset at multiple fidelities---all without adding to the total dataset size.
We implement PCRs and evaluate them on a range of datasets, training tasks, and hardware architectures. %
Our work shows that: (i) the amount of compression a dataset can tolerate
exceeds 50\% of the original encoding for many DL training tasks; (ii) it is possible to automatically and efficiently select appropriate compression levels for a given task; and (iii) PCRs enable tasks to readily access compressed data at
runtime---\textit{utilizing as little as half the training bandwidth} and thus potentially doubling training speed.
 \end{abstract}

\maketitle

\pagestyle{\vldbpagestyle}
\begingroup\small\noindent\raggedright\textbf{PVLDB Reference Format:}\\
\vldbauthors. Progressive Compressed Records: Taking a Byte out of Deep Learning Data. PVLDB, \vldbvolume(\vldbissue): \vldbpages, \vldbyear.\\
\href{https://doi.org/\vldbdoi}{doi:\vldbdoi}
\endgroup
\begingroup
\renewcommand\thefootnote{}\footnote{\noindent
This work is licensed under the Creative Commons BY-NC-ND 4.0 International License. Visit \url{https://creativecommons.org/licenses/by-nc-nd/4.0/} to view a copy of this license. For any use beyond those covered by this license, obtain permission by emailing \href{mailto:info@vldb.org}{info@vldb.org}. Copyright is held by the owner/author(s). Publication rights licensed to the VLDB Endowment. \\
\raggedright Proceedings of the VLDB Endowment, Vol. \vldbvolume, No. \vldbissue\ %
ISSN 2150-8097. \\
\href{https://doi.org/\vldbdoi}{doi:\vldbdoi} \\
}\addtocounter{footnote}{-1}\endgroup

\ifdefempty{\vldbavailabilityurl}{}{
\vspace{.3cm}
\begingroup\small\noindent\raggedright\textbf{PVLDB Artifact Availability:}\\
The source code, data, and/or other artifacts have been made available at \url{\vldbavailabilityurl}.
\endgroup
}

\section{Introduction}%
\label{sec:introduction}
Deep learning training consists of three key components: the data loading pipeline (storage), the training computation (compute), and, in the case of parallel or distributed training, the parameter synchronization (network). A plethora of work has investigated scaling deep learning from a compute- or network-bound perspective~\citep[][]{NIPS2012_4687,cui2016geeps,tensorflow2015-whitepaper,184014,jouppi2017datacenter,lim20183lc,zhu2018tbd,alistarh2017qsgd,lin2017deep,wen2017terngrad,wangni2018gradient,zhang2017poseidon}. However, little attention has been paid to scaling the storage layer, where training data is sourced.

\begin{figure}[t]
  \begin{center}
  \includegraphics[width=.35\textwidth]{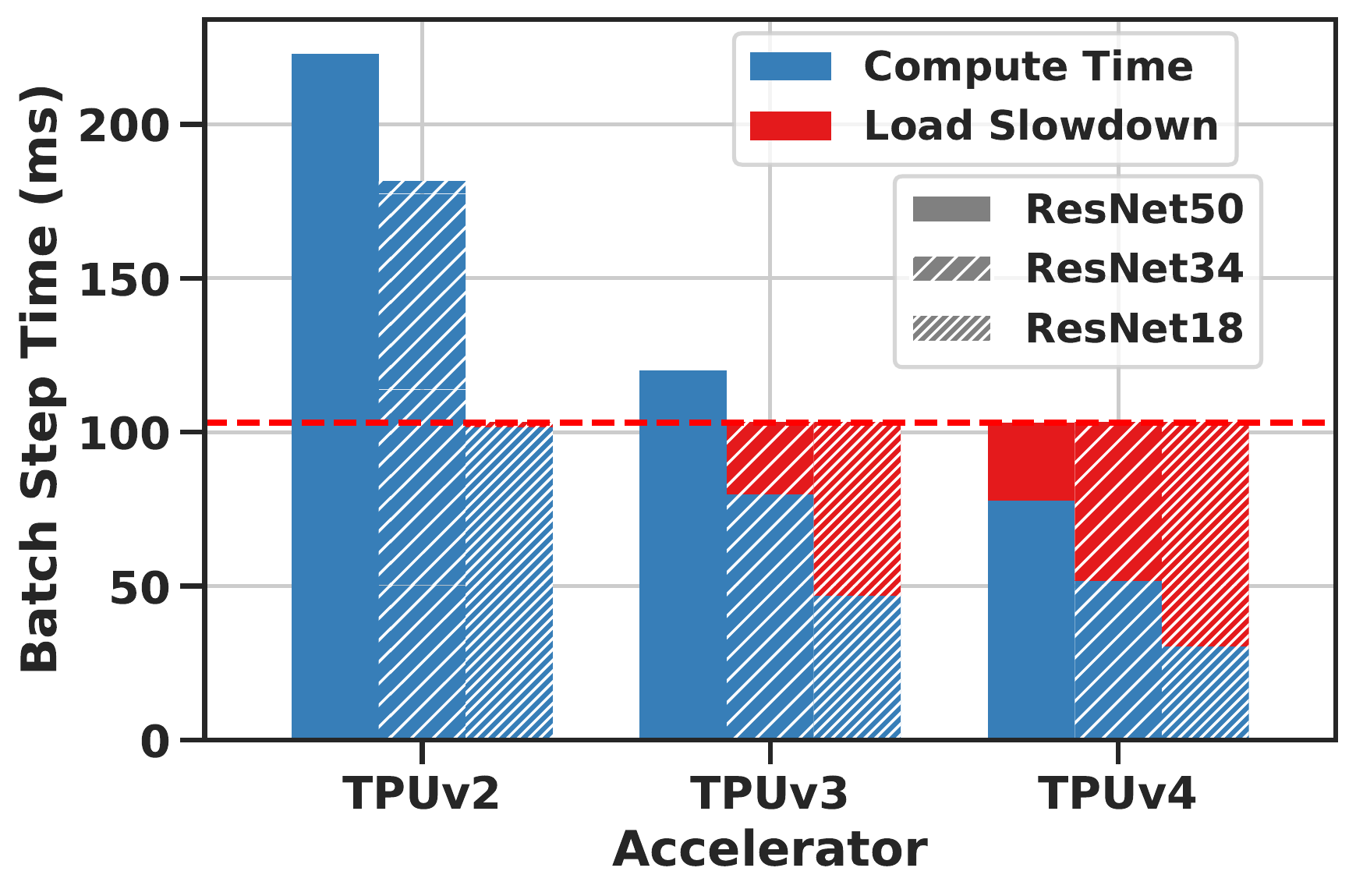}
  \end{center}
  \caption{Three generations of single-node TPU hardware performance on
  ResNet/ImageNet (batch of 1024).
  For illustrative purposes, a 1GiB/s limit of data bandwidth is shown
  ({\color{red}red} line) with the corresponding slowdown. ResNet18 on a TPUv3
  node can pull over 21k images/second, or over 2.1GiB/s---a challenge for both storage and network.}
  \label{fig:simulatedslowdown}%
\end{figure}

Current hardware trends point to an increasing divide between compute and the
rest of the hardware stack, including network or storage
bandwidth~\citep{li2016hippogriffdb,lim20183lc,kurth2018exascale} and main
memory~\citep{Wulf:1995:HMW:216585.216588,kwon2018beyond,sebastian2020memory}.
Indeed, in the last decade, the amount of compute available to deep learning (DL) has increased exponentially~\citep{aiindex2019}. %
However, I/O bandwidth has been slower to evolve, potentially resulting in I/O
becoming a dominating factor in the overall runtime of deep learning
tasks~\cite{li2016hippogriffdb,kurth2018exascale,wang2020systematic,MLPerfHPC07}.
Recent evidence highlights that, while accelerators are the workhorse of any ML
fleet, 30\% of resources are spent on the data
pipeline~\cite{murray2021tfdata} in industrial workloads and up to 65\% of epoch time
is spent in data pipelines in research workloads~\cite{dsanalyzer}.
While I/O is only a part of the data pipelines,
it has the possibility to create bottlenecks
and thus lower end-to-end ML training efficiency.

The resource requirements for I/O can be prohibitive, either due to cost,
scaling limits of filesystems, or quality of service requirements.
Figure~\ref{fig:simulatedslowdown} shows that this can be problematic even at
small scales.
We mark the time it takes to fetch one training batch using 1GiB/s of data
bandwidth with a dashed red line, since cloud instances are typically limited to
1--4 GiB/s of network~\cite{GCPNetworkBandwidth} \newcontent{and 1GiB/s of disk
bandwidth~\cite{GCPDiskBandwidth}}.
We have trained ResNet~\cite{he2016deep} models of varying complexity (ResNet18 being the least complex) using the ImageNet~\cite{imagenet_cvpr09} dataset. We find that TPUv2~\cite{jouppi2020domain}, Google's second version of their custom AI accelerator, completes computation on a given training batch within the time it takes to fetch the next batch (according to the dashed red line). However, the least complex of the models, ResNet18, is expected to toe that line. TPUv2 is 6 years old, and the third version of Google's accelerator manages to pack enough compute to speed up batch computations so that two out of the three ResNet models spend more time fetching the next training batch than computing on the current one.
This is projected to become a problem for even more complex models, according to
publicly available per-core performance numbers for TPUv4~\citep{MLPerf07}, which we include in the figure.

These trends highlight that I/O, if left unaltered, stands to dominate training costs.
Worse, if the underlying data used in training were to get larger, the problem
could become much worse.
Contrary to conventional wisdom, datasets like ImageNet consist of \textit{small} images with an average image resolution that is $7\times$ smaller than industry workloads~\cite{reddi2020mlperf}, and thus the combination of training \textit{rates} and \textit{data sizes} are likely to increase.

To cope with the divide between compute and I/O, architects have turned to
hardware/software co-design in an attempt to meet scaling and efficiency
goals~\citep{jouppi2020domain,kumar2020exploring}.
Two common, complementary approaches to optimize the storage layer include
\textit{caching}~\cite{quiver,dsanalyzer} and \textit{reducing data
volume}~\cite{karras2017progressive,DeepNJPEG}.
From the caching point of view, I/O pressure can be relieved by keeping a subset
of the workload in memory, and optimizing access patterns to hit in the cache
and thus avoid I/O.
However, for large datasets, one must choose between prohibitively large cache sizes or
weaker forms of sampling~\cite{meng2017convergence}.
From the point of view of data reduction, practitioners can resize images to
reduce their size.
However, choosing the resize parameters is task-specific, and is subject to
error~\cite{fixing_train_test}, which diminishes task performance.
In this work, we show deep neural network
training is amenable to a range of JPEG compression; however, unlike resizing,
this fact can be exploited as a
\textit{mechanism for dynamic data reduction}.
Notably, we show that different training tasks---a product of the dataset, model(s), and
objective---can tolerate different compression levels
(Section~\ref{sec:experiments}), and it is non-trivial to determine these levels
\textit{a priori}, which motivates a need for dynamic compression.

In this work, we propose \textit{Progressive Compressed Records} (PCRs), a novel data format that reduces the bandwidth cost associated with DL training. Our approach leverages a compression technique that decomposes each data item into a series of \textit{deltas}, each progressively increasing data fidelity. PCRs use deltas to \textit{dynamically} access entire datasets at a fidelity suitable for each task's needs, while avoiding inflating the dataset's size. This allows training tasks to control the trade-off between training data size and fidelity. The careful layout of PCRs ensures that data access is efficient at the storage level. Switching between data fidelities is lightweight, enabling adaptation to changing runtime conditions. Using PCRs for a variety of common deep learning models and image datasets, we find that bandwidth (and therefore training time) can be reduced by $2\times$ on average relative to simple JPEG compression without affecting model accuracy. This can allow for a larger fraction of the dataset to be cached in memory, complementing prior work~\cite{quiver}. Overall, we make the following contributions:

\begin{enumerate}[leftmargin=1.5em,topsep=0pt,itemsep=-1ex,partopsep=1ex,parsep=2ex]
  \item We introduce \textit{Progressive Compressed Records} (PCRs), a novel storage format for training data. PCRs combine progressive compression and careful data placement to enable tasks to \textit{dynamically} choose their data fidelity, increasing the effective training bandwidth.

  \item We demonstrate that by using PCRs, training speed can be improved by
    1.6--2.6$\times$ by selecting a lower data fidelity. These speedups are conservative given that the `raw' images we use are in fact already JPEG compressed; speedups are thus likely to be even larger for uncompressed datasets.

  \item In experiments with multiple architectures and  large-scale image datasets, we show deep neural network training is robust to data compression in terms of test accuracy and training loss.

  \item We introduce methodology for choosing the particular data fidelity necessary for a task, as well as a tuning heuristic that can be applied automatically. Using PCRs, our method can
dynamically switch between multiple data fidelities while training without loss of accuracy. %
\end{enumerate}

\begin{figure}[t!]
  \includegraphics[width=.48\textwidth]{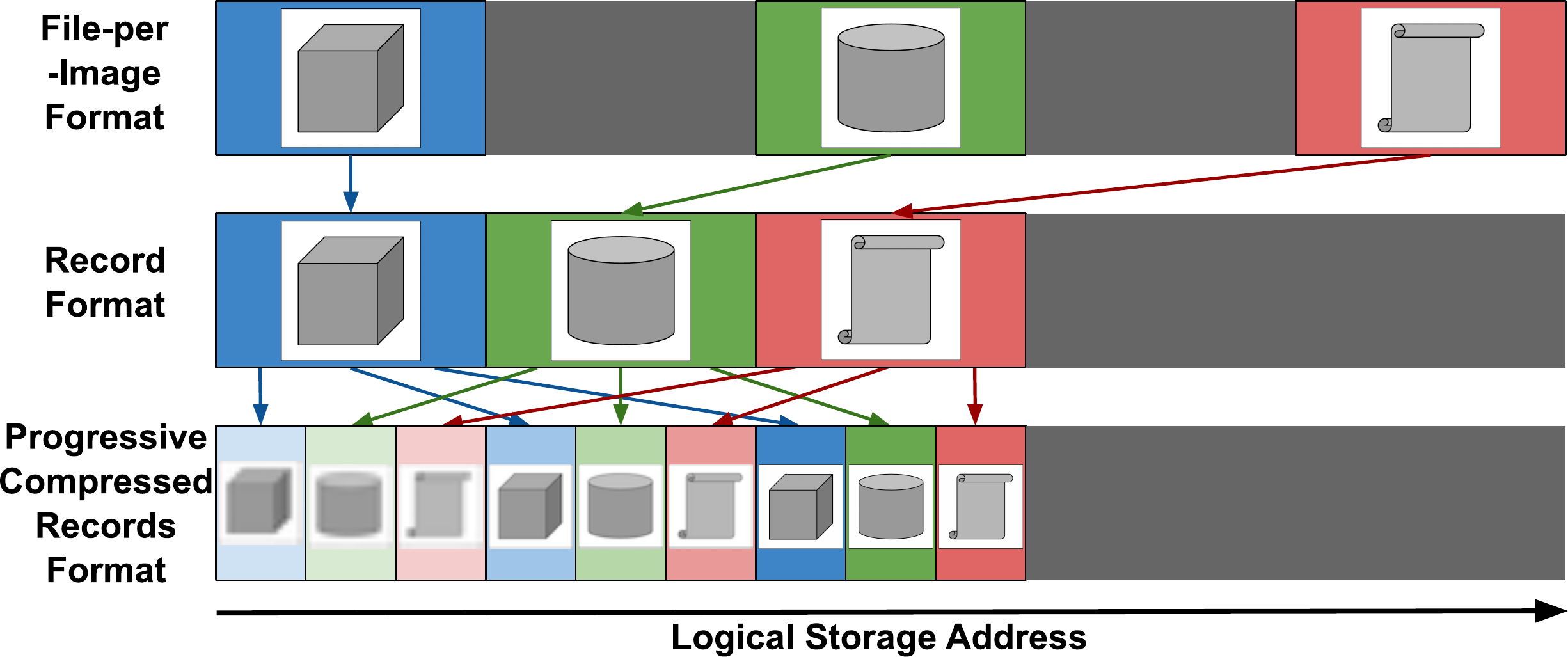}
  \caption{
    The design space of ML image file formats.
    File-per-Image (\textbf{Row 1}) formats randomly read files in the logical storage address space.
    Record layouts (\textbf{Row 2}) batch a subset of files into a single, large sequential read, promoting locality in address accesses.
    PCRs (\S\ref{sec:PCR}, \textbf{Row 3}) group by image fidelity (3 shown) to maintain the sequential
    behavior of record layouts while enabling dynamic compression without the need for duplicating data.
    Reading a full record recovers the full data fidelity for all images.
    Metadata (not shown) is small and can be kept in memory.
  }%
  \label{fig:PCR_encoding}
\end{figure}

\section{Background}%
\label{sec:background}
Advances in scalable training methods, software, and compute (e.g.,
accelerators) suggest that the time spent on training computation is decreasing
relative to time spent accessing
data~\citep{li2016hippogriffdb,lim20183lc,kurth2018exascale,wang2020systematic,dsanalyzer,quiver}.
Data bandwidth is therefore an increasingly important bottleneck to consider for machine
learning pipelines.
Two complementary concepts make up the process of storing/loading data: the 
\textit{data layout}, which helps to utilize the bandwidth of the underlying storage system, and
the \textit{data representation}, which can increase bandwidth by reducing the amount of data transferred. %
In this work, we develop a novel, flexible, and efficient storage format, PCRs, by
combining a data representation (progressive compression) with an efficient
data layout. Our work serves to lower three fundamental storage costs: storage
capacity, storage operations (IOPS), and storage/network bandwidth.

\paragraph{Record Layouts.} Learning from data requires sampling points from a training set. In the context of image data, perhaps the simplest way to access data is with a \textit{File-per-Image} layout, such as
PyTorch's ImageFolder, which can cause small, random accesses that are
detrimental to storage bandwidth and latency, \newcontent{while also stressing
filesystem metadata~\cite{panasas_fs,PyTorchWebDataset,largedirectory}.}
\textit{Record layouts}, such as TensorFlow's TFRecords~\citep{TFRecords},
MXNet's ImageRecord~\citep{RecordIOTutorial}, or even TAR files~\citep{PyTorchWebDataset},
attempt to alleviate this problem by batching data together into records.
Record layouts increase performance (i.e., read rate) by exploiting locality (Figure~\ref{fig:PCR_encoding}).
Our experiments indicate a single epoch can take $25\times$ longer with
File-per-Image formats compared to reading
Record formats---limiting their practicality without caching.
To achieve randomness, each Record is read into memory, where it may be shuffled with other
Records and broken into minibatches by the data loader.
While Record layouts improve over File-per-Image layouts, they are designed to
store data at a specific fidelity level, thus requiring multiple copies of each
dataset at different fidelities in order to realize efficient training across tasks.
In this work, our aim is to combine the efficiency of Record layouts with dynamic compression schemes (described below) to offer quick, easy access to data at multiple fidelity levels.

\paragraph{Image Compression.} Compressed forms are commonly used to represent training
data. JPEG~\citep{wallace1992jpeg} is one of the most popular
formats for image compression and is used ubiquitously in
machine
learning~\citep{imagenet_cvpr09,ILSVRC15,lin2014microsoft,Everingham10}.
Most compression formats (including JPEG) require
the compression level to
be set at encoding time, which often results in choosing this parameter in an application-agnostic manner.
However, as we show in Section~\ref{sec:experiments}, it is difficult to set the compression level for deep learning training without over- or
under-compressing, as the appropriate level may vary significantly across training tasks. Current approaches resort to storing multiple copies of the dataset at different compression levels, 
particularly for applications using multiple data fidelities within a single
training task~\citep{karras2017progressive}. This is infeasible for larger
datasets. For example, we find duplicating a 2GiB dataset at 9 resolutions can \textit{amplify the dataset size by
\textbf{$\mathbf{1.5{-}40\times}$} and require hours
of extra processing time}.
Terabyte-sized datasets rely on distributed frameworks to reduce dataset
creation from weeks to days~\cite{apachebeam}.

In Figure~\ref{fig:jpeg_algo}, we provide a simplified illustration of the JPEG
algorithm.
First, an image is split into blocks of size $8\times8$, which are then converted
into the frequency domain.
The low frequencies (top left of the matrix)
store the bulk of
the perceptually relevant content in the image.
Quantization, which discards information from the block and results in compression, is used to diminish the high frequency values, compressing the data.
\textit{Sequential formats} serialize the image's blocks from left to right, top to bottom. Decoding the data is simply a matter of inverting this process. %

\begin{figure}
  \centering
  \hspace{17pt}%
  \begin{subfigure}[b]{0.7\linewidth}
  \includegraphics[width=.99\linewidth]{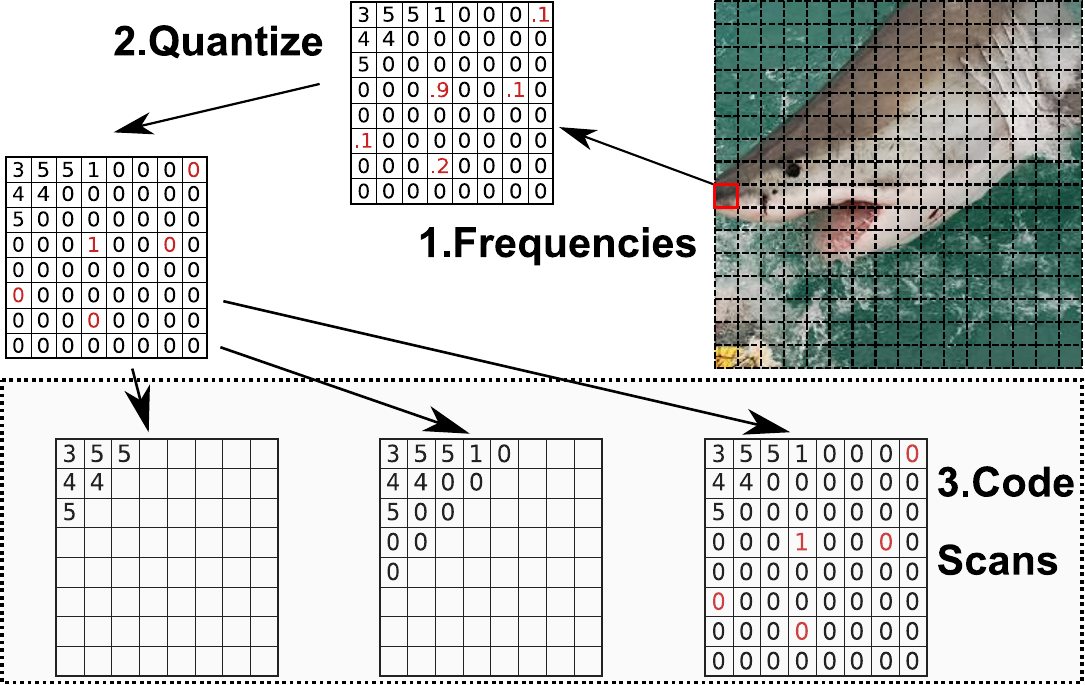}%
  \end{subfigure}%
  \newline
  \begin{subfigure}[b]{0.7\linewidth}
  \begin{subfigure}[t]{0.33\linewidth}%
    \center%
    \adjustbox{trim={0.0\width} {0.25\height} {0.5\width} {0.25\height},clip,center}{\includegraphics[width=2.0\linewidth]{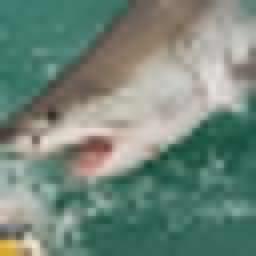}}
    \vspace{-1.0em}%
    \caption{Scan 1}%
  \end{subfigure}%
  \begin{subfigure}[t]{0.33\linewidth}%
    \center%
    \adjustbox{trim={0.0\width} {0.25\height} {0.5\width} {0.25\height},clip,center}{\includegraphics[width=2.0\linewidth]{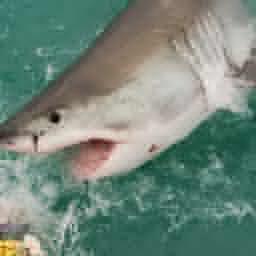}}
    \vspace{-1.0em}%
    \caption{Scan 3}%
  \end{subfigure}%
  \begin{subfigure}[t]{0.33\linewidth}%
    \center%
    \adjustbox{trim={0.0\width} {0.25\height} {0.5\width} {0.25\height},clip,center}{\includegraphics[width=2.0\linewidth]{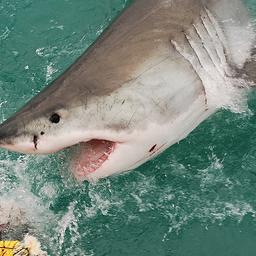}}
    \vspace{-1.0em}%
    \caption{Scan 10}%
  \end{subfigure}%
  \end{subfigure}%
  \caption{%
    \textbf{Top:}
    JPEG carves an image into blocks,
    which are then converted into frequencies, quantized, and serialized.
    Progressive compression writes out key coefficients from each block before re-visiting the block.
    \textbf{Bottom:}
    Higher scans (a$\rightarrow$c) have greater fidelity from more frequencies.
    }%
  \label{fig:jpeg_algo}%
\end{figure}

\paragraph{Progressive Image Compression.} Progressive compression is an alternative to standard image compression, which---combined with an additional rearrangement of data (Section~\ref{sec:PCR})---forms the
basis of the idea behind PCRs.
\textit{Progressive} formats allow data to be
read at varying degrees of compression without duplication.
As an example, over slow internet connections, these formats allow images to be
decoded \textit{dynamically} as they are transmitted over the network.
With the sequential case, data is ordered by blocks, and thus partially reading
the data results in ``holes'' in the image for unread blocks~\citep{wallace1992jpeg}.
Dynamic compression schemes interleave information (\textit{deltas}) from each
block, allowing all blocks (and thus the entire image) to be approximated
without reading the entire byte stream.
As progressive formats are simply a different traversal of the set of quantization
matrices, they contain the same information as
sequential JPEG~\citep{JPEGTranlibjpeg} and are actually often smaller in
practice.
As we depict in Figure~\ref{fig:jpeg_algo}, while non-progressive formats
serialize the image matrix in one pass, progressive formats serialize the
matrix in disjoint groups of deltas which are called \textit{scans}.
Scans are ordered by importance (e.g., the first few scans improve quality
more than subsequent scans).
Thus, any references to images generated from scan $n$ will implicitly assume
that the image decoder had access to prior scans.
Progressive formats exist not only for images, but also for modalities such as audio~\citep{OggVorbis} and video~\citep{schwarz2007overview}.
\section{Progressive Compressed Records}%
\label{sec:PCR}
We present \textit{Progressive Compressed Records} (PCRs), a novel storage format that reduces data bandwidth for ML training.  We specifically explore
PCRs for training deep neural networks with image data, but note that the ideas
behind PCRs could be readily extended to other modalities (e.g.,
audio~\citep{OggVorbis} or video~\cite{schwarz2007overview}), and
compression strategies (e.g.,
cropping~\citep[][]{952804}, interlaced PNG, or neural compression~\cite{toderici2017full}).
PCRs define a data layout that ensures bandwidth is fully utilized, and a data representation that permits accessing data at multiple levels of fidelity with minimal overhead.

PCRs are optimized to allow the entire training dataset to be read at a given fidelity.
To achieve this, data is rearranged into \textit{scan groups}, i.e., collections
of deltas of the same fidelity that are stored together in the address space.
To dynamically increase the fidelity of data read and decoded, a task then merely needs to read subsequent scan groups until the desired fidelity level is reached.
PCRs differ from other formats (e.g., TFRecord, RecordIO) because PCRs allow these
lower fidelity versions of each record to be accessed efficiently (without
space/throughput tradeoffs).
This efficiency is achieved by using progressive compression and changing the order that data is stored and accessed.
Space overhead for PCR conversion is negligible; PCRs are usually 5\% smaller than TFRecords.
This is because record format size is dominated by the image payload,
which is \textit{simply rearranged} with progressive compression.
File size
differences stem from the efficiency of entropy coding in JPEG, which typically
has higher compression ratios over progressive layouts~\cite{bookofspeed}.
Since PCRs allow a lower fidelity version of the entire dataset to be
accessed efficiently, they can drop the effective size and utilized
bandwidth of a record by a
factor of 2--10$\times$.

\begin{figure}
 \includegraphics[width=.48\textwidth]{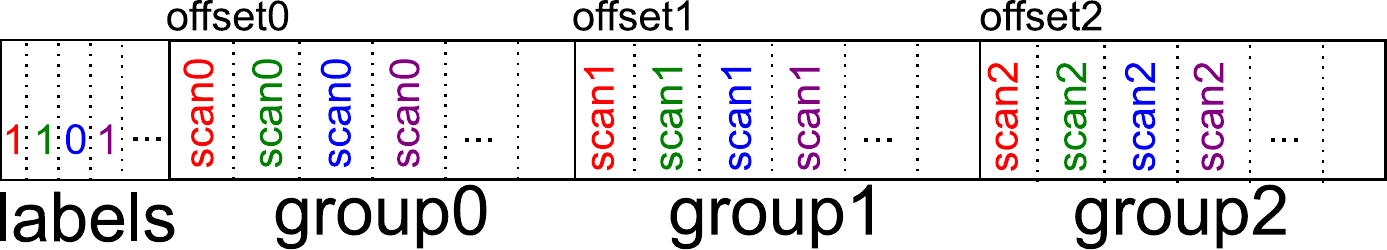}
 \caption{PCRs encode label metadata followed by all scan groups.
 Accessing the dataset at lower quality requires reading up to a given scan
  group. Reading all scan groups returns the full quality data, and decodes to
  identical bytes as the conventional JPEG format.
 }%
 \label{fig:PCR_encoding_2}
\end{figure}

Figure~\ref{fig:PCR_encoding_2} depicts the PCR format as it is organized on the storage medium.
PCRs logically consist of two parts: \textit{metadata} and \textit{data}.
Metadata consists of sample metadata (e.g., labels, bounding boxes) as well as
PCR metadata (e.g., mapping of files and scans to storage addresses).
Metadata is small in size (e.g., an image label can be
represented by a 32 bit \texttt{int}, while an image is 100kiB or more) and can be kept in a database, mapped in memory, or
pre-pended to PCRs (for per-sample metadata e.g., labels).
The data, which is orders of magnitude larger, consists of the images themselves,
organized in terms of increasing levels of fidelity.
Each fidelity level for an image is a \textit{scan} and each grouping of images of the
same fidelity is a \textit{scan group}.
For example, the scan 1 representation of the shark in Figure~\ref{fig:jpeg_algo} can be retrieved by reading its data from scan group 1.
Likewise, the scan 3 representation will be available once the records up to scan group 3 are read, and the reconstructed representation will be of higher fidelity than that of scan 1.
As scan groups consist of scans of the same fidelity, every image contained
in a record of the same group offset is available at the same fidelity.
Users of PCRs can read data at a given fidelity by simply reading the
on-disk byte stream from the start of the PCR
to the end of the corresponding scan group.
This way of dynamically selecting data fidelity allows for bandwidth savings without re-encoding the data.

\section{Design}%
\label{sec:design}
The key insight behind PCRs is that, for storage or network I/O bound workloads,
training tasks can be sped up by reducing data fidelity (and, thus, the amount of data read) to match the available I/O bandwidth.
Figure~\ref{fig:predictedThroughput} shows the training throughput obtained by
using PCRs vs.\ the traditional TFRecord format.
As a fair point of comparison, we use our
\texttt{tf.data}~\citep{TFData,murray2021tfdata} implementation, and thus only
the dataset reader operation has changed.
As we describe in Section~\ref{sec:speedup_analysis},
PCRs can reduce the bytes read per image, and thus
proportionally increase the throughput of the end-to-end training process (up to
the compute limits of the accelerator).
However, a speedup is only possible if the CPU overhead introduced by PCRs can
be absorbed by the machine (Section~\ref{sec:decoding_overhead}).
The final part of the design of PCRs involves choosing the image quality level
automatically, which we describe in Section~\ref{sec:autotuning_quality}.

\subsection{I/O Speedup Analysis}%
\label{sec:speedup_analysis}

\paragraph{End-to-End Slowdown.}
Amdahl's Law~\cite{amdahlslaw} states that if $p$ fraction of a program is
waiting for data (see red/blue bars in
Figure~\ref{fig:simulatedslowdown}), a $\frac{1}{1-p}$ speedup can be obtained
by removing the wait.
Recent work determines possible speedups empirically by finding the gap between the data preparation rate and the I/O rate~\cite{dsanalyzer}.
However, because PCRs are dynamic, it is important to know what PCR
configurations can actually lead to a speedup (i.e., what scan group to select).
Although we observe over $500\times$ $\text{max}/\text{min}$ range on ImageNet, mean size-per-sample, $\E_{x \sim \train}[s(x)]$, is all that is required for an accurate performance model.
We tabulate this information in Table~\ref{fig:PCR_expected_size} and motivate the model below.

\begin{table}[h]
  \caption{Image size reduction for various scans and the size of an average
  image, which can be combined to predict I/O speedups.
  Scan 10 is approximately the same size as baseline JPEG, and scan 5 is roughly
  half.
  }%
  \centering
  \begin{tabular}{lccccc} \toprule
      \textbf{Dataset}  & Scan 1 & Scan 2 & Scan 5 & Scan 10 & $\E_{x \sim \train}[s(x)]$ \\ \midrule
      \textbf{ImageNet}  & $16\times$ & $7\times$ & $2\times$ & $1\times$ & 110kB \\
      \textbf{HAM10000}  & $30\times$ & $15\times$ & $3\times$ & $1\times$ & 250kB \\
      \textbf{Cars}     & $14\times$ & $6\times$ & $2\times$ & $1\times$ & 110kB\\
      \textbf{CelebAHQ}  & $7\times$ & $4\times$ & $3\times$ & $1\times$ & 80kB \\
      \bottomrule
  \end{tabular}%
  \label{fig:PCR_expected_size}
\end{table}

\paragraph{Input Pipeline Throughput.}
Using closed-system Little's Law~\cite{littles_law,harchol2013performance} and basic assumptions on the
characteristics of a storage system (i.e., the cost of large reads is proportional to bytes read),
the image throughput, $X$ (e.g., images per second), of an image pipeline is explained
by the equation: $X=\cfrac{W}{\E_{x \sim \train}[s(x)]}$, where $W$ is the
bandwidth and $\E_{x \sim \train}[s(x)]$ is the mean image size (average size of
an image sample).
The number of bytes in a record (a large read) is, by linearity of expectation, the number of
images, $n$, times the average image size.
Thus, the amortized cost per image (dividing by $n$) is the average image size, and time taken is proportional to $W$.
If a model/accelerator trains at 500 images/second (a function of the resized and cropped input-matrix dimensions~\cite{krizhevsky2012imagenet,simonyan2014very,he2016deep}), we can conclude that, using
ImageNet images, it will use up to $110\text{kB} *
500\text{s}^{-1}=55\text{MB/s}$ (Table~\ref{fig:PCR_expected_size}), as demonstrated in Figure~\ref{fig:predictedThroughput}.

\paragraph{Dataset-Level Bounds.}
To remove dependence on the accelerator's speed, the equation can be applied on
both scans and the original data: Theorem~\ref{speedup_thm}
presents the asymptotic bounds for the impact of
data reduction on training speedups.
It is derived by noticing that, when a system with fixed $W$ is bound by
the throughput of the I/O subsystem, $X$, one can calculate the speedup ratio
$\hat{X} \mathbin{/} X$, where $\hat{X}$ is using a reduced image size.
In sum, reducing the mean data read results in proportionally
higher I/O throughput, which results in proportional speedup on
I/O bound workloads.
For example, using Table~\ref{fig:PCR_expected_size}, which displays the ratios, one can anticipate that a
$2\times$ speedup would be seen on ImageNet with scan 5.
We defer interested readers to our supplemental material for a more
formal discussion of the performance modeling.

\begin{theorem}%
  \label{speedup_thm}%
  If a training pipeline is data bound,
  then the maximum
  achievable system speedup, $S_\text{max}$, for switching from dataset $\train$ to $\train'$ is
  the ratio of mean sample sizes, $s(x)$:
  $$S_\text{max}\left(\train, \train'\right)=\cfrac{\E_{x \sim \train}[s(x)]}{\E_{x' \sim \train'}[s(x')]} \,.$$ 
\end{theorem}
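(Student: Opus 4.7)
The plan is to reduce the claim to a direct consequence of the throughput equation $X = W/\E_{x \sim \train}[s(x)]$ that is already justified in the paragraph ``Input Pipeline Throughput.'' First I would fix notation: let $W$ denote the (dataset-independent) I/O bandwidth of the storage subsystem, let $X(\train)$ denote the steady-state image throughput when serving training records drawn from $\train$, and let $X_{\text{compute}}$ denote the rate at which the accelerator can consume images. The ``data bound'' hypothesis is then the statement $X(\train), X(\train') \le X_{\text{compute}}$, so that end-to-end throughput equals $X(\cdot)$ rather than $X_{\text{compute}}$.

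Next I would establish the per-dataset throughput formula. Viewing the storage system as a closed queue whose service demand on a read of $b$ bytes is $b/W$ (the linear-cost assumption stated in the text), and whose jobs are records containing $n$ images, the mean service time per record is $n \cdot \E_{x \sim \train}[s(x)] / W$ by linearity of expectation. Applying Little's Law to the record-level queue and dividing by $n$ to amortize per image yields
\[
X(\train) \;=\; \frac{W}{\E_{x \sim \train}[s(x)]}.
\]
The same argument applied to $\train'$ gives the analogous expression with $\E_{x' \sim \train'}[s(x')]$ in the denominator.

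I would then form the system speedup as $S(\train,\train') = X(\train')/X(\train)$, which under the data-bound hypothesis is the true end-to-end speedup (since neither pipeline is gated by compute). Substituting the two throughput expressions, the $W$'s cancel and we obtain
\[
S(\train,\train') \;=\; \frac{\E_{x \sim \train}[s(x)]}{\E_{x' \sim \train'}[s(x')]},
\]
which is precisely $S_{\max}(\train,\train')$. To justify calling this the \emph{maximum}, I would note that shrinking the mean sample size cannot raise throughput above the compute ceiling $X_{\text{compute}}$; the formula above is therefore an upper bound, attained exactly when both configurations remain in the I/O-bound regime.

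The step I expect to take the most care is the linearity assumption underlying the service-time calculation: large reads must have cost proportional to bytes transferred, with per-read fixed overheads negligible at record granularity. I would flag this explicitly as the operating regime of record layouts (where $n$ is chosen so that setup cost per read is amortized), and point to the ``cost of large reads is proportional to bytes read'' assumption already stated in the input-pipeline-throughput paragraph as the formal hypothesis that makes the expectation collapse cleanly into $\E[s(x)]$.
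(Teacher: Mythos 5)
Your proposal is correct and follows essentially the same route as the paper's appendix proof: linearity of expectation to get mean record read time, closed-system Little's Law to obtain $X=\cfrac{W}{\E_{x \sim \train}[s(x)]}$, cancellation of $W$ in the throughput ratio, and the compute ceiling (the paper's $X\leq\min(X_c,X_g)$ lemma) to justify ``maximum.'' The only cosmetic difference is that the paper derives the compute-ceiling step via an explicit open/closed queueing argument (Utilization Law), whereas you assert it directly, which is fine given the data-bound hypothesis.
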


\begin{figure}[t]
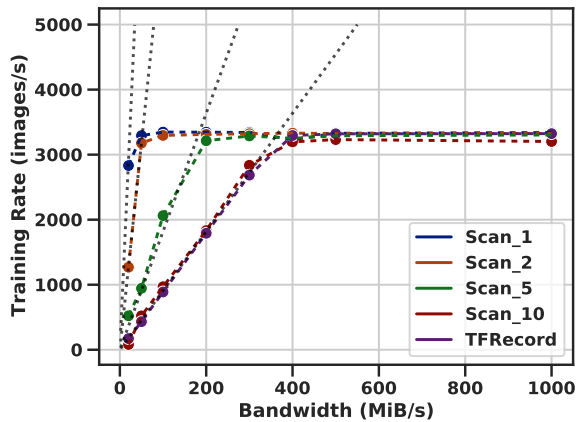

  \begin{center}
  \leftRightCrop{0.0}{resnet18_monitor_data.pdf}{0.95}{0.0}
  \end{center}
  \caption{%
      The training rate of a 10-node \texttt{TitanX} GPU cluster with a ResNet-18 workload using
      PCRs (the scans) and TFRecord.
      The throughput of the training process is dominated by I/O bandwidth until
      the compute limit of the GPU is reached.
      PCRs at scan 10 are approximately the same size as TFRecord,
      and thus have similar performance.
      Predicted rates are shown.
    }%
  \label{fig:predictedThroughput}%
\end{figure}

\newcontent{%
\subsection{Data Preparation Decoding Overhead}%
\label{sec:decoding_overhead}%
Changing the dataset encoding inevitably changes data-preparation work, which consumes CPU resources and must be managed.
The cost of progressive compression is dependent on the image size, scan
configuration, and decoder implementation~\citep{yan2017customizing}.
To analyze the cost of this progressive decoding, we test the rate of decoding $30k$
images using one thread---these rates can then be multiplied by the number of
cores on the machine in a parallel training scenario.
The figures are shown in Table~\ref{fig:PCR_decoding_cost}, indicating that
decoding with a subset of scans can be comparable to traditional
decoding.
On the other hand, using all the scans is over $2\times$ slower than traditional
decoding.
On a many-core machine (e.g., a 32+ core setup like the one used in Section~\ref{sec:evaluation_setup}), this overhead can be absorbed by idle cores---in Figure~\ref{fig:predictedThroughput}, we do not see any
slowdown by using PCRs relative to baseline JPEG TFRecords because each of the 10 machines can decode 3k images in the worst case.
However, for less core-heavy machines, we note three optimization paths to lower decoding overhead.
First, excessive and unused scans can be removed.
Second, using
spectral selection can lower decoding overhead.
Third, hardware acceleration can be used (Section~\ref{sec:discussion}).

\begin{table}[h]
  \caption{The single-core decoding rate (images/s) of various JPEG encodings
  across the datasets.
  Progressive decompression can be over $2\times$ more expensive than baseline decoding.
  }%
  \centering
  \begin{tabular}{lccccc} \toprule
    \textbf{Dataset} & Scan 1 & Scan 2 & Scan 5 & Scan 10 & Baseline \\ \midrule
    \textbf{ImageNet} & 433 & 412 & 340 & 146 & 419 \\
    \textbf{HAM10000} & 465 & 438 & 275 & 96 & 240 \\
    \textbf{Cars} & 266 & 240 & 225 & 127 & 268  \\
    \textbf{CelebAHQ} & 239 & 213 & 195 & 129 & 286 \\
    \bottomrule
  \end{tabular}%
  \label{fig:PCR_decoding_cost}
\end{table}
}%

\newcontent{%
\subsection{Autotuning Image Fidelity}%
\label{sec:autotuning_quality}
Lossy compression of input data creates concerns for the output of model training, and thus creates questions for how to select a tolerable scan group.
To analyze the effect of lower image fidelity, we observe that deep learning training is based on stochastic gradient descent, which involves
taking a ``step'' in the \textit{direction} (a vector) that improves the model.
If two datasets, $\train$ and $\train'$, yield the same direction, then they will yield the same model.
Therefore, we may intuitively find an alternative dataset $\train'$, which is
close to the original dataset $\train$ in terms of how the model views the
gradient direction of the loss function, $L$.
More formally, we want the \textit{angle} between gradient vectors to be
small.

To accomplish this, we freeze the model mid-training and empirically measure the gradient
direction, $\nabla_\theta L$, over the full fidelity dataset, $\train$, which
contains batches of images, $\mathbf{X}$, and labels, $\mathbf{y}$.
As the parameters are frozen, we can also measure the gradient on the lower fidelity
dataset, $\train'$, which has alternative images, $\mathbf{X}'$.
The angle between the lower fidelity dataset and the original dataset yields the similarity score, which ranges
between -1 and 1.
Maximizing similarity would yield an
\textit{identical} model.

$$\text{score}\left(\train, \train'\right)=\similarity\left(\nabla_\theta
L(\mathbf{X},\mathbf{y}),\nabla_\theta L(\mathbf{X}',
\mathbf{y})\right)$$
where similarity is the cosine similarity:
$$\similarity\left(\textbf{A}, \textbf{B}\right) =
\cos(\theta) = \cfrac{\textbf{A} \cdot \textbf{B}}{||\textbf{A}||
||\textbf{B}||}$$

We evaluate this procedure with HAM10000/ResNet, using 2560 images to estimate the gradient and
3 trials to get 95\% confidence intervals.
As shown in Figure~\ref{fig:PCR_similarity}, decreasing image fidelity decreases the similarity with
respect to the true gradient.
Scan 10 is bit-identical to the baseline dataset, and thus we observe maximum
similarity in that case.
Meanwhile, scan 1 has the lowest similarity, and the difference increases as the
model converges.
Given the gradients are well-behaved with respect to fidelity,
it is natural to parameterize scan tuning to match some level of
gradient similarity.

\begin{figure}
  \includegraphics[width=.49\textwidth]{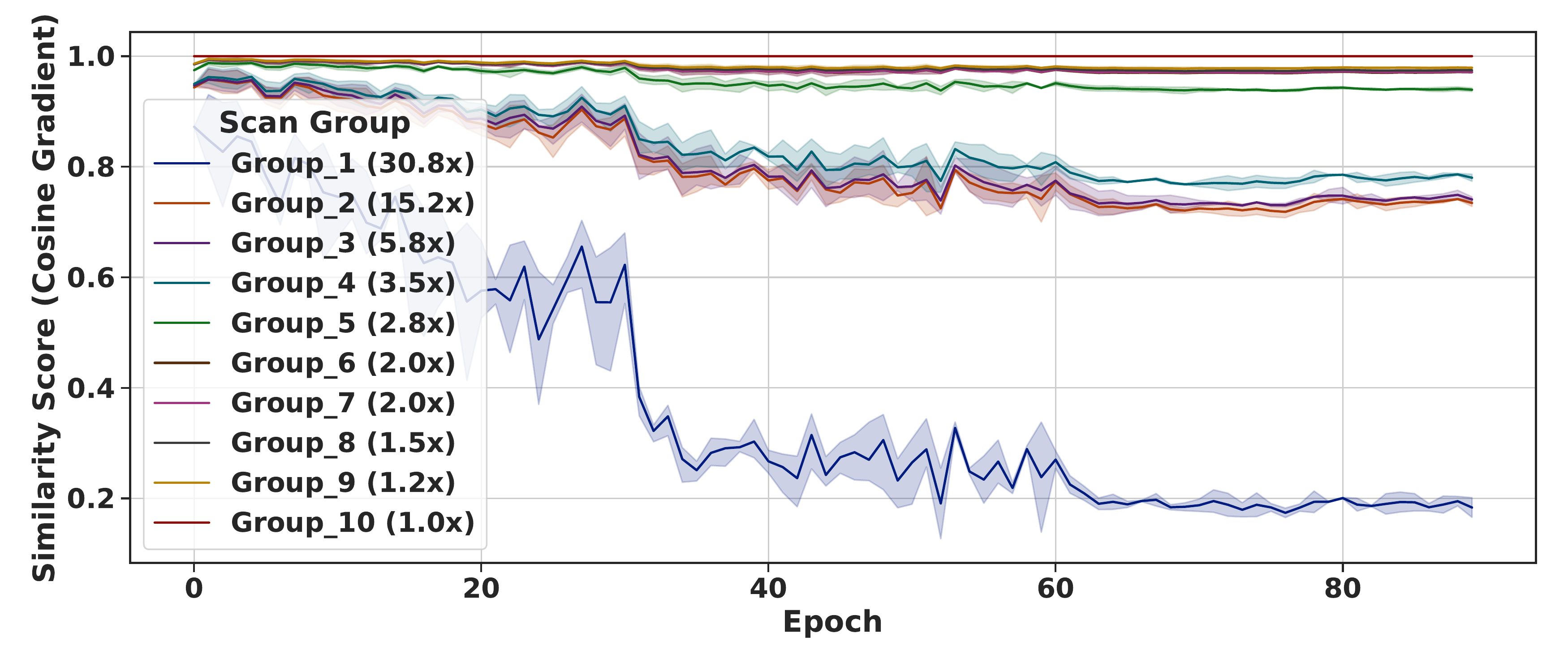}
  \caption{The similarity of gradients across epochs for
  ResNet/HAM10000 (max of 1.0).
  Legend shows bandwidth savings.
  Gradient similarity is exact for scan group 10 and decreases for other scans as the
  model converges.
  Higher
  quality scans lead to gradients within 0.1 of the baseline's gradient, and
  thus should result in similar final models.
 }%
 \label{fig:PCR_similarity}
\end{figure}

Using gradient similarity for autotuning quality requires choosing a minimum gradient similarity threshold for scans throughout
training, which is the main drawback of this approach.
As is shown in Figure~\ref{fig:PCR_similarity}, the similarity for high quality
data is bounded within a threshold of $0.8$---therefore, we find this threshold
a good default.
The computational cost of evaluation is on the order of tens of gradient steps,
and is proportional to the number of scans,
epochs, and minibatches used.
Our implementation tunes once every 20 epochs and does not tune for the first 5 epochs
because models are unstable during this
period~\cite{he2016deep,goyal2017accurate}.
As training progresses, low scans become too dissimilar (in terms of gradients) from the higher scans, and therefore are avoided.
This, in turn, allows the faster scans to apply a burst of speed to the training
process before fine-tuning at a higher fidelity (e.g., when learning rates drop).
Compared to static schedules~\cite{MLWeaving}, there is only one hyperparameter (the
threshold), which is independent of other schedules (e.g., learning rates),
and the parameter does not require validation data.
We leave tuning using QoS/congestion information~\citep{fouladi2018salsify} to future work.
}%

\section{Implementation}%
\label{sec:implementation}
There are three fundamental components in the PCR implementation: the encoder, the decoder, and the data loader.
The encoder transforms a set of JPEG files into a directory,
which contains: a database for PCR metadata and at least one PCR data file.
The decoder takes the directory as input and yields a set of
JPEG images, efficiently inverting a subset of the encoding.
The dataset is split into many records, and, thus, the training process is reading tens to hundreds of PCR data files per epoch.
The data loader is where the PCR decoding library interfaces with the inputs
provided to deep learning libraries (e.g.,
TensorFlow~\citep{tensorflow2015-whitepaper},
MXNet~\citep{chen2015mxnet}, PyTorch~\citep{NIPS2019_9015}).
Below, we describe these components in detail.

\paragraph{Encoding.}
Given a set of images, the PCR encoder breaks images into scans,
groups scans into scan groups, and sorts scan groups by fidelity.
Once groups are sorted, the PCR encoder can serialize groups while
taking note of their offsets (so that subsets may later be decoded).
The metadata (e.g., labels) is prepended to the serialized representation, and the resulting byte stream is written to disk.
Our implementation uses \texttt{JPEGTRAN}~\citep{jpegtran} to losslessly
transform JPEG images into progressive JPEG images.
With the default settings, each JPEG is broken up into $10$ scans.
The encoder scans the binary representation of the progressive JPEG files, searching
for the markers that designate the end of a scan group.
The encoder thus has access to all $10$ offsets within the JPEG files that can
be used to determine the boundaries between scan regions.
Forming scan groups requires grouping the scan regions with the same fidelity
together, which can be done in one pass over the set of images corresponding to
that PCR\@.
An index must be created for ungrouping the scans during decoding;
however, serialization libraries, such as
Protobuf~\citep{Protobuf}, handle both the packing and unpacking steps transparently.
\newcontent{%
As record format conversion can take hours (\S\ref{sec:background}), PCRs benefit from requiring only a single conversion for
multiple tasks.
The encoding time is within $2\times$ of conversion to
TFRecords in our implementation: for example, converting ImageNet takes 1.4
hours rather than 0.8 hours.
When using the widely available TFRecords
converter~\cite{tfrecords_converter},
our implementation for PCRs is
actually
faster
due to being parallelized---converters are
typically not optimized due to being one-time costs.
}%

\paragraph{Decoding.}
To decode a PCR file, the file's scan group offsets have to be located in the PCR metadata.
The offsets allow a partial read of the file, i.e., only the bytes of the desired scan group are read.
JPEG decoding requires serializing the image deltas of individual scan groups.
We terminate the byte stream with an
End-of-Image (EOI) JPEG token, which allows most JPEG decoders to render
the image with the available subset of scans.
The cost of these steps is negligible relative to that of the JPEG decoder, which
is the primary challenge facing PCRs.

\paragraph{Loader.} %
We implement PCR loaders using the DALI ExternalSource
operator~\citep{DALI}, as well as a C++ version compatible with
\texttt{tf.data}~\citep{TFData,murray2021tfdata}, including a Tensorflow Op~\citep{TFOp}.
SQLite and RocksDB are supported backing databases, and we support embedding
images and metadata in Protobufs or in ``raw struct'' form.
The PCR reader, like most readers, is cheap to evaluate; we can read over
400MiB/s using just a single CPU core.
This is because the bulk of the work is not computational, i.e., a file \texttt{read}
and a set of \texttt{memcpy} operations to re-arrange the PCR images.
Serialization libraries can add overhead (e.g., parsing, memory allocations);
however, ``raw struct'' formats avoid these entirely,
and flat formats minimize them~\citep{flatbuffers}.
Another design point is buffer allocation: in contrast to
traditional Record loaders, which can iteratively return individual data samples, PCRs must read (and
allocate) buffers for possibly the \textit{entire} record (10MiB+), since later
scan groups are used for even the first example (See $\S$\ref{sec:discussion}).
Thus, an optimized implementation of PCR loaders uses a double-buffer design,
where the buffers are re-used and read directly from disk using
\texttt{O\_DIRECT}.
Our implementations show that the main bottleneck with
using
PCRs is the image decode, which is downstream from the loader.
For I/O bound workloads, baseline and full-fidelity progressive record readers
perform the same (Figure~\ref{fig:predictedThroughput})
as image size differences are negligible (Section~\ref{sec:speedup_analysis}).

\section{Evaluation}%
\label{sec:experiments}
We evaluate the flexibility and efficiency of PCRs using a suite of large-scale image datasets.
We begin by describing our experimental setup (\S\ref{sec:evaluation_setup}) and
present an end-to-end evaluation of PCRs (\S\ref{sec:results}), demonstrating their ability to reduce training time. %
We show that dynamic compression is crucial because the appropriate level of compression varies across models and training tasks (\S\ref{sec:dynamic_compression}).
We explore metrics that can be used to explain the effectiveness of compression on a training task (\S\ref{sec:quality_analysis}),
introduce autotuning heuristics for dynamic training (\S\ref{sec:autotuning}),
and trace the speedups achieved by PCRs in terms of training time (\S\ref{sec:loading_rates}).
Our supplemental material contains additional experimental details and
training plots (e.g., training loss, other datasets).

\begin{table}[h]
  \caption{PCR dataset size and
  record count information.
  Datasets vary in terms of number of images, their JPEG quality, and the image
  sizes.
  Some datasets, such as HAM10000, have image sizes larger than average.
  Record sizes concentrate around the dataset size divided by the record count.
  }%
  \centering
  \begin{tabular}{lccccc} \toprule
    \textbf{Dataset} & Records & Images &
    Size & Quality & Classes \\ \midrule
      \textbf{ImageNet}  & 1251 & 1281167 & 129GiB & 91.7\% & 1000\\
      \textbf{HAM10000}  & 125 & 8012 & 2GiB & 100\% & 7 \\
      \textbf{Cars}  & 63 & 8144 & 887MiB & 83.8\% & 196 \\
      \textbf{CelebAHQ}  & 93 & 24000 & 2GiB & 75\% & 2 \\
      \bottomrule
  \end{tabular}
  \label{fig:PCR_record_info}%
\end{table}

\begin{figure*}[t]
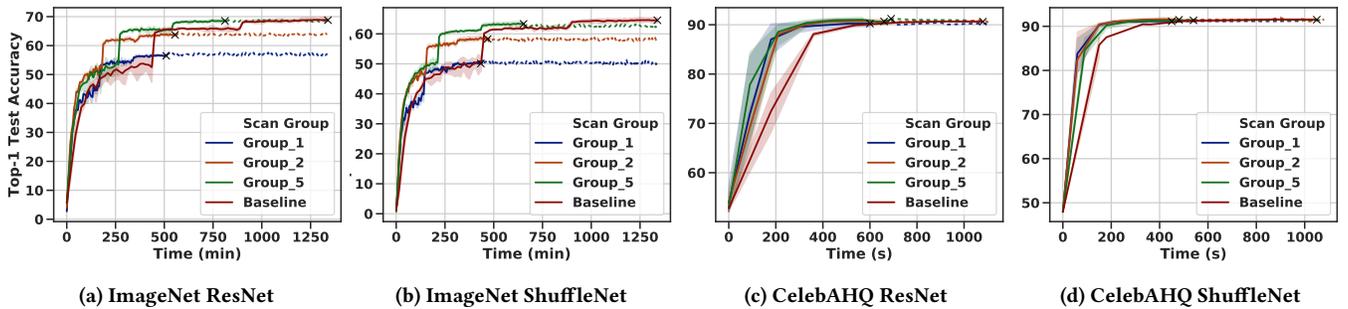

  \centering
  \begin{subfigure}[t]{0.25\textwidth}
    \leftRightCrop{0.015}{imagenet_full_scan_performance_resnet18_orca_acc_time.pdf}{1.03}{0.0}
    \caption{ImageNet ResNet}
  \end{subfigure}%
  \begin{subfigure}[t]{0.25\textwidth}
    \leftRightCrop{0.055}{imagenet_full_scan_performance_shufflenet_orca_acc_time.pdf}{1.03}{0.01}
    \caption{ImageNet ShuffleNet}
  \end{subfigure}%
  \begin{subfigure}[t]{0.25\textwidth}
    \leftRightCrop{0.07}{celeba_scan_performance_resnet18_orca_acc_time.pdf}{1.03}{0.013}
    \caption{CelebAHQ ResNet}
  \end{subfigure}%
  \begin{subfigure}[t]{0.25\textwidth}
    \leftRightCrop{0.07}{celeba_scan_performance_shufflenet_orca_acc_time.pdf}{1.03}{0.01}
    \caption{CelebAHQ ShuffleNet}
  \end{subfigure}%
  \caption{%
    Top-1 test performance (with 95\% confident intervals) using ResNet and ShuffleNet on ImageNet (a,b) and CelebAHQ (c,d). Lower scan groups (corresponding to further compressed data) can provide faster overall training, often without sacrificing accuracy. However, the appropriate level of compression depends on the model, infrastructure, and data---necessitating the ability to easily access data at multiple fidelities, as with PCRs. We explore these factors in Sections~\ref{sec:dynamic_compression}, \ref{sec:quality_analysis}, and \ref{sec:autotuning}.
  }%
  \label{fig:scan_performance_joined_orca_acc_time}%
\end{figure*}

\subsection{Experimental Setup}%
\label{sec:evaluation_setup}
Our evaluation uses the
ImageNet ILSVRC~\citep{imagenet_cvpr09,ILSVRC15},
CelebAHQ~\citep{karras2017progressive},
HAM10000~\citep{tschandl2018ham10000},
and
Stanford Cars~\citep{KrauseStarkDengFei-Fei_3DRR2013}
datasets.
For CelebAHQ, we classify if the celebrity is smiling or not.
A summary of each dataset is given in Table~\ref{fig:PCR_record_info}.
We aimed to select datasets that vary in terms of the image resolution, number of examples, number of classes, and image/scene type.
\textit{All of the datasets are in fact already compressed before progressive
compression is applied, making the presented speedups conservative estimates of the potential benefit of PCRs.} Specifically, the datasets natively use a
JPEG quality level varying from 75\% (CelebAHQ) to 100\% (HAM10000)
(\S\ref{fig:PCR_record_info}).
Experiments use resizing, crop, and horizontal-flip augmentations, as
is standard for ImageNet training~\cite{fixing_train_test,inception}
The sizes of each dataset's scan groups (used in
Table~\ref{fig:PCR_expected_size}) are shown in
Figure~\ref{fig:PCR_Inspector_Sizes}; sizes decrease for lower scan groups.
For examples of images under each scan group, see the supplement.

\begin{figure*}
  \centering
  \begin{subfigure}[t]{0.25\textwidth}
    \includegraphics[width=.99\linewidth]{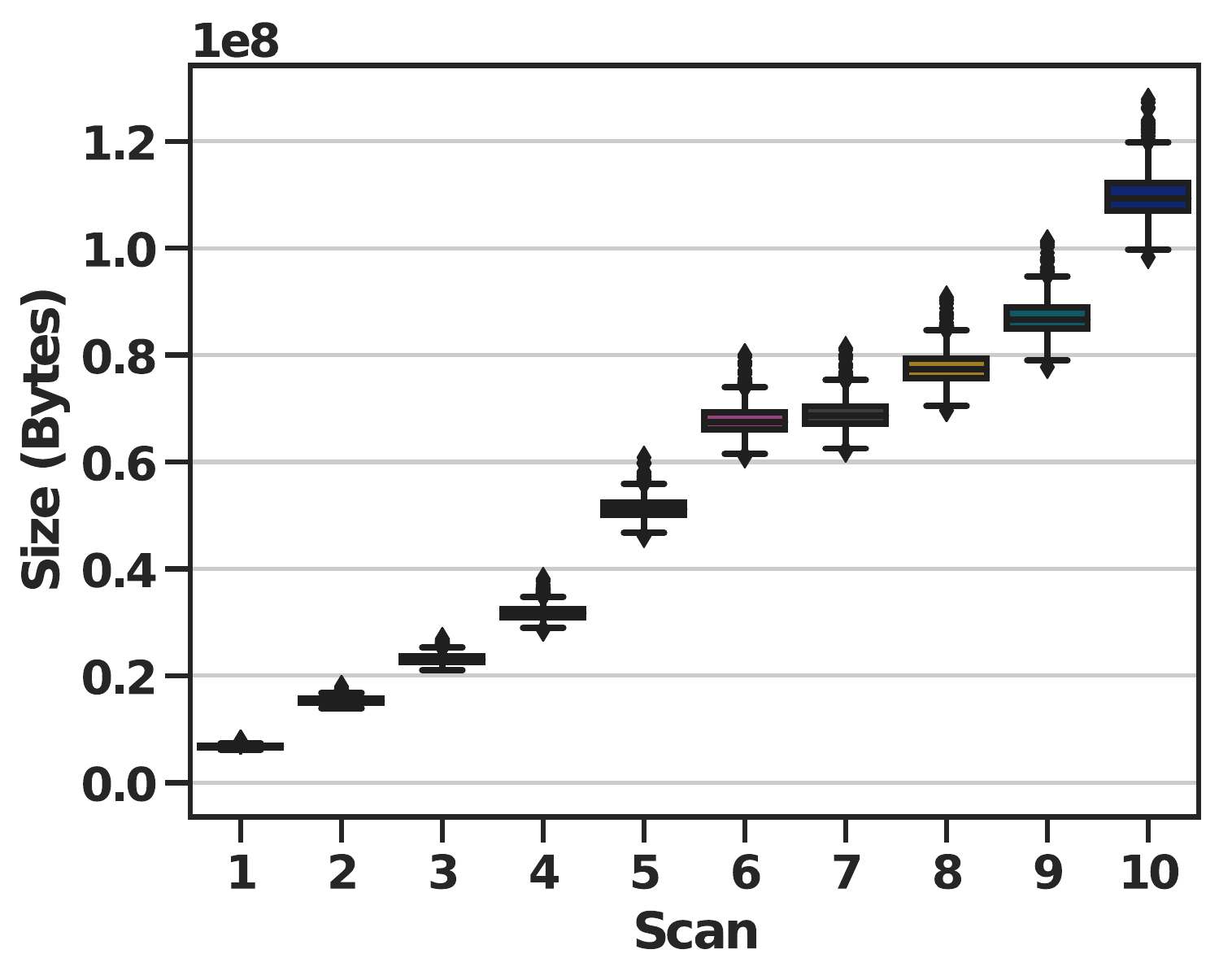}
    \caption{ImageNet}
  \end{subfigure}%
  \begin{subfigure}[t]{0.25\textwidth}
    \includegraphics[width=1.012\linewidth]{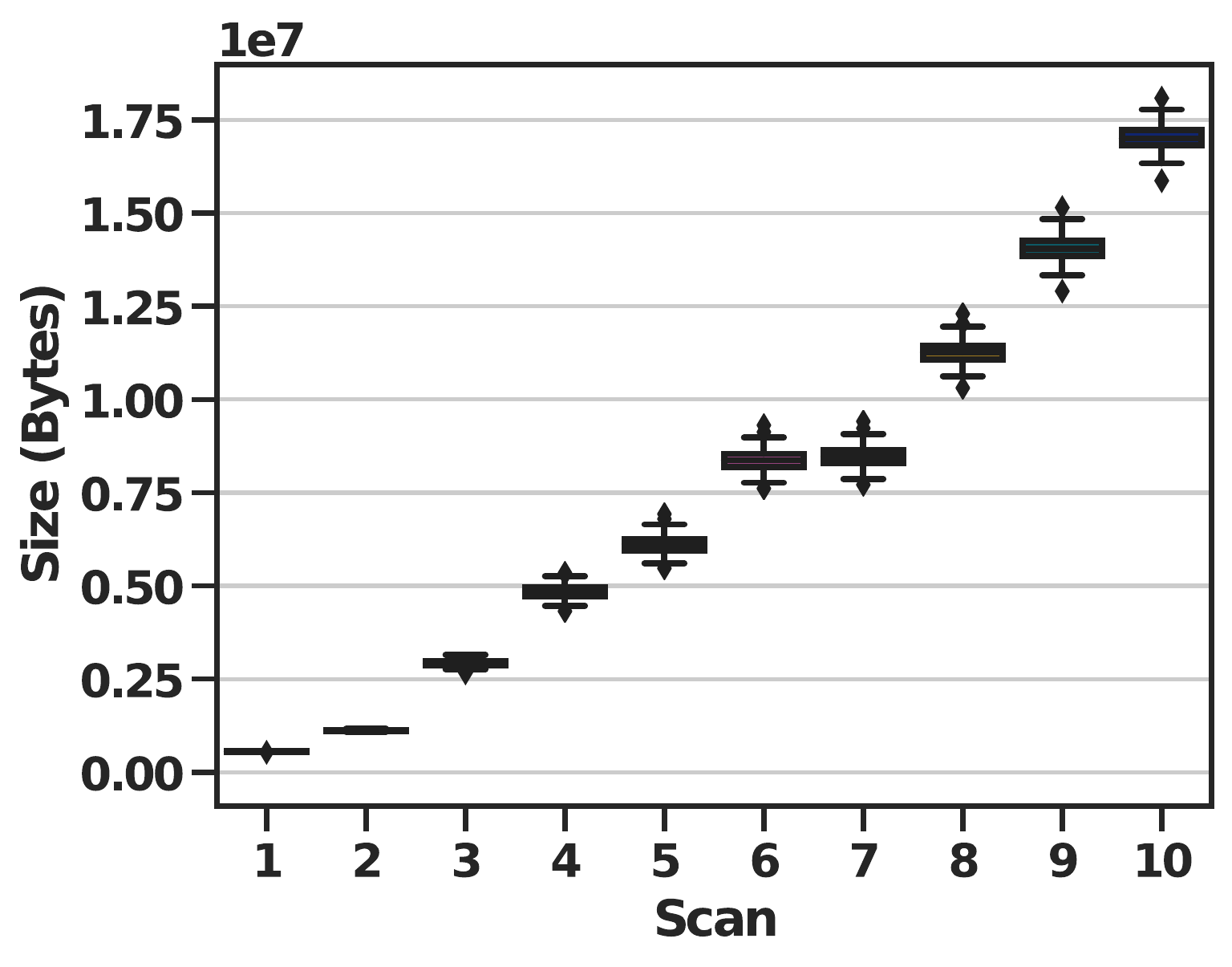}
    \caption{HAM10000}
  \end{subfigure}%
  \begin{subfigure}[t]{0.25\textwidth}
    \includegraphics[width=1.0\linewidth]{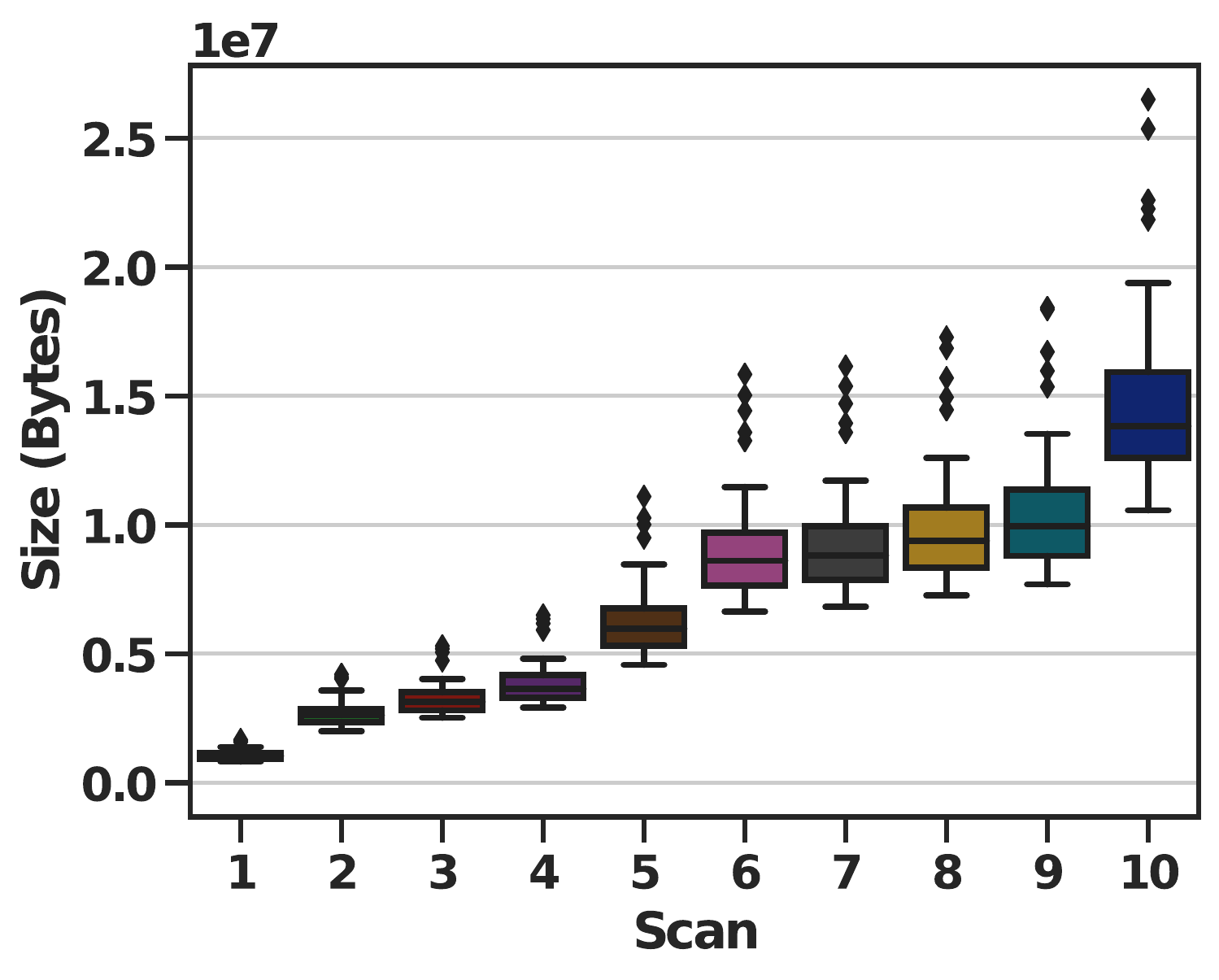}
    \caption{Stanford Cars}
  \end{subfigure}%
  \begin{subfigure}[t]{0.25\textwidth}
    \includegraphics[width=1.0\linewidth]{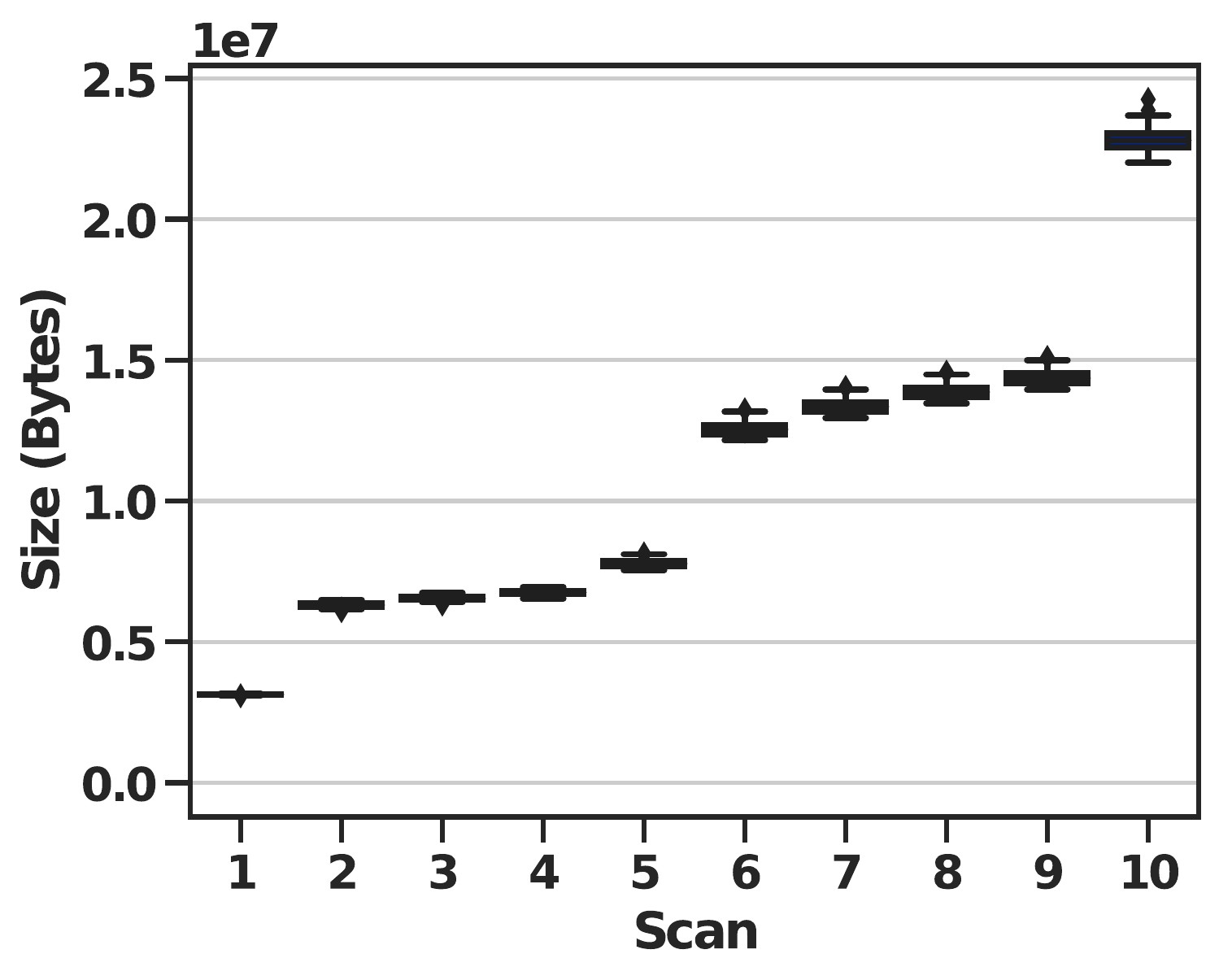}
    \caption{CelebAHQ}
  \end{subfigure}%
  \caption{%
    The size in bytes of various levels of scans read.
    Scan group 0 (not shown) contains only labels and is typically ${\sim}100$
    bytes.
    Each scan adds roughly a constant amount of data (i.e., linear scaling),
    although certain scans
    add considerably more than others (i.e., sizes sometimes cluster)
    due to techniques like chroma subsampling.
    Using all 10 scans can require over an order of magnitude more bandwidth
    than 1--2 scans.
    Interquartile ranges are shown.
  }%
  \label{fig:PCR_Inspector_Sizes}%
\end{figure*}

\paragraph{Training Regime.}
We evaluate two loader implementations of PCRs, comparing PCR scans against
themselves and then comparing PCRs against strong baselines (TFRecords).
For both setups, we use PyTorch~\citep{NIPS2019_9015} for all model training;
the two loaders are using DALI~\citep{DALI} Loader, which was used for
initial prototyping, and \texttt{tf.data}~\citep{TFData,murray2021tfdata}, which
we have since made native operator extensions to for maximum performance.
In our experiments we use pretrained ImageNet weights for HAM10000 and Cars due to the limited amount of
training data. %
We use standard ImageNet training, starting the learning rate
at $0.1$ with gradual
warmup~\citep{goyal2017accurate}, and dropping it on epochs 30 and 60 by $10\times$.
After augmentations, all inputs are of the same size; thus,
a model's update rates are the same across datasets.
The pretrained experiments (HAM10000 and Cars) start at a learning rate of
$0.01$ to avoid changing the initialization too aggressively.
We use mixed-precision training~\citep{micikevicius2017mixed,apex} for the
DALI runs.
We use ResNet-18~\citep{he2016deep}
and ShuffleNetv2~\citep{ma2018shufflenet}
architectures
for our experiments with a batch size of 128 per
worker.
We run each experiment at least 3 times to obtain confidence intervals. %
We sample test accuracy every 15 epochs for non-ImageNet datasets.
Our evaluation
focuses on the differences obtained by reading various amounts of scan groups.
For the DALI runs, we consider reading all the data (up to scan group 10) to be the baseline,
as the baseline formats will perform similarly under I/O bounds
(Figure~\ref{fig:predictedThroughput})---we later provide a direct comparison with
baseline TFRecords when using \texttt{tf.data}.
Our results are conservative as we are already utilizing pre-compressed data and
we include evaluation times in our results.
For the purpose of evaluation,
all scan groups within a dataset were run for the same number of
epochs (90 for ImageNet, 150 for HAM10k, 250 for Cars, and 90 for CelebAHQ).
We also provide annotated (dashed) lines for subsequent epochs.

\paragraph{System Setup.}
Our experiments were run on a 16 node Ceph~\citep{weil2006ceph}
cluster with  NVIDIA TitanX GPUs and 4TB 7200RPM Seagate ST4000NM0023 HDD.
We use six Ceph nodes: five dedicated Object Storage Device (OSD) nodes, and one Metadata Server (MDS).
The remaining 10 nodes are machine learning training workers. %
This 2:1 ratio between compute and storage nodes results in 400+ MiB/s of peak
storage bandwidth; we have also tested a heavily I/O bound 10:1 ratio and found the trends
comparable.
Ceph is a common production-grade open-source filesystem, but
our results would generalize to any setup with a mismatch
between compute power and data bandwidth (either storage or network).
In addition to microbenchmarks, we evaluate the generalization of PCRs to SSD
setups in Section~\ref{sec:discussion}.
Since state of the art compute is
$\mathbf{150\times}$ \textbf{faster} than our own
setup on a more expensive model (ResNet-50)~\citep{ying2018image},
we focus on models which are
fast to train (while still being modern; AlexNet~\cite{krizhevsky2012imagenet}
is potentially faster) while limiting read parallelism.
The DALI setup uses O\_DIRECT to ignore caching effects and highlight bandwidth
usage.
To reflect what PCRs may look like in realistic, heavy-load situations, we provide 20
node experiments in Figure~\ref{fig:scan_performance_joined_orca_acc_time_heavy}
with the same storage system and double the workers, which allows speedups to be
seen with full read parallelism per node (over 700MiB/s of peak bandwidth).
This setup uses our \texttt{tf.data} loader
implementation to fairly compare against TFRecords and
File-per-Image formats, showing its effectiveness.
We use the same setup in
Figure~\ref{fig:adaptive_shufflenet}.

\begin{figure}
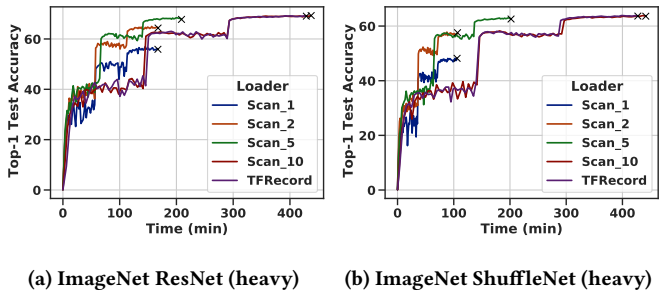

  \centering
  \begin{subfigure}[t]{0.25\textwidth}
    \leftRightCrop{0.00}{time_vs_accuracy_n20_imagenet_resnet18.pdf}{1.03}{0.01}
    \caption{ImageNet ResNet (heavy)}
  \end{subfigure}%
  \begin{subfigure}[t]{0.25\textwidth}
    \leftRightCrop{0.00}{time_vs_accuracy_n20_imagenet_shufflenet.pdf}{1.03}{0.01}
    \caption{ImageNet ShuffleNet (heavy)}
  \end{subfigure}%
  \caption{%
    Top-1 test performance on ImageNet with ResNet and ShuffleNet, using double
    the compute
    (20 workers with same configuration).
    Doubling the compute forces bottlenecks to appear by approaching hardware
    limits of aggregate disk throughput.
    We run this experiment once and terminate at epoch 90, showing a $2\times$
    speedup for scan 5.
  }%
  \label{fig:scan_performance_joined_orca_acc_time_heavy}%
\end{figure}

\begin{figure*}[t!]
\centering
\begin{minipage}{.5\textwidth}
  \centering
\begin{subfigure}[t]{0.5\textwidth}
    \leftRightCrop{0.015}{ham10000_scan_performance_resnet18_orca_acc_time.pdf}{0.97}{0.015}
    \caption{HAM10000 ResNet}
  \end{subfigure}%
  \begin{subfigure}[t]{0.5\textwidth}
    \leftRightCrop{0.055}{ham10000_scan_performance_shufflenet_orca_acc_time.pdf}{1.01}{0.013}
    \caption{HAM10000 ShuffleNet}
  \end{subfigure}%
  \caption{Test accuracy on HAM10000.
  While ResNet is robust to additional compression, ShuffleNet requires
  higher fidelity data (at least scan group 5) for higher accuracy.
    Time is relative to first epoch.
    95\% confidence intervals are shown.
  }%
  \label{fig:scan_performance_ham10000_models}%
\end{minipage}%
\hfill
\begin{minipage}{.48\textwidth}
  \centering
\begin{subfigure}[t]{0.49\linewidth}
    \leftRightCrop{0.015}{cars_scan_performance_resnet18_orca_acc_time.pdf}{1.02}{0.015}
    \caption{Original Multiclass}
  \end{subfigure}%
  \begin{subfigure}[t]{0.49\linewidth}
    \leftRightCrop{0.055}{coarse_binary_cars_scan_performance_resnet18_orca_acc_time.pdf}{1.045}{0.015}
    \caption{Binary Is-Corvette}
  \end{subfigure}%
  \caption{%
    Test accuracy with ResNet-18 
    on the Stanford Cars dataset and a binary variant. The gap between scan
    groups closes as the task is simplified.
    Time in x-axis is relative to first epoch.
    95\% confidence intervals are shown.
  }%
  \label{fig:coarse_make_cars_performance_resnet18_orca_acc_time_resnet_short}%
\end{minipage}
\end{figure*}

\subsection{Reducing Time to Accuracy via Compression}%
\label{sec:results}
\vspace{5pt}
\observation{\textit{Training time can be reduced by up to $2\times$ using data compression. PCRs capitalize on this by dynamically reducing training data size, all without adding space overhead.}}
\vspace{5pt}

We begin our empirical study by exploring the effect of data compression on training time and training loss/test accuracy. We provide time-to-accuracy results for ResNet-18 and ShuffleNetv2 training on ImageNet and
CelebAHQ (Figure~\ref{fig:scan_performance_joined_orca_acc_time}), HAM (Figure~\ref{fig:scan_performance_ham10000_models}), and Cars (Figure~\ref{fig:coarse_make_cars_performance_resnet18_orca_acc_time_resnet_short}).
Across these experiments, we find that PCRs can provide a
$2\times$ boost on average in time-to-accuracy compared to the baseline, by dynamically providing data at a higher level of compression.
We make several observations about these results. First, we note that we tend to
see larger speedups for smaller, faster models (e.g. ShuffleNet), than for bigger models (e.g., ResNet). %
Indeed, the current speedups may in fact become significantly larger with faster
compute~\citep[e.g.,][]{ying2018image,kurth2018exascale,kumar2019scale}.
Such a trend is visible in the heavy ImageNet experiments featured in
Figure~\ref{fig:scan_performance_joined_orca_acc_time_heavy}---both
TFRecords
and scan 10 are about the same size, and therefore finish simultaneously, but
scan 1 and 2 finish nearly an hour faster for ShuffleNet.
For this same setup, we observe Files-per-Image take over 2 hours per
epoch due to a lack of sequential reads---$25\times$ slower than
TFRecords, which scan 5 improves by $2\times$;
therefore, we conclude that progressive compression and record formats are both
necessary for performance.

Second, while time-to-accuracy is reduced as we move to lower scan
groups, there is a \textit{statistical efficiency cost}.
Namely, models trained on scans 1 and 2 may not always converge to an
acceptable solution, as shown for ImageNet (Figure~\ref{fig:scan_performance_joined_orca_acc_time}).
Certain tasks like CelebAHQ, however, can tolerate this fidelity loss, either because they consist of less compressed images or because the
training task is less dependent on high-frequency image features.
These results suggest that, while compression saves
bandwidth and offers a potential speedup, the ideal amount of
compression depends on two factors: (i) the speed of the model and the underlying compute infrastructure, and (ii) the structure of the task and the images in the dataset. We explore these factors in more detail below.

\subsection{Task Tolerance to Data Fidelity}%
\label{sec:dynamic_compression}%

\vspace{5pt}
\observation{\textit{Different models can tolerate different fidelities.}}
\vspace{5pt}

\begin{figure}
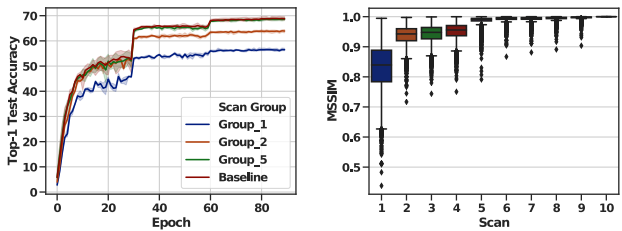

  \centering
  \begin{subfigure}[t]{0.24\textwidth}
  \leftRightCrop{0.015}{imagenet_full_scan_performance_resnet18_orca_bold.pdf}{0.93}{0.01}
  \caption{Epoch Accuracy}
  \end{subfigure}%
  \begin{subfigure}[t]{0.24\textwidth}
  \leftRightCrop{0.015}{PCR_Conversion_MSSIM_imagenet_bold.pdf}{0.94}{0.01}
  \caption{MSSIM}
  \end{subfigure}
  \vspace{-5pt}
  \caption{Left: Top-1 test performance vs.\ epoch on ResNet-18/ImageNet; other
  models/datasets are similar.
  Using lower quality scans can only degrade performance; it does not act as a
  beneficial data augmentation.
  Right: Corresponding image quality degradation according to MSSIM\@.}%
  \label{fig:resnet_imagenet_epoch_mssim}
\end{figure}

Given a fixed dataset, we show that there is variation in the data fidelity/compression level that different models can tolerate for training. This motivates an important use-case of PCRs, as the format allows data to be stored \textit{once} but then accessed at multiple compression levels while models are tuned or various models are applied to the problem at hand.  %
In Figure~\ref{fig:scan_performance_ham10000_models}, both ResNet and ShuffleNet are trained with the HAM10000 dataset.
While ResNet consistently tolerates low fidelity images, ShuffleNet training tends to degrade with low fidelity data.
ShuffleNet reaches its best accuracy at scan 5, but our
other results suggest that lowering fidelity results in lower
accuracy for the same epoch in nearly all cases (Figure~\ref{fig:resnet_imagenet_epoch_mssim}).
This suggests that different models will experience different speedups for similar accuracy levels, depending on their sensitivity to fine-grained features unavailable in low fidelity data.\vspace{.5em}

\observation{\textit{Different tasks,\ e.g., multi-class classification vs.\ binary classification, can tolerate different levels of data fidelity.
The same PCR dataset can service these different tasks.}}
\vspace{5pt}

The difficulty of a task, or training objective of interest, also affects the amount of compression
that can be tolerated.
Harder tasks, e.g., multi-class classification with a large number of classes, require higher fidelity data.
We  validate this empirically in 
Figure~\ref{fig:coarse_make_cars_performance_resnet18_orca_acc_time_resnet_short}
(and additional evidence is provided in the supplement).
This experiment reduces the number of classes for the classification task, demonstrating that lower scan groups can be used for easier tasks.
The full range of classes investigated includes:
\textit{Baseline} (i.e., Car Make, Model, Year),
\textit{Make-Only} (i.e., car Make only),
and \textit{Is-Corvette}, a binary classification task of Corvette detection.
Compared to the original task, the coarser tasks reduce the
gap between scan groups, decreasing the gap from baseline to the binary case. %
These results suggest that the optimal image encoding can be dependent on
the exact labeling or task complexity.
Thus, while static approaches may need one encoding per task, a fixed PCR
encoding can support multiple tasks at optimal fidelity by simply changing the
scan group depending on how the labels (metadata) are remapped.

\vspace{1em}
\subsection{Compression Level Estimation}%
\label{sec:quality_analysis}%

\vspace{5pt}
\observation{\textit{MSSIM image similarity is a reliable estimator of the accuracy loss between scan groups, and can be used to determine appropriate levels of compression for training with PCRs.
}}
\vspace{5pt}

\begin{figure}
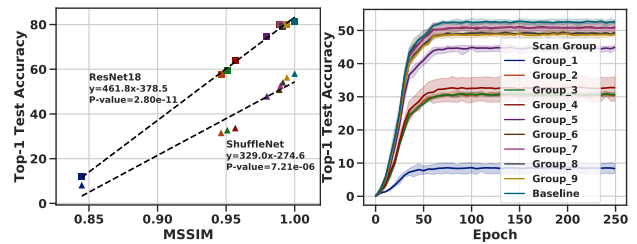

  \centering%
  \begin{subfigure}[t]{0.23\textwidth}%
    \leftRightCrop{0.015}{PCR_Conversion_MSSIM_vs_acc_cars.pdf}{1.03}{0.0}%
    \caption{MSSIM Regression}%
  \end{subfigure}%
  \begin{subfigure}[t]{0.23\textwidth}%
    \leftRightCrop{0.015}{cars_scan_performance_shufflenet_orca_sweep_vanilla.pdf}{1.03}{0.0}
    \caption{Clustered Convergence}%
  \end{subfigure}%
  \vspace{-5pt}
  \caption{%
    MSSIM vs.\ accuracy for the Cars dataset with ResNet18 and Shufflenet.
    We obtain similar results for other datasets.
    \textbf{Left:} There is a linear relationship between MSSIM and the final test accuracy.
    \textbf{Right:}
    Scan groups (ShuffleNet) cluster by MSSIM and accuracy.
  }
  \label{fig:MSSIM_vs_acc}%
\end{figure}

To better explain the effectiveness of compression, we compare how various scans approximate the reference image through MSSIM~\citep{wang2003multiscale}, a standard measure of image similarity.
We find a correlation between MSSIM and final
test accuracy, especially when comparing scan groups \textit{within} a task.
Our preliminary tests show that scan groups with similar MSSIM
achieve similar accuracy (Figure~\ref{fig:MSSIM_vs_acc}), which is why only scan groups 1, 2, 5, and the baseline are
shown. %
Due to the way progressive JPEG is coded by default, groups tend to cluster, e.g., scans 2, 3, and 4 are usually similar, while 5 introduces a difference.
Such ``banding'' or clustering
is seen in the accuracy trends; the major jumps correlate with the appearance
of Y (luminance) AC coefficients in the JPEG encoding.
Scan groups of 5 or higher have an MSSIM of $95\%+$, which
is likely why they consistently perform well.
MSSIM can therefore be used as a diagnostic for choosing scans, %
although we acknowledge that
changes in perception are hard to predict for large deviations (MSSIM $< 95\%$).
For some datasets, linear regression on MSSIM recovers final test accuracy
even with different models (Figure~\ref{fig:MSSIM_vs_acc}) or augmentations.
Test accuracy per epoch degrades with worse image
fidelity across our experiments
(Figure~\ref{fig:resnet_imagenet_epoch_mssim}), further highlighting that
time-to-accuracy speedups are caused primarily by bandwidth reduction (rather
than e.g., a form of regularization induced by lower scans).

\subsection{Autotuning Compression Level}
\label{sec:autotuning}%
\vspace{5pt}

\observation{\textit{It is possible to automatically determine an appropriate level of compression at runtime by dynamically accessing various data qualities via PCRs.}}
\vspace{5pt}

\begin{figure}
  \centering
  \begin{subfigure}[t]{0.25\textwidth}
    \includegraphics[width=1.0\linewidth]{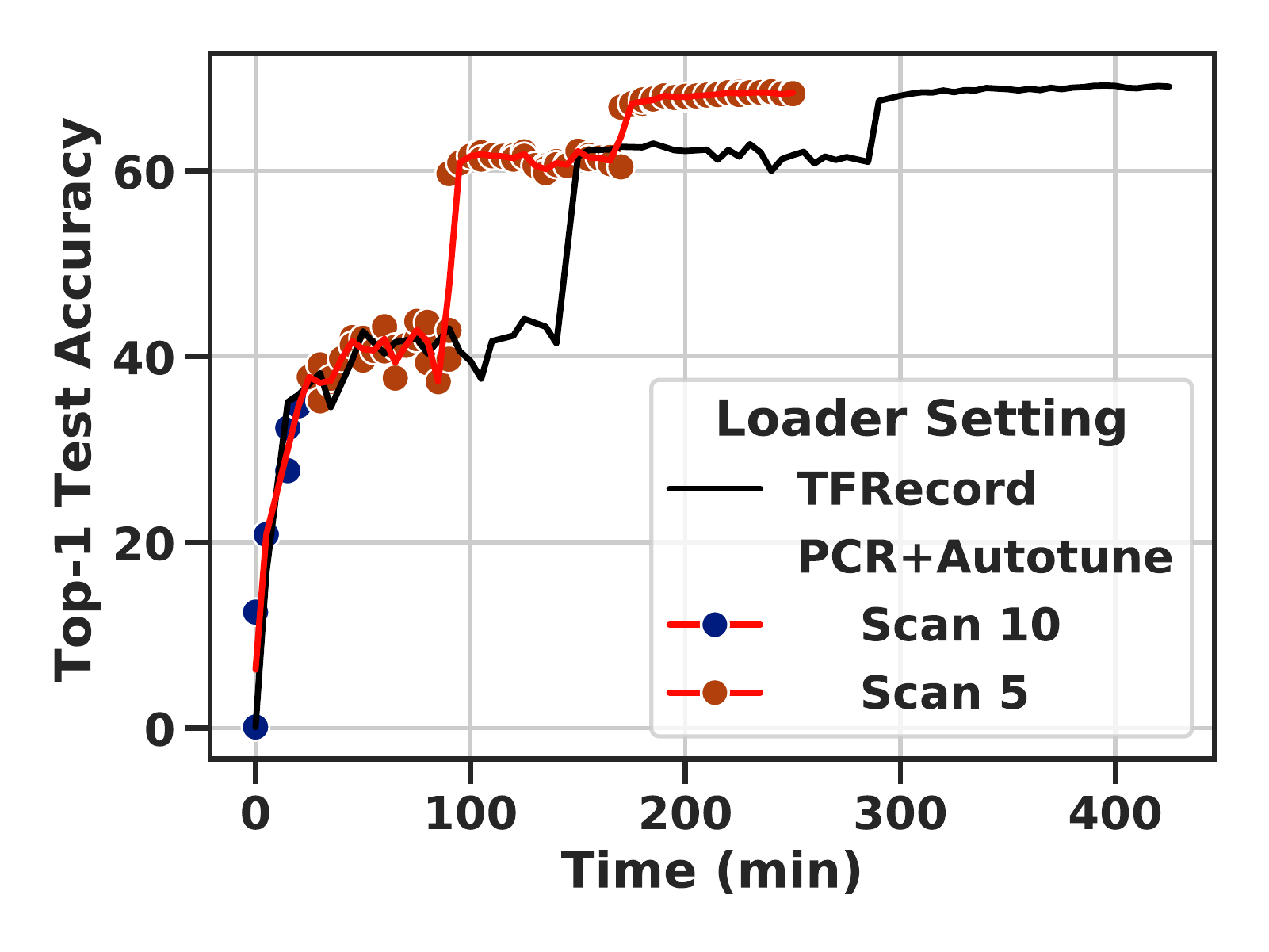}
    \caption{ResNet}
  \end{subfigure}%
  \begin{subfigure}[t]{0.25\textwidth}
    \includegraphics[width=1.0\linewidth]{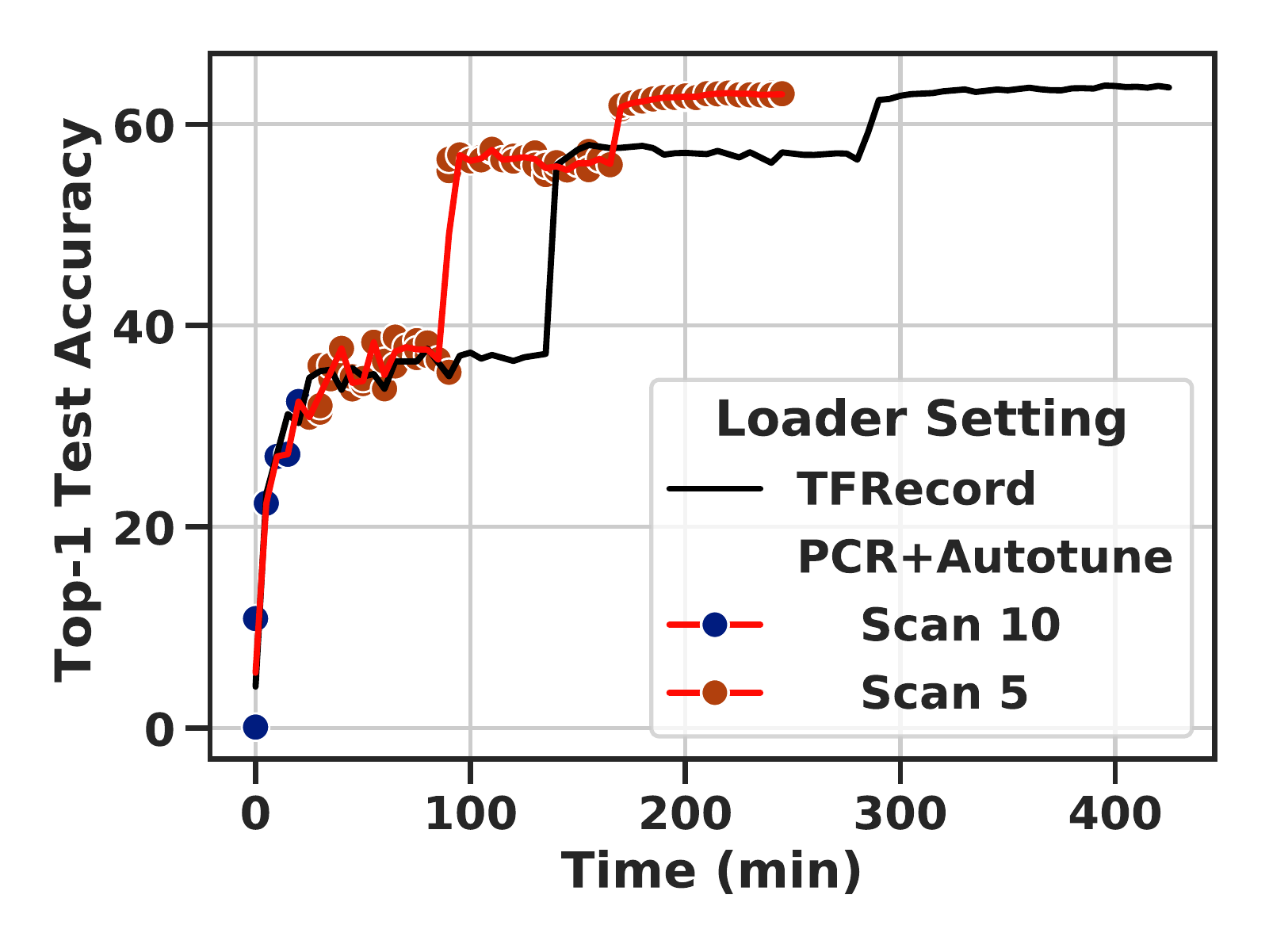}
  \caption{ShuffleNet}
  \end{subfigure}%
  \vspace{-5pt}
  \caption{%
    Adaptive tuning on ImageNet for 90 epochs compared to TFRecord training,
    which is comparable to Scan 10 training.
    For adaptive tuning, epochs are marked with scatter points.
    Training is fastest after epoch 5, when autotuning search is enabled.
    Changing the threshold from 0.8 (shown) to 0.9 results in the last few epochs
    switching to scan 10.
  }%
  \label{fig:adaptive_shufflenet}%
\end{figure}

In cases where training resolution is not structured into
learning~\citep{karras2017progressive}
or image fidelity heuristics prove
too costly to tune (\S\ref{sec:quality_analysis}), automatic tuning of the
scan hyperparameter may be desirable.
One way of doing this is by tuning with a measurement of the \textit{bias} of the gradient given a lower fidelity
image (\S\ref{sec:autotuning_quality}), intuitively measuring how the
model ``sees'' the image similarity.
As we showed in Section~\ref{sec:background},
a similarity threshold of 0.8 or higher is sufficient to avoid bad scans
throughout training---Figure~\ref{fig:PCR_similarity} clusters low-quality scans
below that point.
We apply this threshold and the rest of the procedure described in
Section~\ref{sec:autotuning_quality} to the ImageNet dataset and observe that
such autotuning repeatedly matches accuracy while almost being as fast as a pure
scan 5 approach.
The main slowdown is due to starting at scan 10 for the first 5 epochs of
training, blending the latencies of scan 10 with those of scan 5.
We note that, unlike MSSIM, which is statically concentrated above 95\% for good
quality scans, the gradient similarity changes over training.
For example, ResNet18 has a similarity of 0.88 by epoch 85, whereas it had a
similarity of 0.95 at epoch 5.
We observe that using a higher threshold, approaching 0.9,
forces scan 10 to be used for the last few epochs of training when gradient
similarity is lower, retaining
similar accuracy at slightly longer training times.

\vspace{1em}
\subsection{Image Loading Rates}%
\label{sec:loading_rates}%

\vspace{5pt}
\observation{\textit{Image loading rates are directly linked to the compression ratio, i.e., a compression ratio of $2\times$ results in a $2\times$ speedup.}}
\vspace{5pt}

\begin{figure}
  \centering
  \begin{subfigure}[t]{0.23\textwidth}
    \leftRightCrop{0.015}{PCR_rates_plain_resnet.pdf}{1.0}{0.0}
    \caption{ResNet}
  \end{subfigure}%
  \begin{subfigure}[t]{0.25\textwidth}
    \leftRightCrop{0.05}{PCR_rates_shufflenet.pdf}{1.1}{0.0}
    \caption{ShuffleNet}
  \end{subfigure}%
  \caption{%
    Training rates for ResNet and ShuffleNet.
    More scans reduce the rate of images/second.
    From RAM, they can process 4200/7000 images/second, respectively.}%
  \label{fig:PCR_Image_Rates}%
\end{figure}

\begin{figure*}[th]
  \centering
  \begin{subfigure}[t]{0.25\textwidth}
    \includegraphics[width=.99\linewidth]{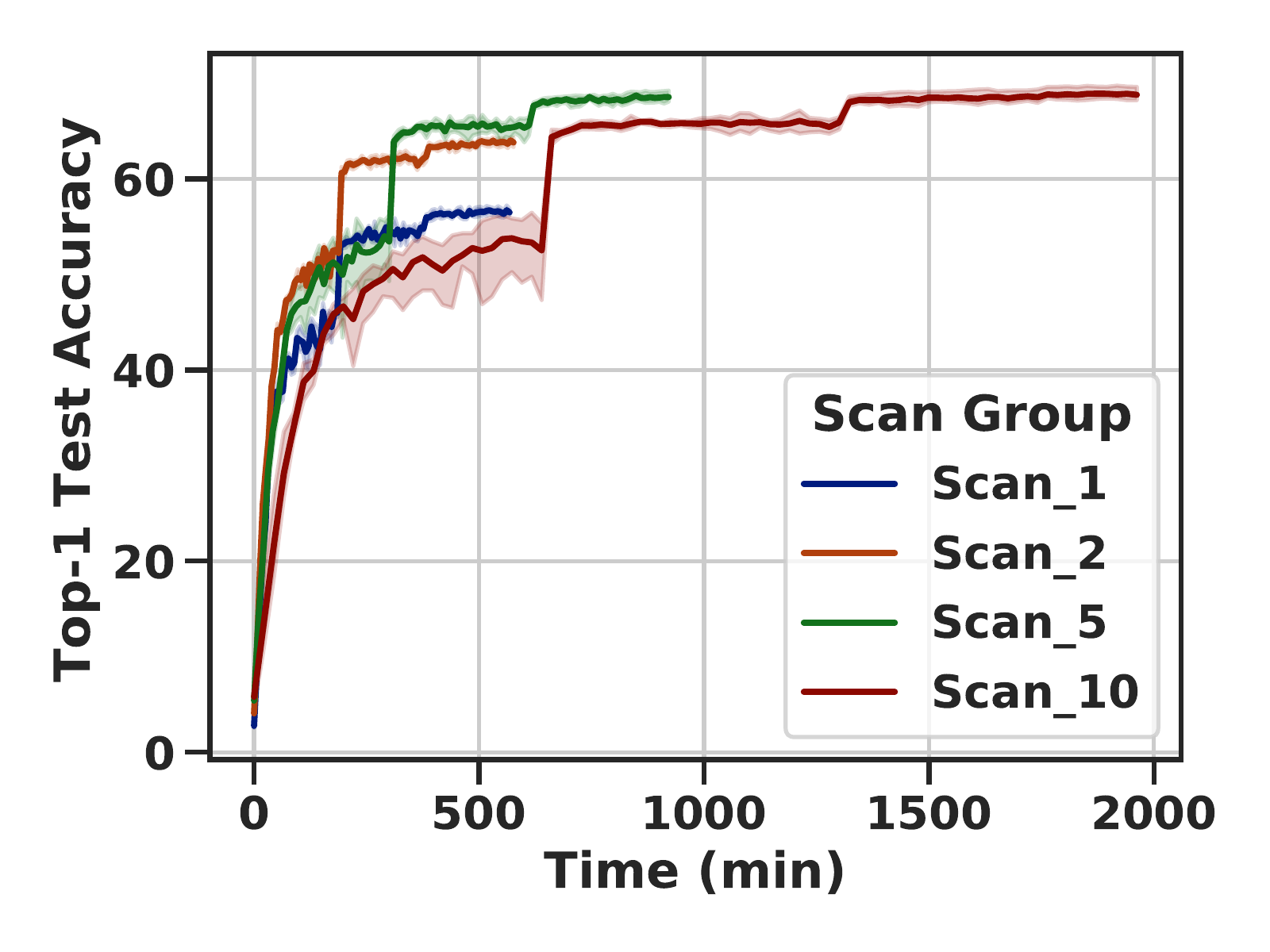}
    \caption{ResNet18 100MiB/s}
  \end{subfigure}%
  \begin{subfigure}[t]{0.25\textwidth}
    \includegraphics[width=.99\linewidth]{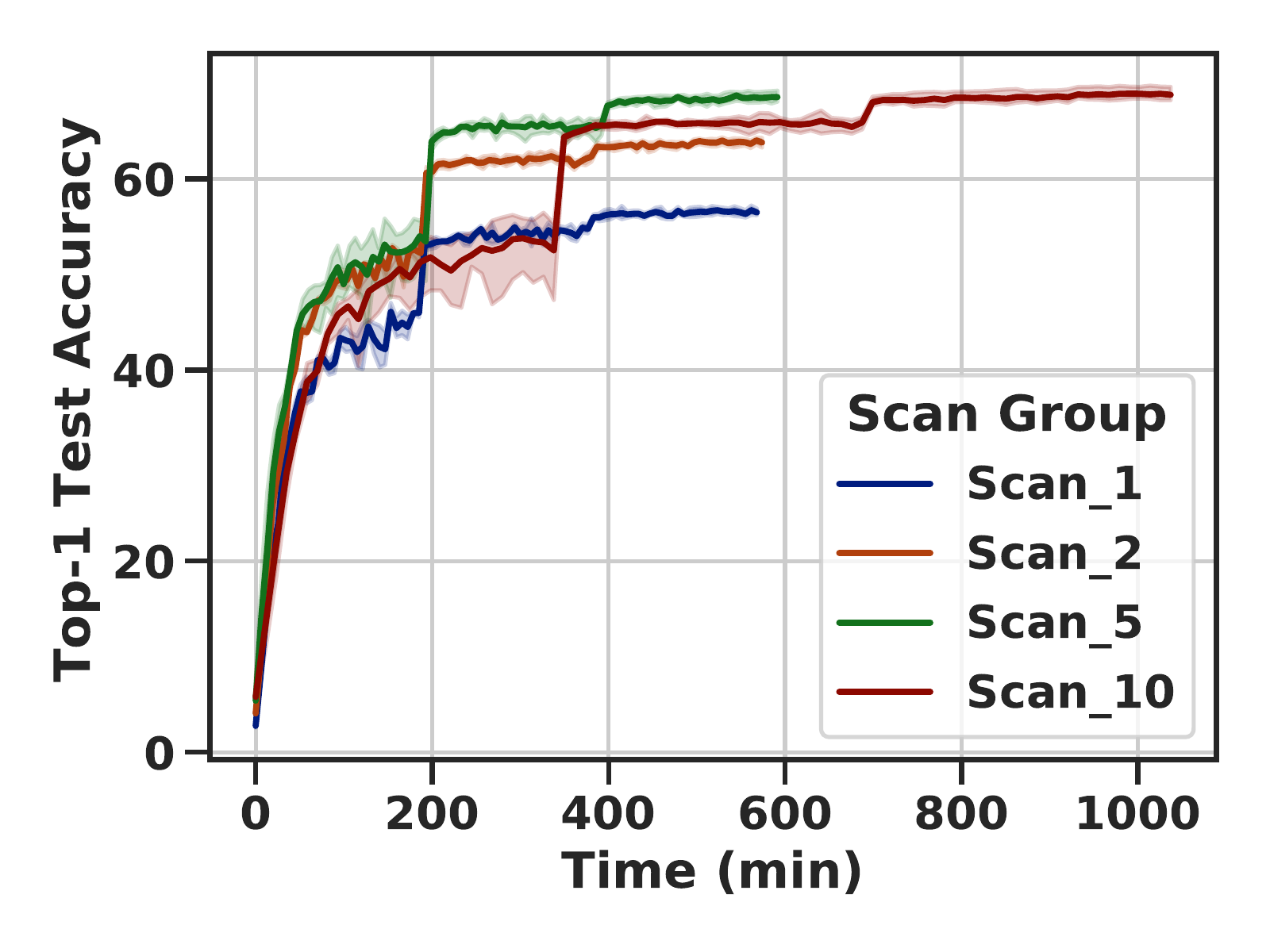}
    \caption{ResNet18 200MiB/s}
  \end{subfigure}%
  \begin{subfigure}[t]{0.25\textwidth}
    \includegraphics[width=.99\linewidth]{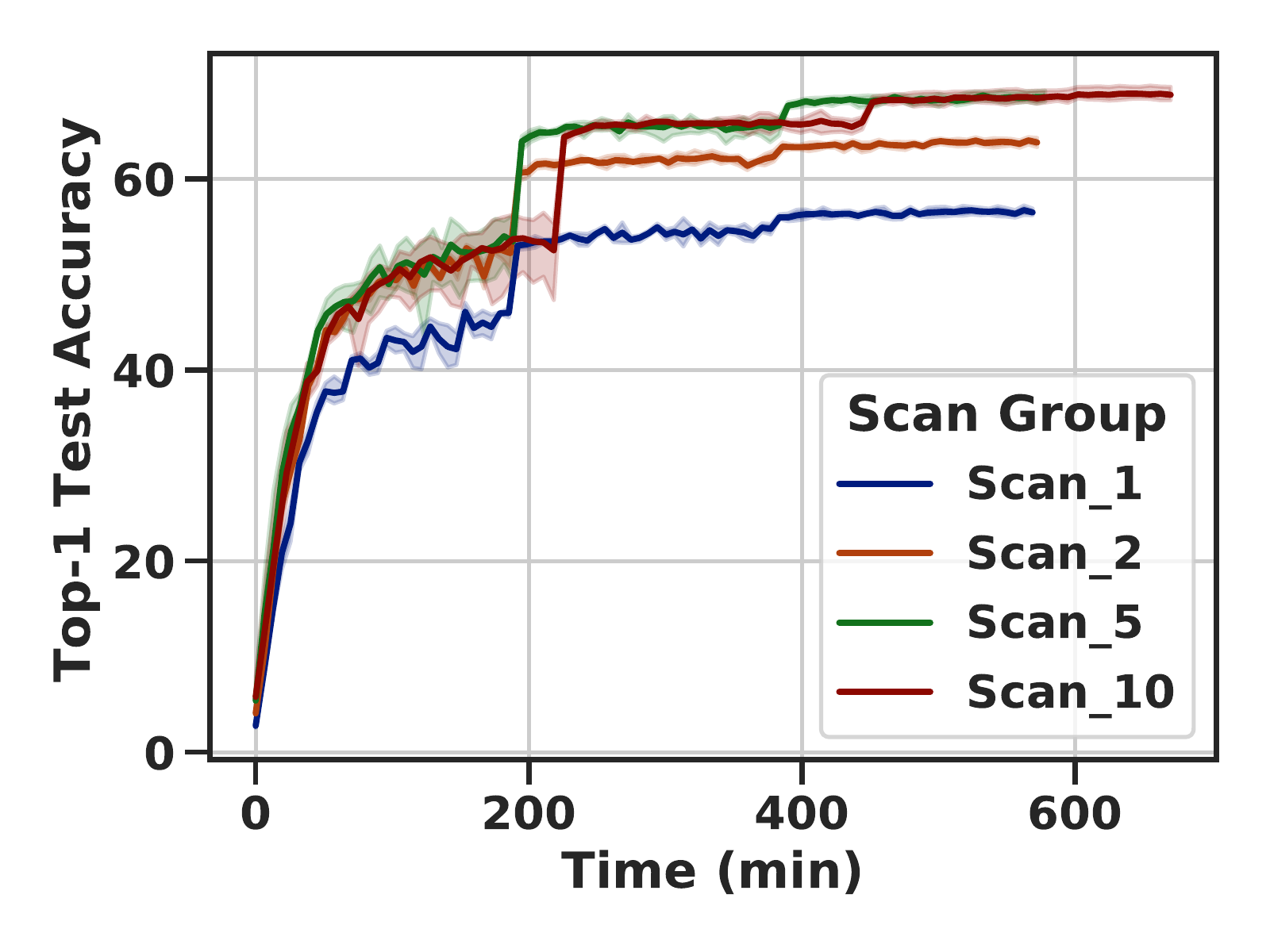}
    \caption{ResNet18 300MiB/s}
  \end{subfigure}%
  \begin{subfigure}[t]{0.25\textwidth}
    \includegraphics[width=.99\linewidth]{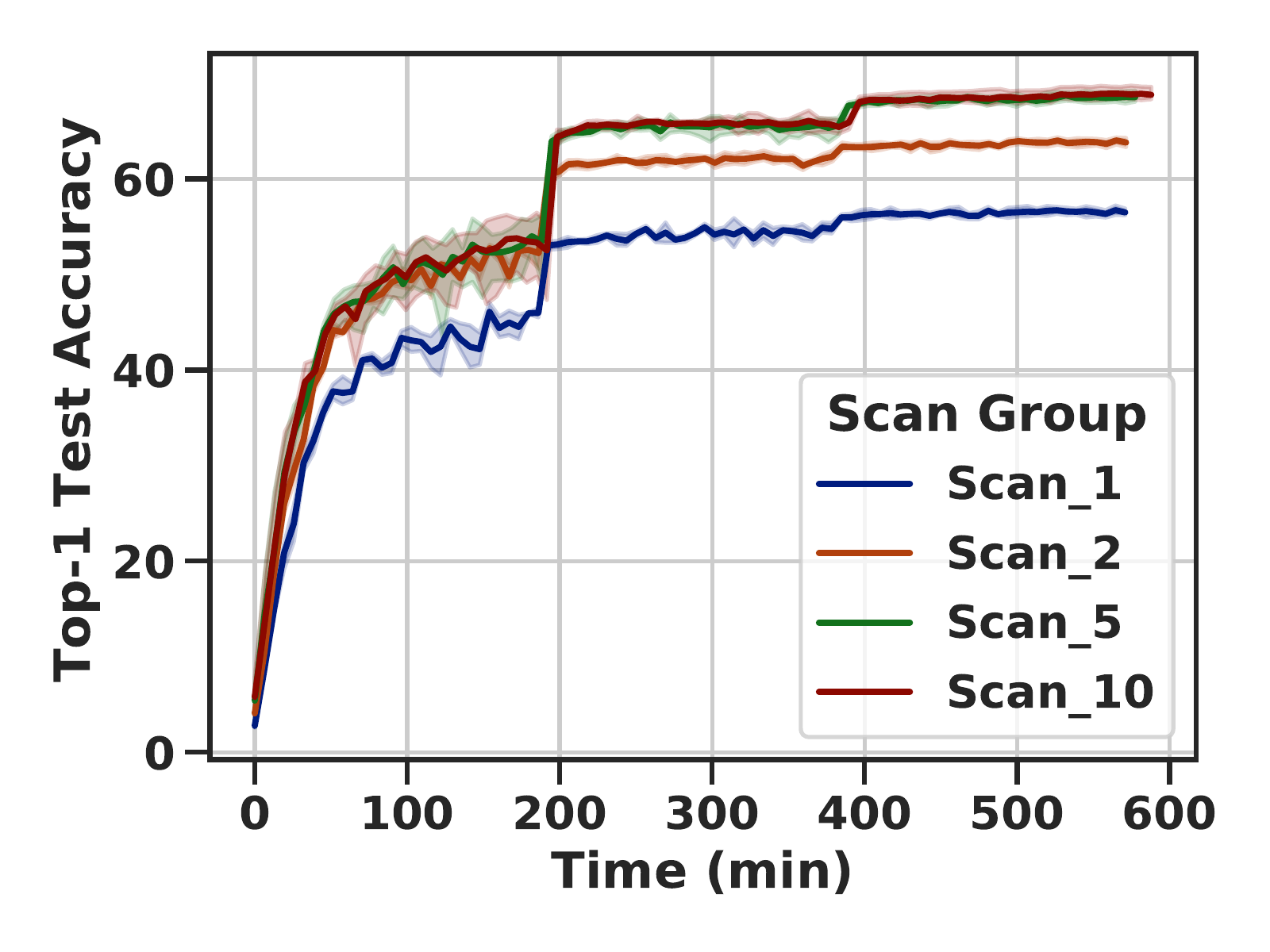}
    \caption{ResNet18 500MiB/s}
  \end{subfigure}%
  \newline
  \begin{subfigure}[t]{0.25\textwidth}
    \includegraphics[width=.99\linewidth]{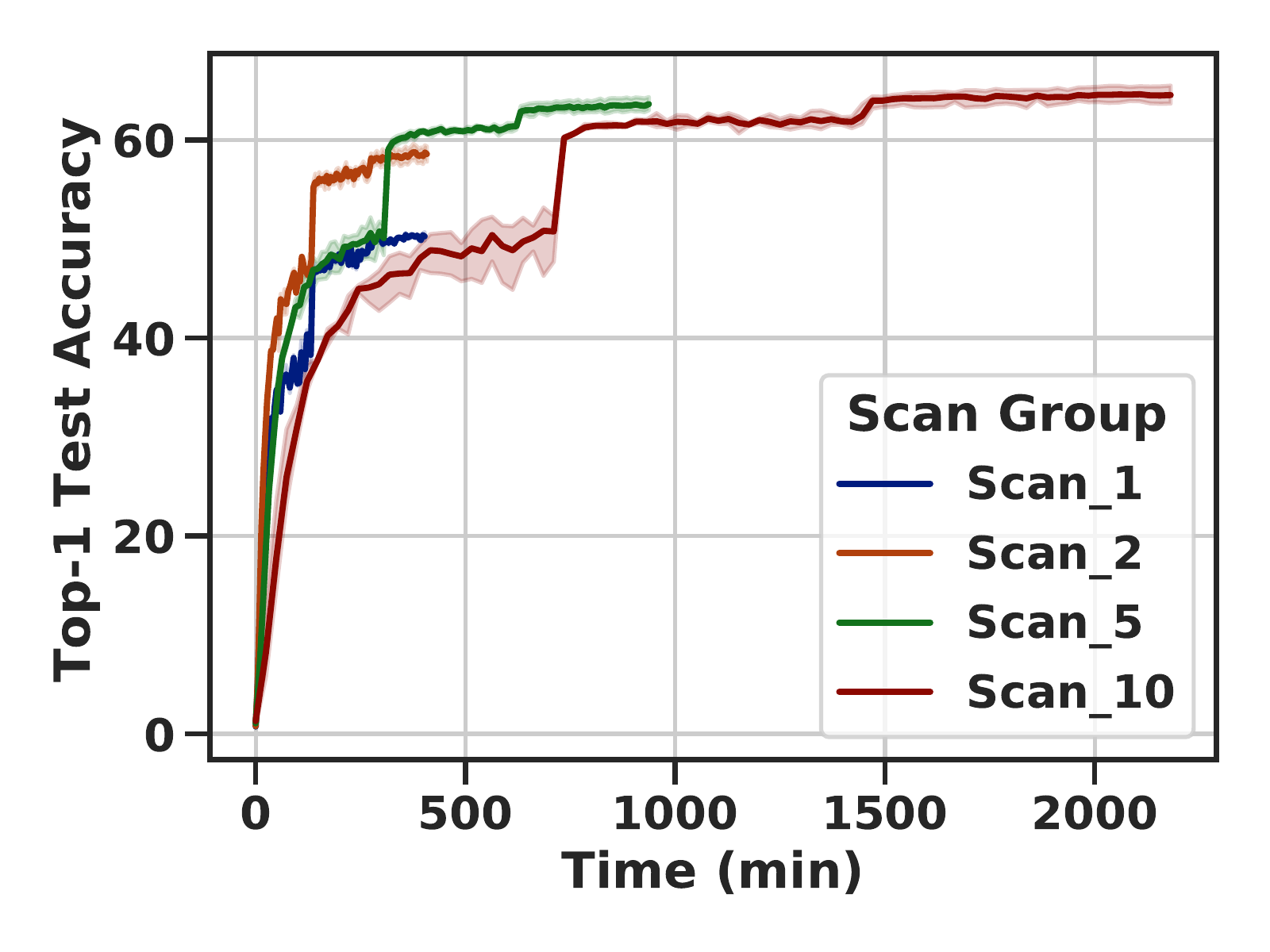}
    \caption{ShuffleNet 100MiB/s}
  \end{subfigure}%
  \begin{subfigure}[t]{0.25\textwidth}
    \includegraphics[width=.99\linewidth]{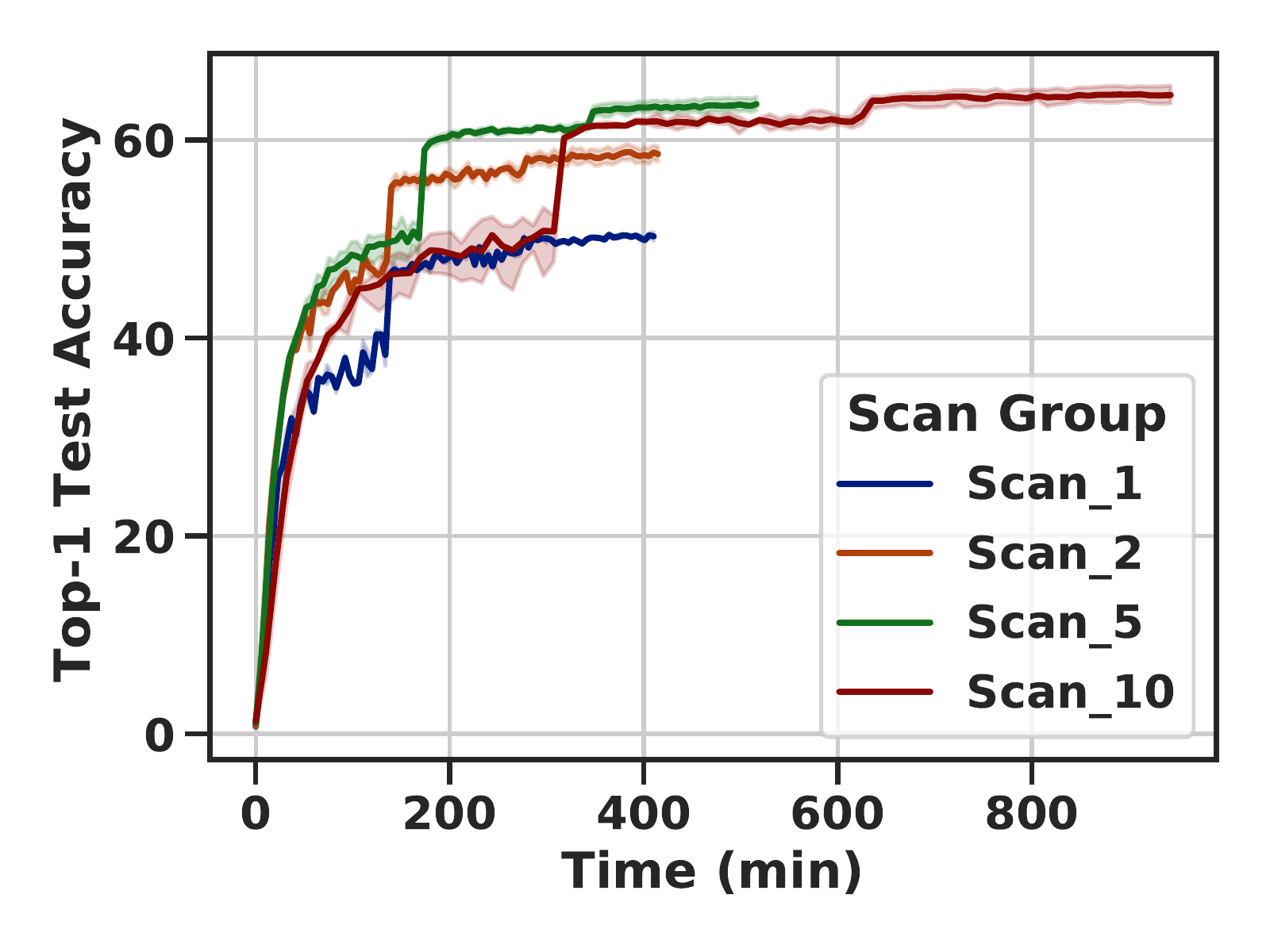}
    \caption{Shufflenet 200MiB/s}
  \end{subfigure}%
  \begin{subfigure}[t]{0.25\textwidth}
    \includegraphics[width=.99\linewidth]{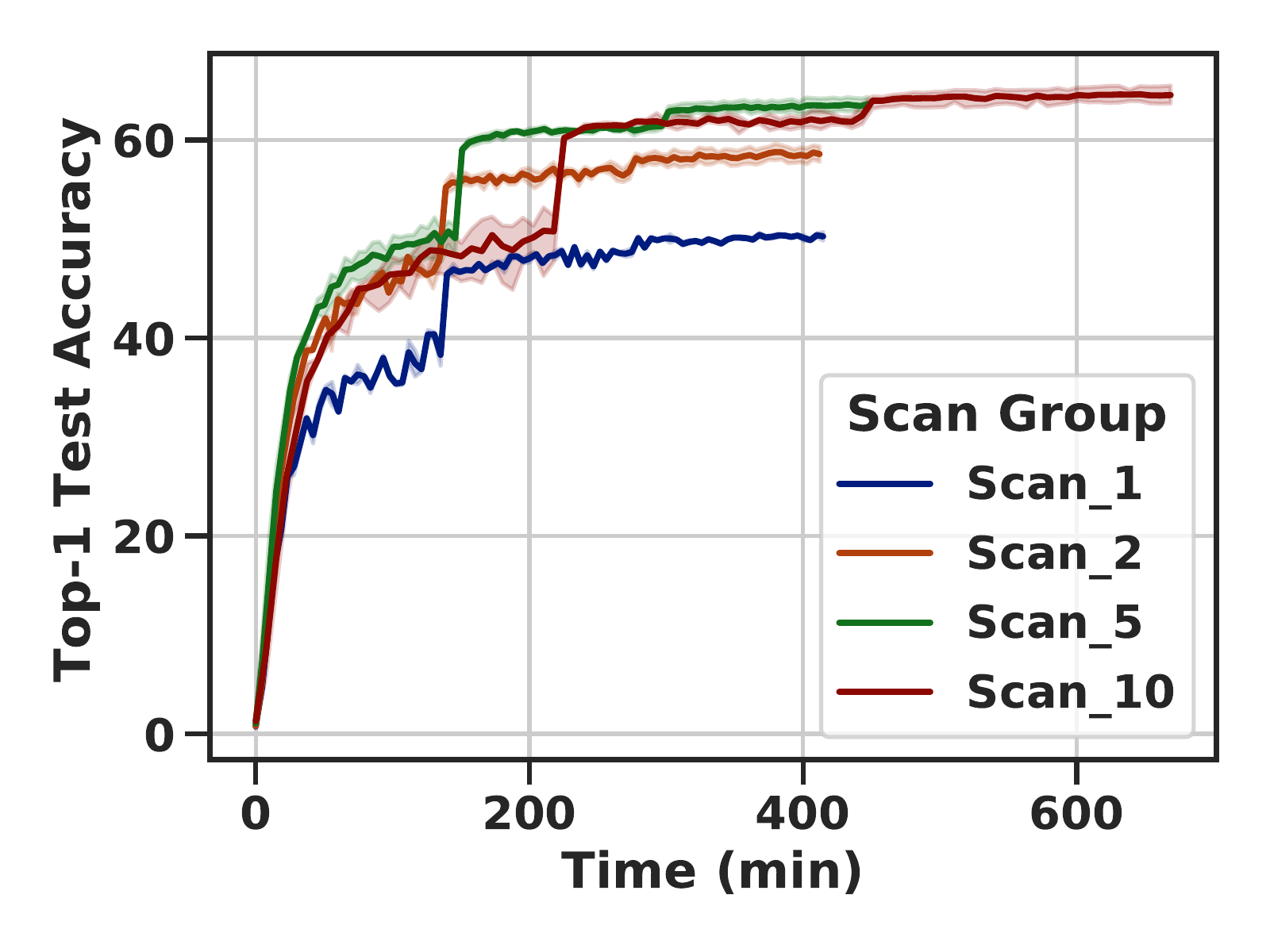}
    \caption{Shufflenet 300MiB/s}
  \end{subfigure}%
  \begin{subfigure}[t]{0.25\textwidth}
    \includegraphics[width=.99\linewidth]{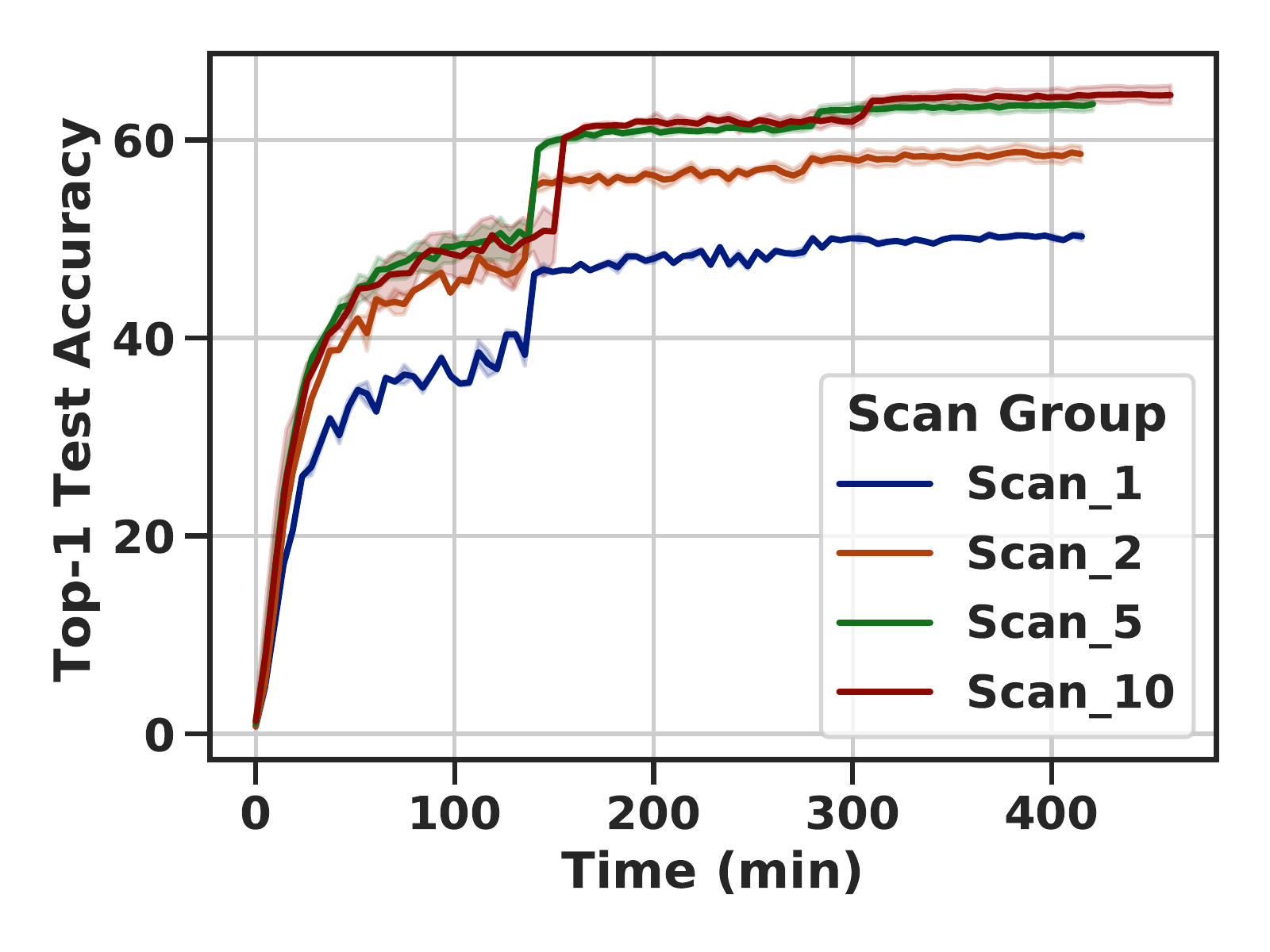}
    \caption{Shufflenet 500MiB/s}
  \end{subfigure}%
  \caption{%
    The effect of various amounts of bandwidth on a 10-node cluster running
    ResNet-18 (top) and ShuffleNet (bottom).
    At very low bandwidth, all scans provide benefits, while higher bandwidths
    provide less benefits.
    Similarly, faster models (e.g., Shufflenet) or accelerators increase I/O pressure, which
    enables low scans to obtain higher speedups---scan 1/2 are beneficial for
    ShuffleNet at 200MiB/s, but not ResNet, and the higher I/O pressure carries
    into 500MiB/s.
  }%
  \label{fig:estimated_convergence}%
\end{figure*}

Finally, we validate why lower compression levels yield faster training by observing image loading rates. Loading rates for  training are shown in
Figure~\ref{fig:PCR_Image_Rates}.
Using more scans slows down training significantly, as seen in image
rates.
Training slowdowns manifest as latency spikes from a data stall, causing rates to fluctuate
considerably.
Informally, we can
perform twice as many read operations if we decrease the data read by each
operation by $2\times$ (Section~\ref{sec:speedup_analysis}).
The speedup can be calculated through the average PCR size
(Figure~\ref{fig:predictedThroughput} and Figure~\ref{fig:PCR_Inspector_Sizes}).
Since ShuffleNetv2 is capable of a higher maximum training rate
than ResNet-18, it achieves higher speedups. %
As HAM10000 has the largest images, it is the most
bottlenecked by image loading bandwidth---scan 5 is $2.9\times$ smaller than
scan 10.
For CelebAHQ, scan 2 is roughly the
same size as scan 5; as expected, image rates are very similar.
For the 20 worker runs, we observe that scan 1 and 2 for ImageNet have a median
epoch latency of 100 seconds, while TFRecords and scan 10 have a median epoch latency
of 300 seconds---even though the size difference is over $10\times$, $3\times$ is
the best factor that can be achieved before hitting in-memory processing rates.
These results indicate that systems with large images, efficient models, and fast compute would be the biggest
benefactors of PCRs.

\vspace{.5em}
\observation{\textit{Image loading rates are directly linked to the the
underlying model and bandwidth.}}
\vspace{5pt}

While faster compute hardware can speed up a fixed model (e.g.,
Figure~\ref{fig:simulatedslowdown}), it is less clear how bandwidth impacts training rates, especially with
different hardware/model combinations.
To explore this question, we implement the token-bucket algorithm in our \texttt{tf.data} implementation.
Each second, a node accumulates a fixed amount of tokens, which are traded for
bytes read off storage, and thus
nodes will block if they use too much bandwidth in a given time.
We rate limit the bandwidth of each of the 10 nodes in the cluster across a
sweep of aggregate cluster bandwidths in Figure~\ref{fig:estimated_convergence},
calculate the time per epoch over 7 minutes (using the data shown in Figure~\ref{fig:predictedThroughput}),
and utilize the previous convergence results in Figure~\ref{fig:scan_performance_joined_orca_acc_time}
to project the
accuracy-over-time graphs for 90 epochs.
As previously observed, lower scans benefit the most from low bandwidth, and
faster models are more bottlenecked.
For instance, although ShuffleNet is typically faster than ResNet-18, it still
takes a similar amount of time at low bandwidth to finish training---simply
because nearly all time is waiting on I/O.
We do not see much benefit for lower scans at high bandwidth, but low bandwidth
(e.g., 20MiB/s) shows gains for even scan 1 over scan 2 (Figure~\ref{fig:predictedThroughput}).
\section{Discussion}%
\label{sec:discussion}
\paragraph{\textbf{Data caching.}}
Data can be partially cached in memory (e.g., OS page cache);
however, uniform sampling means that over 75\% of the data must be
cached to see the majority of speedup---caching does not significantly affect
our ImageNet results.
Specifically, the expected latency of a read is a convex combination of the disk
and memory latency~\cite{dsanalyzer}, where disk latencies are usually high.
PCRs help reduce cache pressure by reducing the number of bytes read (and thus
the size of the working set).
Further, PCRs facilitate cache sharing in a multi-user environment, as multiple
fidelities share common data, eliminating double caching.

\paragraph{\textbf{Hardware acceleration.}}
Hardware JPEG decoders are popular in mobile phones,
and PCRs could take advantage of such hardware support.
In fact, NVIDIA's A100 is the first datacenter GPU to ship with
hardware decoding~\citep{A100JPEG}; prior versions used software
acceleration~\cite{nvjpeg}.
The reasoning behind hardware support is simple: 
33\% or more of CPU time can be spent on image decoding~\cite{dsanalyzer} and 96 cores or more are
currently matched with an accelerator~\citep{MLPerf07},
which directly cuts into training cost
efficiency~\citep{jouppi2020domain}---empirically, 20\% of jobs spend over 30\%
of their compute budget on data ingest~\cite{murray2021tfdata}.
Frameworks like DALI already offload part of the JPEG decode to the GPU, namely the
Discrete Cosine Transform; mapping Huffman decoding to the GPU requires
additional
work~\cite{huffman1952method,klein2003parallel,weissenberger2018massively}.
A different practice is to avoid the CPU by caching \textit{decoded} images in
memory~\citep{MLPerf07,kumar2020exploring,dsanalyzer},
though this has limitations as datasets are large, especially when uncompressed.

\paragraph{\textbf{Non-image datasets and training tasks.}}
Our exp\-er\-i\-ments indicate that PCRs are robust (in terms of accuracy)
across a variety of tasks, and we only focus on a subset of tasks due to a limited
computation budget.
Using ResNet50~\cite{he2016deep} on the ImageNet~\cite{ILSVRC15} tasks, we obtain 75.14\% vs.\ 75.47\%
(scan 5/baseline) accuracy, and 38.37/38.80 AP on the FPN-ResNet50/COCO
task~\cite{lin2017focal,lin2014microsoft}.
While the computational difference between ResNet18 and ResNet50 is only 2--3 years of accelerator
progress (Figure~\ref{fig:simulatedslowdown}), we note that detection datasets
(like COCO) can be 10-100$\times$ slower to compute (albeit with larger images).
Apart from image-based datasets,
PCRs generalize to other datasets and modalities as long as the encoding is progressive.
For example, each component in Principal Component Analysis (PCA)~\cite{shlens2014tutorial} is a
progressive approximation of the source dataset, and removing (e.g.,
compressing) 50\% of the components loses 1\% of accuracy over YouTube videos~\cite{abu2016youtube}.
Another general encoding is quantization, which progressively encodes subsets
of the higher-order bits (e.g., the first 25\%) in a dataset's features,
an approach baked into progressive JPEG and used in YouTube-8m~\cite{abu2016youtube}.
These results suggest that if one were to implement PCRs via PCA/quantization over videos, they would
obtain $8\times$ bandwidth savings total with only a 1\% loss in accuracy.

\paragraph{\textbf{Generalizing across hardware.}}
To test the generalization of our system, we investigate applying PCRs
to a Google
Cloud~\cite{google_cloud} \texttt{n1-instance-16} with a \texttt{P100} GPU.
We attach a 150GB HDD for the operating system and use a SSD for
the data loading, which has peak bandwidth of 74MB/s (similar per-GPU load as
prior work~\cite{dsanalyzer}).
On ImageNet/ShuffleNet, we observe 650 images/second for TFRecords, and 680
(scan 10), 1540 (scan 5),
1700 (scan 2), and 1750 (scan 1) for PCRs.
The difference between TFRecords and scan 10 can be explained by progressive
compression being 6\% smaller in size.
Using either as a baseline, scan 5 is over $2\times$ faster.
Because the ratio of resources primarily matters,
we observe that doubling the CPU, GPU, and SSD resources maintains the same
relative performance advantages for PCRs, yielding a $2.2\times$ speedup from
TFRecord to scan 5, and a $2.7\times$ speedup from scan 1/2.

\section{Related Work}%
\label{sec:related_work}
\newcontent{%

Numerous works have explored methods for decreasing training time with
large
datasets~\citep{goyal2017accurate,you2018imagenet,jia2018highly,ying2018image,yamazaki2019yet,
kurth2018exascale,kumar2019scale}, motivating a need for improved I/O
internals~\citep{chowdhury2019characterization,chien2018characterizing,pumma2019scalable,Baylor:2017:TTP:3097983.3098021,aizman2019high},
formats~\cite{PyTorchWebDataset},
caching~\cite{kumar2020exploring},
and data pipeline frameworks~\cite{murray2021tfdata}.
We discuss caching, forms of compression, and frequency-domain DL literature
below.

\paragraph{Dataset Caching.}
Caching places data in faster storage tiers to offload the bandwidth burden from slower devices.
ML applications lack locality due to uniform sampling~\cite{quiver},
requiring either prohibitively large cache sizes or
weaker forms of sampling~\cite{meng2017convergence}.
However, when done correctly, caching can obtain significant speedups~\cite{dsanalyzer}.
PCRs are designed for datasets which do not fit in caches, and, by virtue of
accessing less data, can increase cache hit rates.

\paragraph{Dataset Cardinality Reduction.}
``Big data'' spawned interest in dataset reduction techniques
that aim to reduce the \textit{number} of training samples while maintaining model accuracy~\citep{pmlr-v99-karnin19a,feldman2013turning,liberty2013simple,woodruff2014sketching,45938,bachem2017practical,matsushima2012linear}.
Similarly, dataset echoing~\citep{choi2019faster,agarwal2020stochastic} re-uses subsamples
to speedup the data pipeline.
PCRs differ in that they reduce I/O burden by
modifying data representation and layout.

\paragraph{Dataset Sample Compression.}
Techniques such as compression~\citep{DeepNJPEG,abu2016youtube} or
resizing~\citep{karras2017progressive} reduce data size by lowering fidelity,
reducing I/O pressure.
Prior work has shown that resizing as a form of data reduction is particularly
effective for DL tasks, as resized data can speed up training, transfer to
high fidelity test points, and in some cases, even \textit{increase}
accuracy when combined with certain data augmentations~\cite{adascale,karras2017progressive, fixing_train_test}.
However, resizing parameters are chosen statically and heuristically (therefore
suboptimally~\cite{fixing_train_test}), and thus may not meet the needs of all
applications without duplicated work.
PCRs differ in that they provide a \textit{dynamic} mechanism for adjusting I/O load, and thus
 can adapt to both the system and task at runtime.

Similar to our work, MLWeaving~\cite{MLWeaving} has shown that
\textit{transposed layouts} (i.e., column major) can accelerate machine learning training.
However, this work differs in that we focus on I/O in the context of deep
learning models, whereas MLWeaving
focuses on memory bandwidth for general linear models.
Additionally, the compression method differs.
For image data, the three canonical dimensions of compression are 1)
quantization, 2) frequency selection, and 3) spatial
selection~\cite{wallace1992jpeg}; MLWeaving uses
the first while PCRs use the first and second (via JPEG).

Neural compression~\cite{toderici2017full}, which learns custom compression
formats using neural networks, is an interesting direction for future work and
is compatible with PCRs.
However, while neural compression can outperform JPEG in terms of quality~\cite{toderici2017full}, it
does so at significant cost.
Using state-of-the-art neural compression~\cite{mentzer2020high,nonlinear_image_compression,variational_image_compression}, we find decoding to be between
$900\times$ and $5000\times$ the cost of baseline JPEG, and thus incompatible
with real-time performance.

\paragraph{General Compression.}
Bandwidth reduction extends to databases, memory hierarchies, and the web~\citep{zukowski2006super,abadi2006integrating,pekhimenko2018tersecades,pekhimenko2012base,agababov2015flywheel}.
Progressive compression has been used in the context of
dynamically saving bandwidth for mobile phone downloads~\citep{yan2017customizing}.
HippogriffDB~\citep{li2016hippogriffdb} uses GPUs to compress
data in the context of databases, which lowers
I/O bandwidth to get a speedup.
Other work has investigated how image degradation affects
inference~\citep{dodge2016understanding,vasiljevic2016examining,peng2016fine,45227}.
In contrast, our work is focused on compression for I/O savings in deep
learning.
Reinforcement learning has been used to choose JPEG parameters for
cloud inference~\citep{adacompress}; other work has hand-designed
JPEG encodings for training~\citep{DeepNJPEG}.
These works are similar to ours in that they tune compression for
the model, though they differ in that they are static.
Other work investigates compressing
models~\citep{han2015deep,han2016eie,han2015learning,cheng2017survey,xu2018deep,hwang2014fixed,anwar2015fixed,denton2014exploiting}
or network
traffic~\citep{lim20183lc,alistarh2017qsgd,lin2017deep,wen2017terngrad,wangni2018gradient,zhang2017poseidon};
these are orthogonal to our work.

\paragraph{Frequency Domain Deep Learning.}
Prior work modifies models to directly train over compressed
representations~\citep{gueguen2018faster,torfason2018towards,fu2016using,ulicny2017using}
or with frequency-domain operators~\cite{dziedzic2019band};
our work does not modify the model.
Other work investigates generalization performance from the view of
low~\cite{xu2019training,basri2019convergence} and high~\cite{Wang_2020_CVPR} frequencies, which
provides insight into our work.
JPEG mostly filters low frequency components, and thus prior work has attempted to use
JPEG as a defense mechanism against adversarial
attacks~\cite{feature_disillation,das2018shield,dziugaite2016study}.
Motivated by adversarial attacks exploiting spurious, high-frequency
features~\cite{NEURIPS2019_e2c420d9,geirhos2018imagenettrained}
other work investigates if frequency filters can impact
model robustness~\cite{NEURIPS2019_frequency_perspective,benchmark_adv};
our work primarily focuses on retaining test accuracy under
non-adversarial conditions.
}%
\section*{Conclusion}%
\label{sec:conclusion}
We introduce a novel storage format, \textit{Progressive Compressed Records}
(PCRs), to reduce the bandwidth cost of training over large datasets.
PCRs use progressive
compression to split training data into multiple fidelity levels, while avoiding duplicating space. %
The format is easy to implement and can be applied to a
broad range of tasks dynamically. 
PCRs provide applications with the ability to trade off data fidelity
with storage and network demands, allowing the same model to be trained with
$2\times$ less bandwidth while retaining model accuracy. We introduce methodology for choosing the particular data fidelity necessary for a task, as well as a tuning heuristic that can be applied automatically. Using PCRs, our approaches can
dynamically switch between multiple data fidelities while training without loss of accuracy.
Future directions include alternative compression methods, data modalities, and hardware acceleration.

\begin{acks}
We thank the anonymous reviewers for their help improving the presentation of this paper.
We thank the members and companies of the PDL Consortium: Amazon, Facebook, Google, Hewlett Packard Enterprise, Hitachi Ltd., IBM Research, Intel Corporation, Microsoft Research, NetApp, Inc., Oracle Corporation, Pure Storage, Salesforce, Samsung Semiconductor Inc., Seagate Technology, Two Sigma, and Western Digital for their interest, insights, feedback, and support. 
Michael Kuchnik is supported by a National Defense Science and Engineering Graduate Fellowship.
This research was supported with Cloud TPUs from Google's TPU Research Cloud and research credits from Google Cloud Platform.
\end{acks}

\clearpage

\balance%
\bibliographystyle{ACM-Reference-Format}
\bibliography{ms}

\clearpage
\appendix
\section{Appendix}
\subsection{Data Pipelines}
\textbf{Data Pipeline Overview.}%
\label{sec:pipeline_overview}
A model of the training pipeline, including both the compute unit and data pipeline, is shown in
Figure~\ref{fig:data_pipeline}.
The compute unit ({\color{red}red}) calculates model updates over multiple
data points, called a \textit{minibatch}.
This data is served by the data pipeline ({\color{blue}blue}) in units of
\textit{records}.
There are often multiple minibatches contained within a record (accessed via \textit{record
splitting}), and records can
be shuffled in memory to further randomize minibatches.
However, while record splitting and randomization are important to model
convergence, their use does not change the compute time per data item once GPUs
are already saturated, and thus, we can simply abstract the computation to
operate over records directly.

Both compute and data portions of the training pipeline operate as fast as they
can; however, the compute unit needs to wait for data.
The loader operates as a \textit{closed system},
starting the next piece of
work after the last is finished.
Each finished piece of work is placed in a queue to be used
first-come-first-serve by the compute unit.
The compute unit operates as an \textit{open system}, waiting for work to be
assigned to it by the data loader.
There is a dependency between the data pipeline's output and the parameter
update input, and thus, the data pipeline may block the compute unit, which
we call a \textit{data stall}.
If the data loader cannot prefetch data before the current update is finished, there is no work for the
accelerator to work on, and thus, the parameter updates will start in lockstep
with the data fetches.
Although this is a simple model of the training computation (e.g., ignoring
multithreading),
it captures the essence of training behavior when data bandwidth is altered.

\begin{figure}
  \centering
  \includegraphics[width=.85\linewidth]{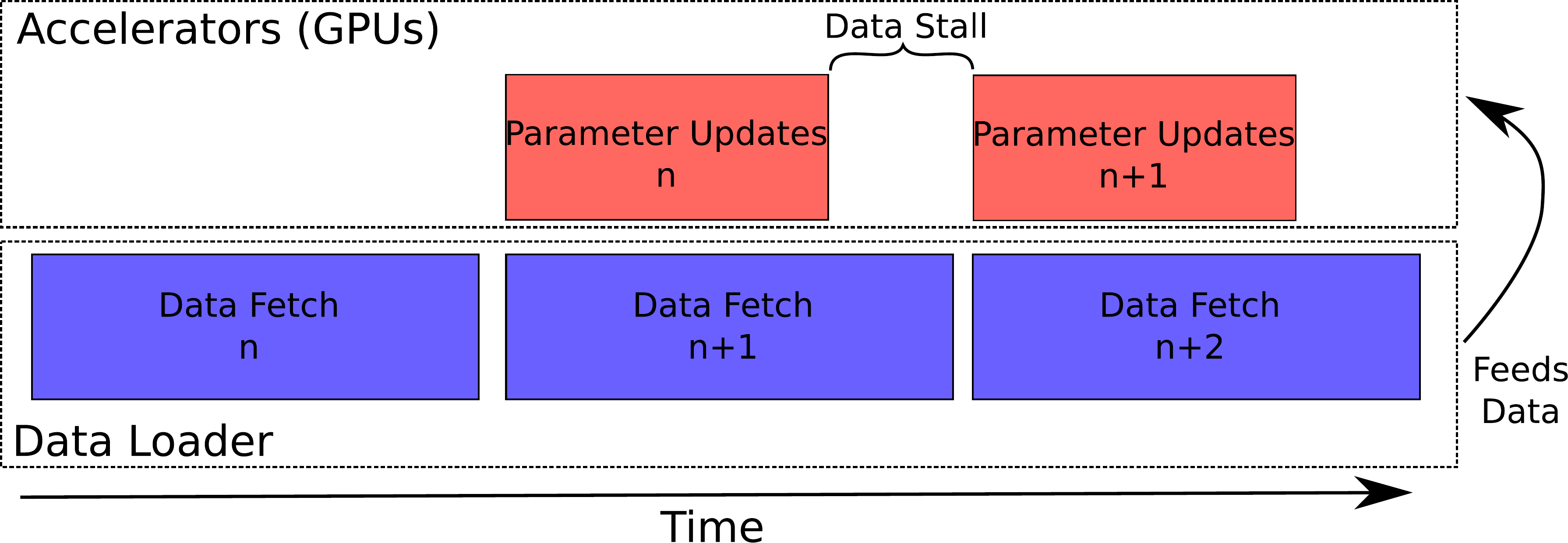}
  \vspace{-0.5em}
  \caption{%
    A data pipeline feeds the model with future data
    in parallel to model updates for the current data.
    The entire system can be modeled as two components (i.e., data loader and
    compute)  operating in sequence (i.e., data loader $\to$ compute).
    As models get faster at computing their minibatch updates, the amount of
    time available to fetch data will decrease, and eventually cause \textit{data
    stalls}, or periods of time solely spent waiting on fetching data
  }%
  \vspace{-1em}%
  \label{fig:data_pipeline}%
\end{figure}

\textbf{Relating Data Stalls and Data Bandwidth.}%
\label{sec:system_analysis}
The datasets we evaluated show that data loading can slow down the training
process by causing data stalls.
To highlight these slowdowns, and the improvements PCRs achieve by not using all
scan groups, we present the loading time of data for a ResNet18 ImageNet
run in
Figure~\ref{fig:imagenet_scan_performance_resnet18_orca_data_times}.
We obtain similar results for the other datasets.
The baseline of using all scan group results in high periodic loading stalls,
where the prefetching queue was drained.
Upon blocking, training cannot proceed until the worker threads obtain a full batch
of data.
Periods of (mostly) no stalls are caused by both threads pre-fetching the
data and single records servicing
multiple minibatches.
Using fewer scan groups reduces the amount of data read, which can result in lower magnitude
stalls.
The periods of progress and stalls average out to a throughput (e.g., images per
second) over a long period of time, a subject we analyze in
Appendix~\ref{sec:throughput_analysis}.

\begin{figure}
  \centering
  \includegraphics[width=.85\linewidth]{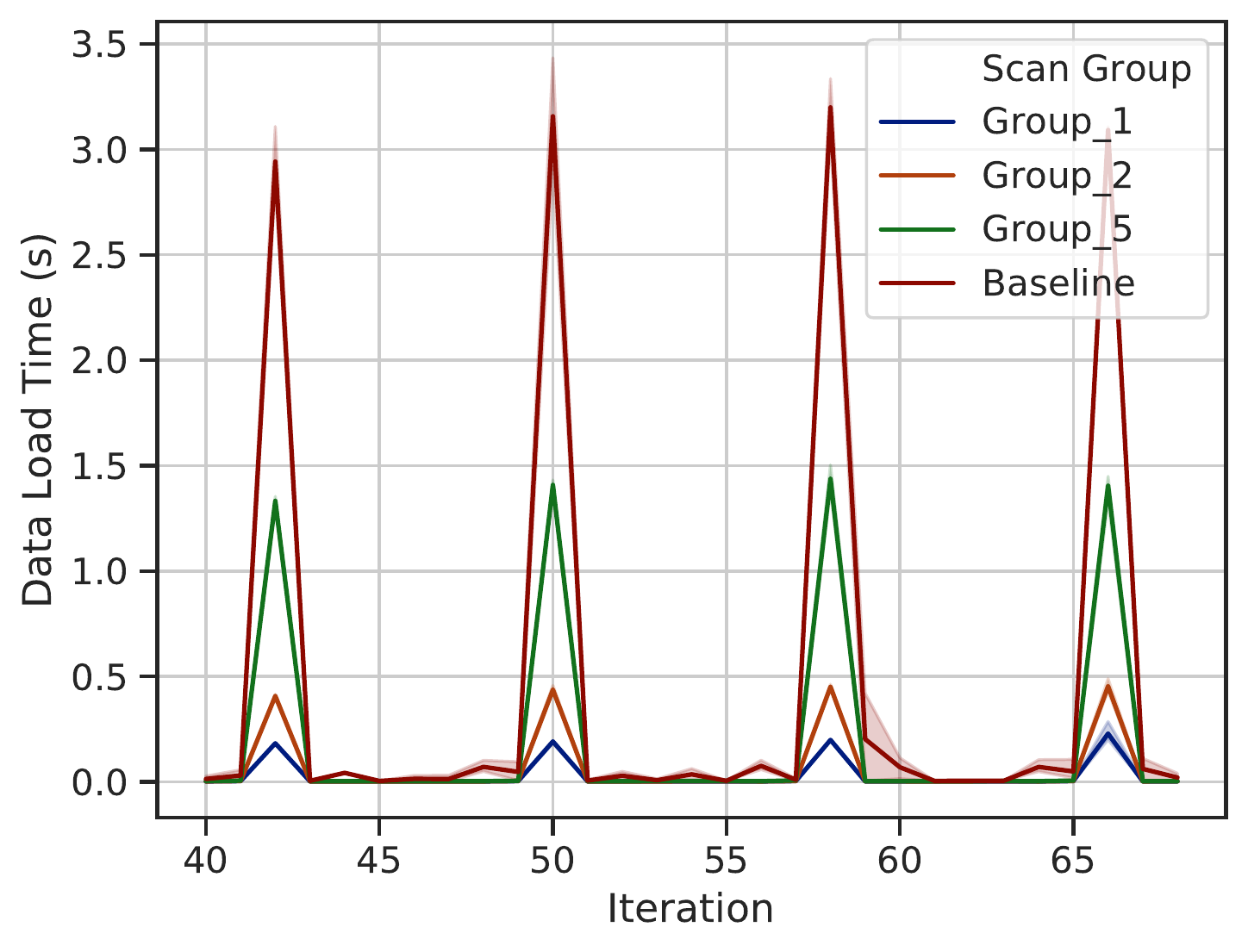}
  \caption{%
    Data loading stalls are periodic and followed by extents of prefetched data.
    Lower scan groups reduce stall time.
  }%
  \label{fig:imagenet_scan_performance_resnet18_orca_data_times}%
  \vspace{-1.5em}
\end{figure}

\subsection{Throughput Analysis}%
\label{sec:throughput_analysis}
As depicted in the bottom of Figure~\ref{fig:data_pipeline},
the data pipeline can be modeled as continuously fetching items sequentially.
We can model such a closed system with queueing theory and show that, as one
would intuitively expect, lower scan groups
increase data throughput (i.e., the rate of data items loaded per unit time).
First, we show the relationship between time to do a read and bytes read,
then we generalize this relationship to the stochastic setting by mapping the mean
time to expected throughput.
We can compute the expected behavior of the system with a few key
parameters; notably we only require to know the mean data size and not the shape
of the distribution.
To do so, we assume that read latencies are proportional to the number of bytes
read---reading at bandwidth $W$ is achieved after a input-size-independent setup
cost, which
  is a reasonable cost model for HDD/SSD~\cite{conway2017file}.
As shown in Figure~\ref{fig:imagenet_sizes}, a typical ImageNet image size is 110kB, but some images can be
larger (e.g., 200kB or more) or smaller (e.g., 4kB or less)---the main
unintuitive part of the analysis is dealing with such variance.

\begin{figure}
  \centering
  \includegraphics[width=.95\linewidth]{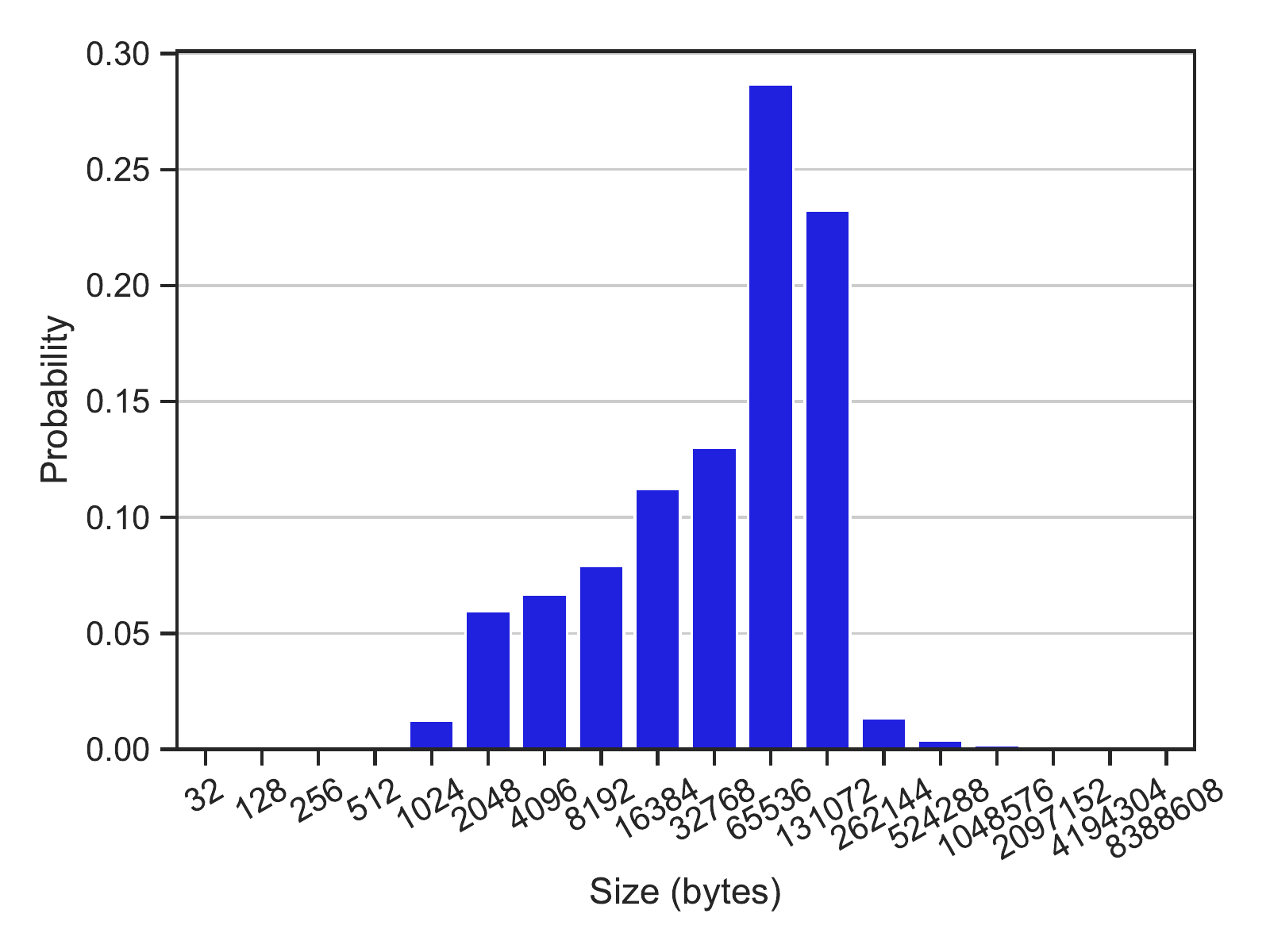}
  \vspace{-0.5em}
  \caption{%
    The sizes of images in ImageNet.
    Each image varies in size due to different dimensions and compression
    ratios.
    Most mass is concentrated close to the mode, but a small amount of mass exists
    in outliers.
  }%
  \vspace{-1em}%
  \label{fig:imagenet_sizes}%
\end{figure}

As one would intuitively expect, the average time to complete a read is
proportional to the average size of the record read (Lemma~\ref{lemma1}).
PCRs allow the system to modulate the expected record (likewise image) size
and thus decrease the
time per record accordingly.
The main text shows the bytes per record
decrease for lower scan groups.
Knowing the average time of a read allows us to calculate the average
throughput (Lemma~\ref{lemma2}), which determines how long it takes to perform any fixed amount of
model updates (e.g., epochs).
Exploiting data reduction allows the loader to obtain a throughput speedup
(Lemma~\ref{lemma3}).
However, since the entire training process is limited by the rate of both
parameter updates and data loading (Lemma~\ref{lemma4}), the training speedup is limited by any
computational bottlenecks (e.g., saturated GPU) reached by the parameter
updates, as shown in Figure~\ref{fig:scan_vs_throughput}.
Thus, for I/O bound tasks, the speedup is proportional to PCR data reduction
(Theorem~\ref{thm1}).

\begin{lemma}%
  \label{lemma1}%
  Let $\train$ be the set representing the training data, and a record,
  $\sB_n$, consist of a batch of
  $n$ elements drawn from $\train$.
  Let the notation $\sB_n \sim \train^n$ denote the batched draw (with or without
  replacement) from a distribution over $\train$, and let $x \sim
  \train$ denote the individual draws from the same distribution.
  Let $s(\cdot)$ denote the size of the input in bytes (or the sum of the sizes of
  a set).
  Let $W$ be the device read bandwidth in bytes per unit time, which we assume operates at a constant
  rate for all records and operates one record at a time.
  The expected time to complete a record read is
  $\E_{\sB_n \sim \train^n}[t]=\Theta\left(\cfrac{n \E_{x
  \sim\train}[s(x)]}{W}\right)$.
  The amortized expected time to complete an image read is
  $\E_{x \sim \train}[t]=\Theta\left(\cfrac{\E_{x
  \sim\train}[s(x)]}{W}\right)$.
\end{lemma}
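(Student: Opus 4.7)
The plan is to establish the record-level claim first via a direct linearity-of-expectation argument, then divide by $n$ for the amortized per-image bound. The hypotheses give us two ingredients: (i) the record $\sB_n$ is a batch of $n$ samples drawn (with or without replacement) from $\train$, and (ii) read time is modeled as a setup cost plus a term proportional to bytes read at rate $W$. The $\Theta(\cdot)$ in the conclusion absorbs the constant setup cost, so the entire task is to compute the expected bytes in a record.

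First I would write the time to read a record as $t(\sB_n) = c + s(\sB_n)/W$ for a fixed setup constant $c \ge 0$, which is the cost model stated in the paragraph preceding the lemma (``read latencies are proportional to the number of bytes read''). Next I would expand $s(\sB_n) = \sum_{i=1}^n s(x_i)$ where $x_i$ are the elements of the batch, and apply linearity of expectation to get $\E[s(\sB_n)] = n\, \E_{x \sim \train}[s(x)]$. This step works identically for sampling with or without replacement, since linearity of expectation does not require independence — only that each marginal draw has the same distribution as $x \sim \train$, which is the case in both sampling regimes. Taking expectation of $t(\sB_n)$ then yields
\[
\E_{\sB_n \sim \train^n}[t] \;=\; c + \frac{n\, \E_{x \sim \train}[s(x)]}{W} \;=\; \Theta\!\left(\frac{n\, \E_{x \sim \train}[s(x)]}{W}\right),
\]
where the $\Theta$ swallows the additive constant $c$ as long as $n\,\E[s(x)]/W$ is not negligible (which is the regime of interest; images are $\sim$110kB and the constant setup cost of a sequential record read is dwarfed for reasonable $n$).

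For the amortized per-image bound, I would simply divide by $n$, since the record cost is shared across $n$ images processed from it: $\E_{x \sim \train}[t] = \E[t]/n = c/n + \E_{x\sim\train}[s(x)]/W = \Theta(\E_{x\sim\train}[s(x)]/W)$. The only subtlety worth flagging is the use of the $\Theta$ notation to absorb the setup constant; if one wanted a sharper statement it would be an equality with an explicit additive $c$ (or $c/n$), but the asymptotic form stated in the lemma is what feeds into Lemmas~\ref{lemma2}--\ref{lemma4} and Theorem~\ref{thm1}. The main (mild) obstacle is being careful that linearity of expectation applies to sampling without replacement, which is easy once one notes each marginal still has the $\train$ distribution; no independence is needed.
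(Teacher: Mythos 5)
Your proposal is correct and matches the paper's own proof essentially step for step: the same cost model $t = s(\sB_n)/W + \Theta(1)$, the same linearity-of-expectation computation giving $n\,\E_{x\sim\train}[s(x)]/W$, and the same division by $n$ for the amortized per-image bound. Your added remark that linearity needs only identical marginals (so sampling with or without replacement is fine) is a nice explicit justification of a point the paper leaves implicit.
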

\begin{proof}
  The time to read a record is $t=\cfrac{s(\sB_n)}{W} + \Theta(1)$, where the
  constant time cost is due to overhead costs (e.g., disk seeking).
  Since the record is drawn from a distribution of images (and thus imposing a
  distribution over sizes), we can
calculate time to read a record in expectation by using linearity of expectation.
\begin{align*}
  \E_{\sB_n \sim \train^n}[t] &= \E_{\sB_n \sim \train^n}\left[\cfrac{s(\sB_n)}{W} + \Theta(1) \right] \\
  &= \cfrac{\E_{\sB_n \sim \train^n}[\sum_{x \in \sB_n} s(x)]}{W} + \Theta(1) \\
  &=\cfrac{n \E_{x \sim \train}[s(x)]}{W} + \Theta(1) \\
  &=\Theta\left(\cfrac{n \E_{x \sim\train}[s(x)]}{W}\right)
\end{align*}
Since a single record yields $n$ images, dividing the right hand side by $n$
  gives the amortized cost for images.
\end{proof}

For the remained of the analysis, we drop the asymptotic notation as we assume that $n$ is sufficiently
large that constants can be safely ignored.

\begin{lemma}%
  \label{lemma2}%
  Let the size of scan group $g$ be represented by $s(\cdot, g)$.
  Let $X_b$ be the baseline data pipeline throughput and $X_g$ be the scan group
  data pipeline throughput.
  The baseline image throughput (e.g., images per second) is
  $X_b=\cfrac{W}{\E_{x \sim \train}[s(x)]}$
  and the throughput
  at scan $g$ is
  $X_g=\cfrac{W}{\E_{x \sim \train}[s(x, g)]}$.
\end{lemma}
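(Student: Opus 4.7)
The plan is to derive Lemma~\ref{lemma2} as a direct consequence of Lemma~\ref{lemma1} together with the closed-system model of the data pipeline described at the top of Appendix~B. First, I would recall that in a closed system where the loader sequentially services one record at a time with no idle gap (records are continuously fetched into the prefetch queue until it blocks), the long-run image throughput is exactly the reciprocal of the amortized expected time to complete a single image read. This is an instance of Little's Law in its simplest form applied to a single-server queue with $N{=}1$ job in flight: $X = N / R$ where $R$ is the mean response time per unit of work.

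Next, I would invoke Lemma~\ref{lemma1}, which (after the stated convention of dropping asymptotic constants for large $n$) gives the amortized expected per-image read time as $\E_{x \sim \train}[t] = \E_{x \sim \train}[s(x)] / W$. Taking the reciprocal yields the baseline throughput
\[
X_b \;=\; \frac{1}{\E_{x \sim \train}[t]} \;=\; \frac{W}{\E_{x \sim \train}[s(x)]}.
\]

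To obtain the scan-group version, I would simply replay the argument of Lemma~\ref{lemma1} with the image-size function $s(\cdot)$ replaced by the scan-group size function $s(\cdot, g)$: reading only scan group $g$ shrinks every record's byte payload from $\sum_{x \in \sB_n} s(x)$ to $\sum_{x \in \sB_n} s(x, g)$, while the bandwidth $W$ and the per-record setup cost remain unchanged. Linearity of expectation then gives amortized per-image time $\E_{x \sim \train}[s(x, g)] / W$, and inverting this yields
\[
X_g \;=\; \frac{W}{\E_{x \sim \train}[s(x, g)]}.
\]

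The main (mild) obstacle is justifying that throughput equals the reciprocal of mean per-image service time in the stochastic setting, i.e., that variability in record size (heavy-tailed as in Figure~\ref{fig:imagenet_sizes}) does not degrade long-run throughput below $W / \E[s(x)]$. I would handle this by appealing to the closed-system assumption (prefetch queue nonempty in the data-bound regime, so the server is never idle) and the renewal-reward theorem applied to the i.i.d.\ sequence of record service times from Lemma~\ref{lemma1}; the long-run average rate then equals $1/\E[t]$ almost surely. Everything else is bookkeeping.
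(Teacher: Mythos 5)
Your proposal is correct and follows essentially the same route as the paper: it invokes Little's Law for a single-job closed system to write throughput as the reciprocal of the expected per-image read time from Lemma~\ref{lemma1}, and then applies the identical argument with $s(\cdot)$ replaced by $s(\cdot,g)$ to obtain $X_g$. Your added renewal-reward remark is just a more explicit justification of the paper's observation that the result holds regardless of the shape of the size distribution.
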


\begin{proof}
Little's law~\citep{littles_law} for single-job closed
systems~\citep{harchol2013performance} (i.e., the number of jobs in the system
is constant and equal to one, and they arrive at the throughput rate)
states that the
expected throughput, $X$, is related to the expected time (over jobs) of a job's completion
$\E[t]$ by an
inverse relationship:
$X={\E[t]}^{-1}$.
These results hold regardless of the \textit{shape} of the distribution of data.
Let $X_b$ be the baseline throughput and $X_g$ be the scan group throughput.
By the Lemma~\ref{lemma1} and Little's Law, the baseline throughput is
$X_b=\cfrac{W}{\E_{x \sim \train}[s(x)]}$
and the throughput
at scan $g$ is
$X_g=\cfrac{W}{\E_{x \sim \train}[s(x, g)]}$.
\end{proof}

It's worth noting that the baseline rate is simply a special case of using scan
groups, and it is equivalent in size (barring entropy coding) to having all scan groups.
Thus, we can substitute references to the baseline as just the last scan group.
Also, throughputs in terms of records can be obtained by simply dividing image
rates by $n$.

\begin{lemma}%
  \label{lemma3}%
  The data pipeline throughput speedup at scan group $g$ is the ratio of the
  mean reduced data size, $\E_{x \sim \train}[s(x,g)]$,
  and the mean baseline data size, $\E_{x \sim \train}[s(x)]$.
\end{lemma}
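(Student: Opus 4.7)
The plan is to derive the speedup directly from Lemma~\ref{lemma2}, since that lemma has already translated the problem from time-per-read into a throughput via Little's Law. The speedup at scan group $g$ is by definition the ratio of the scan-group throughput to the baseline throughput, $S_g = X_g / X_b$, so the entire proof amounts to substituting the two expressions from Lemma~\ref{lemma2} and simplifying.

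First I would write $X_b = W / \E_{x \sim \train}[s(x)]$ and $X_g = W / \E_{x \sim \train}[s(x,g)]$ from Lemma~\ref{lemma2}. Then forming the ratio $X_g / X_b$, the bandwidth $W$ cancels, leaving $\E_{x \sim \train}[s(x)] / \E_{x \sim \train}[s(x,g)]$. This matches the speedup expression stated in Theorem~\ref{speedup_thm} (up to the reordering of numerator and denominator); I would note that the statement of the lemma should be read as ``the ratio of baseline to reduced,'' consistent with the earlier theorem, since otherwise the quantity is below one and would not constitute a speedup.

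There is essentially no technical obstacle here — the bulk of the work was done in Lemma~\ref{lemma1} (establishing that expected read time is proportional to the expected record size, via linearity of expectation) and Lemma~\ref{lemma2} (inverting this via Little's Law for single-job closed systems). The one subtlety worth flagging explicitly is that the cancellation of $W$ is only valid when the bandwidth $W$ is the same under both the baseline and scan-group regimes, which is precisely the setting of PCRs: only the bytes read per record change, not the underlying storage device. I would close by remarking that this speedup represents the \emph{data pipeline} speedup in isolation, and that the end-to-end training speedup is bounded by whichever of the loader throughput or the compute throughput is the bottleneck, motivating the subsequent Lemma~\ref{lemma4} and Theorem~\ref{thm1}.
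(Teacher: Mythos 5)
Your proposal is correct and follows exactly the paper's own argument: the speedup is defined as $X_g / X_b$, and substituting the two throughput expressions from Lemma~\ref{lemma2} cancels $W$ and yields $\E_{x \sim \train}[s(x)] / \E_{x \sim \train}[s(x,g)]$. Your observation that the lemma's wording should be read as the baseline-to-reduced ratio (matching Theorem~\ref{speedup_thm}) is a fair and accurate clarification of the statement, not a deviation from the proof.
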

\begin{proof}
The speedup of PCRs at group $g$ is then $\cfrac{X_g}{X_b}$, which simplifies to
  $\cfrac{\E_{x \sim \train}[s(x)]}{\E_{x \sim \train}[s(x, g)]}$ with
  Lemma~\ref{lemma2}.
\end{proof}

\begin{lemma}%
  \label{lemma4}%
  A training pipeline's throughput, $X$, is bound by the throughput of the compute
  unit, $X_c$, as well as the throughput of the data pipeline at scan
  group $g$ (folding the baseline into the last scan group),
  $X_g$, by the equation $X\leq\min(X_c, X_g)$.
\end{lemma}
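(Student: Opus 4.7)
The plan is to prove the bound by a standard bottleneck argument for a tandem system, treating the pipeline depicted in Figure~\ref{fig:data_pipeline} as two stages connected in series: the data loader (stage 1) feeding a prefetch queue, and the compute unit (stage 2) draining that queue. Since every completed training step must traverse both stages, the end-to-end throughput $X$ cannot exceed the completion rate of either stage in isolation.

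First I would show $X \leq X_c$. Stage 2 produces at most one completed update per $1/X_c$ units of time by definition of $X_c$, so even with an infinite backlog of prefetched data, the end-to-end rate is bounded above by $X_c$. Second I would show $X \leq X_g$. Each completed update consumes a record that must first be produced by the data loader, and by Lemma~\ref{lemma2} the loader produces records (equivalently, images at the amortized rate folded through $n$) at rate at most $X_g$. Since stage 2 cannot consume records that have not yet been delivered, the throughput of the overall pipeline is upper-bounded by the delivery rate of stage 1. Combining the two inequalities gives $X \leq \min(X_c, X_g)$, which is the stated bound.

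The most delicate step will be handling the prefetch queue between the two stages, which is why Lemma~\ref{lemma1} and Lemma~\ref{lemma2} were phrased in terms of long-run expected throughputs rather than instantaneous rates. A finite prefetch buffer can temporarily decouple the stages and hide jitter, but it cannot inject records: over any interval long enough for transient effects to wash out, the number of records that clear stage 2 is bounded by the number that entered stage 1, and hence the long-run rate inherits the minimum. I would make this precise by appealing to the closed-system form of Little's Law used in Lemma~\ref{lemma2}: treating the buffer plus compute unit as a downstream subsystem, its output rate equals the input rate of the loader in steady state (so is at most $X_g$), while treating the loader plus buffer as an upstream subsystem, the consumption rate is at most $X_c$. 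Taking both inequalities simultaneously yields $X \leq \min(X_c, X_g)$, completing the proof.
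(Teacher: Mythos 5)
Your proof is correct, and it reaches the stated bound by a somewhat more elementary route than the paper. You adopt the same two-stage decomposition (loader feeding the compute unit through a prefetch buffer), but where the paper models the compute unit as an open system with arrival rate $\lambda = X_g$ and service rate $\mu = X_c$ and then applies the Utilization Law with a case split ($\lambda < \mu$ versus $\lambda \geq \mu$) to determine the system throughput in each regime, you argue directly by flow conservation: every completed update must clear both stages, the compute unit completes at most $X_c$ items per unit time, and the loader delivers at most $X_g$ items per unit time (Lemma~\ref{lemma2}), so the long-run rate is bounded by each stage separately and hence by their minimum. This counting argument fully suffices for the lemma as stated, since only the upper bound $X \leq \min(X_c, X_g)$ is claimed, and your observation that a finite prefetch buffer can hide jitter but cannot inject records is exactly the right way to dispose of the intermediate queue; indeed, the interval-counting version of your argument already does all the work, so the final appeal to Little's Law and the steady-state subsystem balance is not really needed (and is the least precise part of your write-up, since when $X_g > X_c$ the loader eventually blocks on the full buffer and its effective output drops to $X_c$). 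What the paper's queueing-theoretic case analysis buys in exchange is tightness: it shows that when $X_g < X_c$ the throughput actually equals $X_g$, and that it saturates at $X_c$ otherwise, which is what lets the bound of Theorem~\ref{thm1} be read as a prediction of realized speedups for I/O-bound workloads rather than only as a ceiling.
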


\begin{figure}
    \centering
    \includegraphics[width=.75\linewidth]{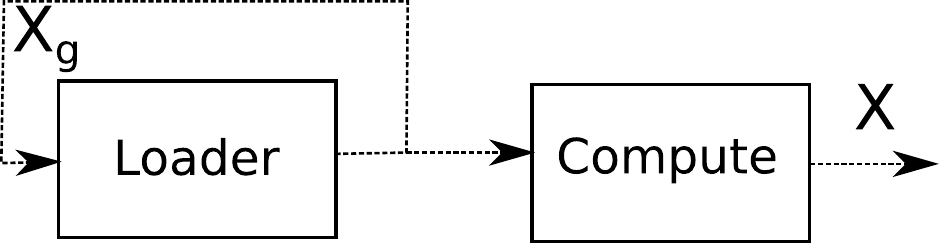}
    \vspace{-0.5em}
    \caption{%
      The queueing network for the system.
      The compute unit acts as an open system with arrivals determined by the
      loader, which acts as a closed system.
      The entire system's throughput, $X$, is determined by the maximum
      achievable throughputs of the two subsystems.
    }%
    \vspace{-1em}%
    \label{fig:system_analysis}%
\end{figure}%
\begin{figure}
  \centering
  \includegraphics[width=.75\linewidth]{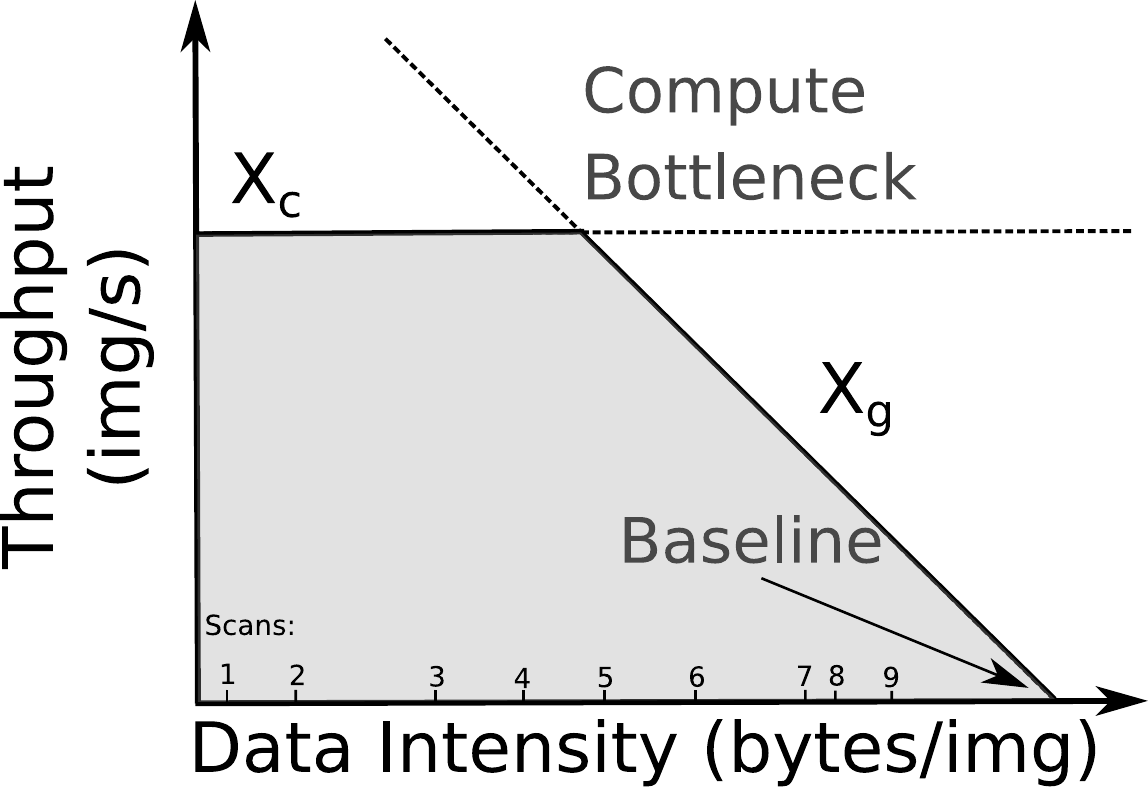}
  \vspace{-0.5em}
  \caption{%
    The system can process more images per second when a higher data rate is
    achieved via PCR data reduction.
    This trend continues until the compute units (e.g., GPUs) become saturated and
    the system becomes compute bound, which depends on the hardware and model.
    The shaded region corresponds to possible implementation throughputs.
    The bottom of the figure is marked with notches representing possible byte
    intensities for various scan groups (placed for illustrative purposes), allowing the user to increase
    throughput for bandwidth-bound workloads.
  }%
  \vspace{-1em}%
  \label{fig:scan_vs_throughput}%
\end{figure}

\begin{proof}
  The training pipeline is a system with the data pipeline feeding into
  the compute unit, as shown in Figure~\ref{fig:system_analysis}.
  Every training point has to pass through both components exactly once.
  The data pipeline at scan group $g$ operates at a rate of $X_g$,
  since it is a closed system and thus has $100\%$ utilization.
  The compute unit can operate at a maximum rate of $X_c$ when it is fully
  utilized (thus, it's service
  rate is $\mu=X_c$) and queues inputs
  from the data pipeline at rate $X_g$.
  Thus, the compute unit receives inputs from the loader as if the compute unit were
  an open system, the arrival rate was $\lambda=X_g$, and the service rate was
  $\mu=X_c$.

  Network analysis on open systems can then be used to determine $X$ for the
  whole system.
  The Utilization Law~\citep{harchol2013performance} states that for a device
  with $\lambda < \mu$, the equation holds
  $\rho=\lambda / \mu=X / \mu$, where $\rho\in[0,1)$ and is the device
  utilization.
  If $\lambda < \mu$, then the throughput, $X$, is $\lambda=X_g$.
  As $\lambda$ approaches $\mu$ from the left, $\rho$ goes to $1$ and $X=\mu=X_c$.
  If $\lambda \geq \mu$, the
  queue grows unbounded and the throughput, $X$, is limited by $\mu=X_c$ (the
  maximum achievable rate).

\end{proof}

\begin{theorem}%
  \label{thm1}%
  If a training pipeline is data bound ($X_c > X_g$ for scan group $g$), then the maximum
  achievable system speedup for switching to PCR scan group $g$ is
  $\cfrac{\E_{x \sim \train}[s(x)]}{\E_{x \sim \train}[s(x, g)]}$.
\end{theorem}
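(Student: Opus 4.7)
The plan is to chain the three preceding lemmas, specializing Lemma~\ref{lemma4} to the data-bound regime so that the overall system throughput equals the data-pipeline throughput at every scan group under consideration (including the baseline), and then invoking Lemma~\ref{lemma3} to compute the ratio.

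More concretely, first I would observe that by Lemma~\ref{lemma4} the end-to-end training throughput satisfies $X \leq \min(X_c, X_g)$, with the maximum achievable value being exactly $\min(X_c, X_g)$ when the data loader is continuously producing and the compute unit has no additional stalls beyond data stalls. Under the data-bound hypothesis $X_c > X_g$, this maximum simplifies to $X = X_g$. Applying the same argument to the baseline (which, as noted after Lemma~\ref{lemma2}, corresponds to using the last scan group so that the data-bound hypothesis continues to hold with possibly tighter margin $X_c > X_b$), the maximum achievable baseline throughput is $X_b$.

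Next I would take the ratio $X_g / X_b$ to define the system speedup from switching to scan group $g$. By Lemma~\ref{lemma2}, $X_g = W / \E_{x \sim \train}[s(x,g)]$ and $X_b = W / \E_{x \sim \train}[s(x)]$, so $W$ cancels and the speedup reduces to
\[
\frac{X_g}{X_b} = \frac{\E_{x \sim \train}[s(x)]}{\E_{x \sim \train}[s(x,g)]},
\]
which is precisely the ratio identified in Lemma~\ref{lemma3}. Since both numerator and denominator represent maximum achievable values under the data-bound hypothesis, the ratio is the maximum achievable system speedup, yielding the theorem.

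The hard part, if any, is not the algebra but the careful bookkeeping about what ``maximum achievable'' means: the inequality in Lemma~\ref{lemma4} is not in general tight, so I must argue that (i) in the data-bound regime $X_c > X_g$ the bound $X \leq X_g$ is actually attained by an ideal implementation (the loader running at full bandwidth with no compute-side stalls beyond data stalls), and (ii) the baseline's maximum is likewise $X_b$, so that no slack on either side inflates or deflates the ratio. Once both endpoints of the ratio are established as tight bounds under the same data-bound assumption, the remainder is a one-line substitution.
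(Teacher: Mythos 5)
Your proposal is correct and follows essentially the same route as the paper: invoke the $\min(X_c, X_g)$ bound under the data-bound hypothesis so that $X \leq X_g$, then substitute the throughput expressions of Lemma~\ref{lemma2} (equivalently, Lemma~\ref{lemma3}) to obtain the ratio of mean sample sizes. Your added bookkeeping about tightness of the bound at both endpoints is a reasonable refinement of the same argument, not a different approach.
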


\begin{proof}
  By the assumption $X_c > X_g$, $X_g$ dominates the $\min(\cdot)$ term in
  Lemma~\ref{lemma3} and thus $X \leq X_g$.
  Substituting $X_g$ with Lemma~\ref{lemma2}, we find the speedup.
\end{proof}

Our bottleneck model is similar to that of the Roofline
model~\citep{williams2009roofline}, except we change
the graphs to highlight changes in \textit{data intensity} rather than
\textit{compute intensity}.
These data intensities are a property of the data and the progressive format:
larger images require more work per image and less scan groups require less
work per image.
Further, the derived bounds generalize to the distributed compute and storage setting
by simply measuring each subsystem's empirical throughput (e.g.,
one can measure $X_c$ by bypassing the loader entirely and using a cached
dataset).
We find these bounds to be predictive of real performance.
For example, the $2\times$
speedups correlated with the $2\times$ mean data reduction observed using half
the scan groups, and speedups taper off as they approach the compute limit.

\subsection{System Setup}%
\label{sec:system_setup_full}
We run distributed experiments on a 16-node Ceph~\citep{weil2006ceph}
cluster connected with a Cisco Nexus
 3264-Q 64-port QSFP+ 40GbE switch.
Each node has a 16--core Intel E5--2698Bv3 Xeon 2GHz CPU,
64GiB RAM,
NVIDIA TitanX,
4TB 7200RPM Seagate ST4000NM0023 HDD,
and a Mellanox MCX314A-BCCT 40GbE NIC\@.
All nodes run Linux kernel 4.15 on Ubuntu 18.04, CUDA10, and the Luminous release
(v12.2.12) of Ceph.
We use six of the nodes as Ceph nodes; five nodes are dedicated as storage nodes in
the form of Object Storage Devices (OSDs), and one node is used as a Ceph
metadata server (MDS).
The remaining 10 nodes are used as machine learning workers for the training
process.
This means there is a 2:1 ratio between compute and storage nodes.
We use PyTorch~\citep{NIPS2019_9015} (v1.12) and NVIDIA Apex~\citep{apex}
(v0.1).
We use at 4 to 8 threads to prefetch data in the loader.
As very large datasets (e.g., Petabytes) cannot fit in RAM cache,
our experiments minimize the effects of caching with \texttt{DirectIO} and reduced cache sizes.
The same effect can be observed by simply duplicating any dataset multiple times.
For TensorFlow~\citep{tensorflow2015-whitepaper} experiments, we use a fork of
v2.5 with PyTorch 1.8 and CUDA 11, and we do not utilize \texttt{DirectIO}, since ImageNet
is sufficiently large relative to memory.
The data pipeline is still an ImageNet pipeline, though it uses recently
suggested $160\times160$ training-time crops~\cite{fixing_train_test}, sets parallelism
equal to the number of cores, does not use \texttt{fp16} training, and scales
the number of workers to 20.
For P100 runs, we attach a 150GB operating-system disk and a 150GB or 300 GB
data disk to a \texttt{n1} instance with 16 cores per GPU and less than 66 GiB of
RAM---a custom instance type is required for 32 core runs to prevent caching.

\textbf{Software.}
The experiments and plots in this paper were developed with a number of open source packages.
PyTorch~\citep{NIPS2019_9015},
DALI~\citep{DALI}, Tensorflow~\cite{tensorflow2015-whitepaper}, and Python3~\citep{Numpy}
were used throughout the experiments.
JSK~\citep{JSK} was used for encoding and provided insight into the JPEG EOI truncation
trick.
Various SciPy~\citep{2020SciPy-NMeth} libraries were used for both experiments and plotting,
including Numpy~\citep{Numpy,Numpy2}, Seaborn~\citep{Seaborn}, Matplotlib~\citep{Matplotlib}. 
\subsection{Dataset Details}%
\label{sec:dataset_details_full}%

Our evaluation uses the
ImageNet ILSVRC~\citep{imagenet_cvpr09,ILSVRC15},
CelebA-HQ~\citep{karras2017progressive},
HAM10000~\citep{tschandl2018ham10000},
and
Stanford Cars~\citep{KrauseStarkDengFei-Fei_3DRR2013}
datasets.
Below, we provide further details for each dataset.
\begin{itemize}[leftmargin=*,topsep=0pt,itemsep=-1ex,partopsep=1ex,parsep=2ex]
  \item \textbf{ImageNet:} 
  We use the provided training and validation set from the 1000-way image classification task.%
  \item \textbf{CelebAHQ:}
  CelebA-HQ is a high-resolution derivative of the CelebA
  dataset~\citep{liu2015faceattributes}, which consists of 30k images of celebrity faces with dimension $1024\times1024$.
  The dataset reconstruction is saved in JPEG form, which adds a 75\% compression
  factor by default.
  We use the annotations provided by CelebA to construct a binary classification
    task (``smiling'' vs.\ ``not smiling''), and  split the 30k dataset into 80\%/20\% train/test split.
\item \textbf{HAM10000:}
  We split the HAM10000 dataset randomly 80\%/20\% between train and test.
 This dataset consists of dermatoscopic images of skin lesions (7
    classes),
  and differs from the other datasets in that it is outside the scope of natural images.
\item \textbf{Stanford Cars:}
The Stanford Cars dataset is a fine-grained
classification dataset, since all images are cars, and there are 196 classes
  spread over 16k images (only about 80 images per class).
  As this is a difficult task, we additionally explore how grouping the
  labels into coarse-grained classes affects training.%
\end{itemize}

\textbf{Record and Image Quality Details.}
We provided the dataset size details for the encoded datasets in
the main text and provide further information below.
As the original (e.g., lossless) images are hard to find, we estimate the JPEG quality setting of the training set with ImageMagick using \texttt{identify -format '\%Q'}.
The JPEG quality setting determines the level of frequency quantization outlined
in the main text.
Intuitively, one would expect that higher quality JPEG images could allow more
aggressive PCR compression rates for a fixed resolution,
since each image has more redundant
information on average.
ImageNet and HAM10000 both have high quality images.
CelebAHQ has lower quality images, but they are downscaled
to $256\times256$ for training purposes, which increases the information density
in the image (e.g., blurry images can be made to appear less blurry by
downsampling), a fact exploited in prior work~\citep{yan2017customizing}.
It's worth noting that CelebAHQ is derived from CelebA, which is already noted
to be full of compression artifacts~\citep{karras2017progressive}, and thus
careful post-processing was needed for the creation of the $1024^2$ images.
Cars is neither high JPEG quality or large resolution.
Under-compressing images (perhaps at high
resolution) during the initial
JPEG compression may allow for a larger range of viable scan groups.

\textbf{Dataset Creation Times.}
We provide bandwidth-optimized record baselines in
Figure~\ref{fig:encode_times}, where we re-encode the
images using a statically-chosen level of compression.
By default, we use 4 worker threads per core, which totals 128 threads
processing the conversion process.
We use 50\%, 75\%, 90\% and 95\% JPEG quality as the static levels of
compression to reduce dataset size at a \textit{fixed} level of fidelity.
One caveat is that these quality settings may not necessarily exactly map to the
PCR scan groups used (e.g., in terms of metrics such as MSSIM); however,
these settings are within the typical range of JPEG quality used in practice.
We note that the conversion times do not vary significantly by quality; we
observe a maximum difference of less than 16\% between 50\% and 95\%
quality.

It is worth noting that re-encoding images compounds with the original JPEG
compression, so the re-encoded image quality may be lower than the
quality obtained if images were encoded in their original lossless form.
In fact,
we note that we observe \textit{larger} image sizes with additional
compression (e.g., CelebAHQ increases from 2.1GiB to 2.6GiB with $90\%$
compression), since the multiple rounds of compression (i.e., JPEG generation loss)
induce artifacts, which
are hard to compress.
Thus, static compression may, counterintuitively, decrease quality and increase
file size (and thus bandwidth).
This is in contrast to PCRs, which losslessly convert the JPEG images into a
progressive format, allowing \textit{dynamic} access to the level of
fidelity without the complications of generation loss.

Both the static compression method of dataset bandwidth reduction and the
PCR method can take considerable encoding time, since the encoding time scales
proportionally to the dataset size.
We observe that the PCR method is competitive ($1.13\times$ to $2.05\times$)
to that of any of the static compression levels in terms of total time.
When multiple static compression levels are utilized, the \textit{sum} of each
of their encoding costs is paid.
In contrast, PCRs avoid having to re-encode a dataset at multiple fidelity levels, and,
therefore, they can save both storage space and encoding time.
Although the exact conversion times are dependent on implementation, hardware,
and the dataset,
they can be in the range of one hour of
compute time per $100$ GB\@.

\begin{figure*}[h]
  \centering
  \begin{subfigure}[t]{0.33\textwidth}
    \includegraphics[width=.99\linewidth]{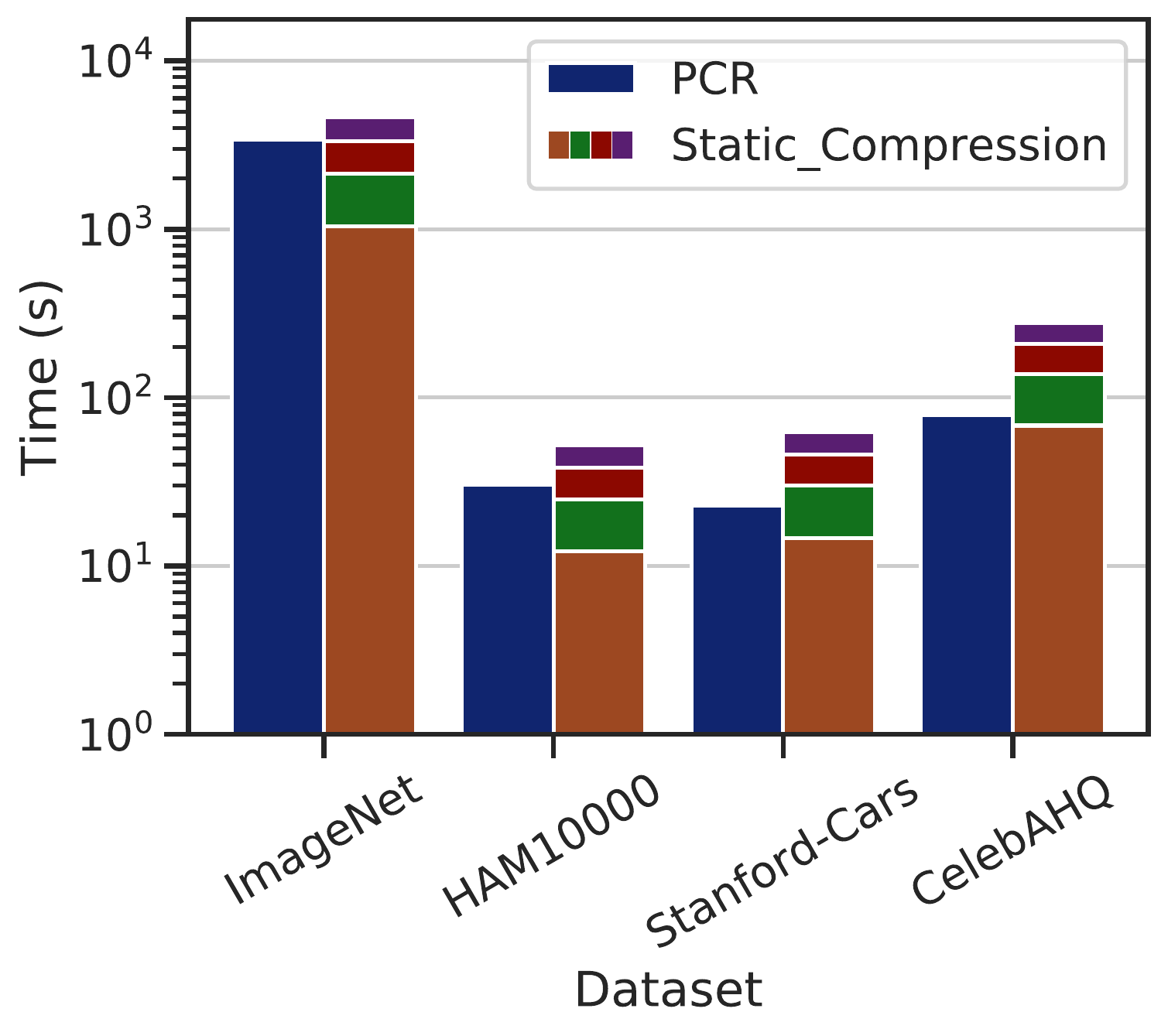}
    \caption{JPEG Conversion Times}
  \end{subfigure}%
  \begin{subfigure}[t]{0.33\textwidth}
    \includegraphics[width=.99\linewidth]{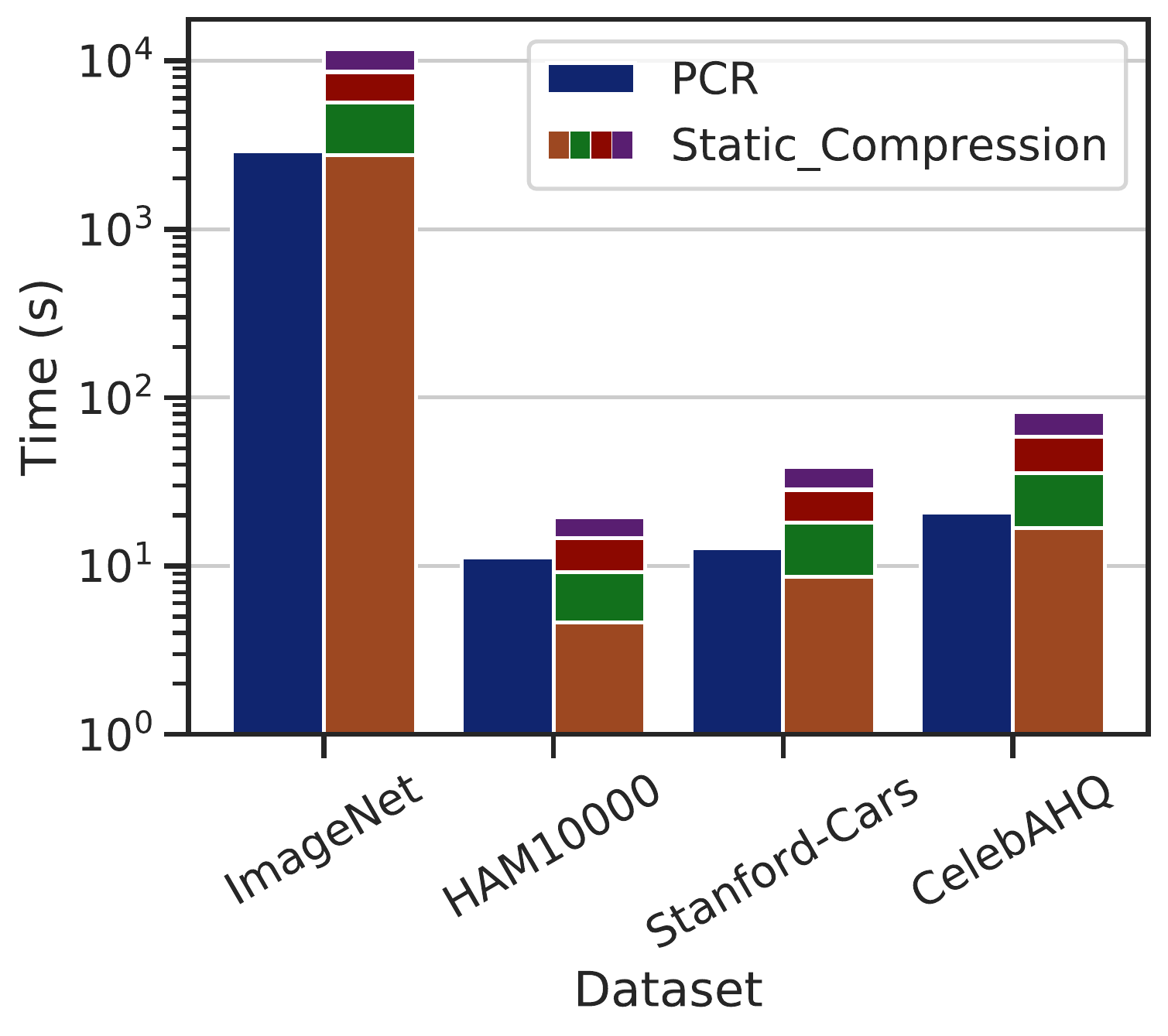}
    \caption{Record Creation Times}
  \end{subfigure}%
  \begin{subfigure}[t]{0.33\textwidth}
    \includegraphics[width=.99\linewidth]{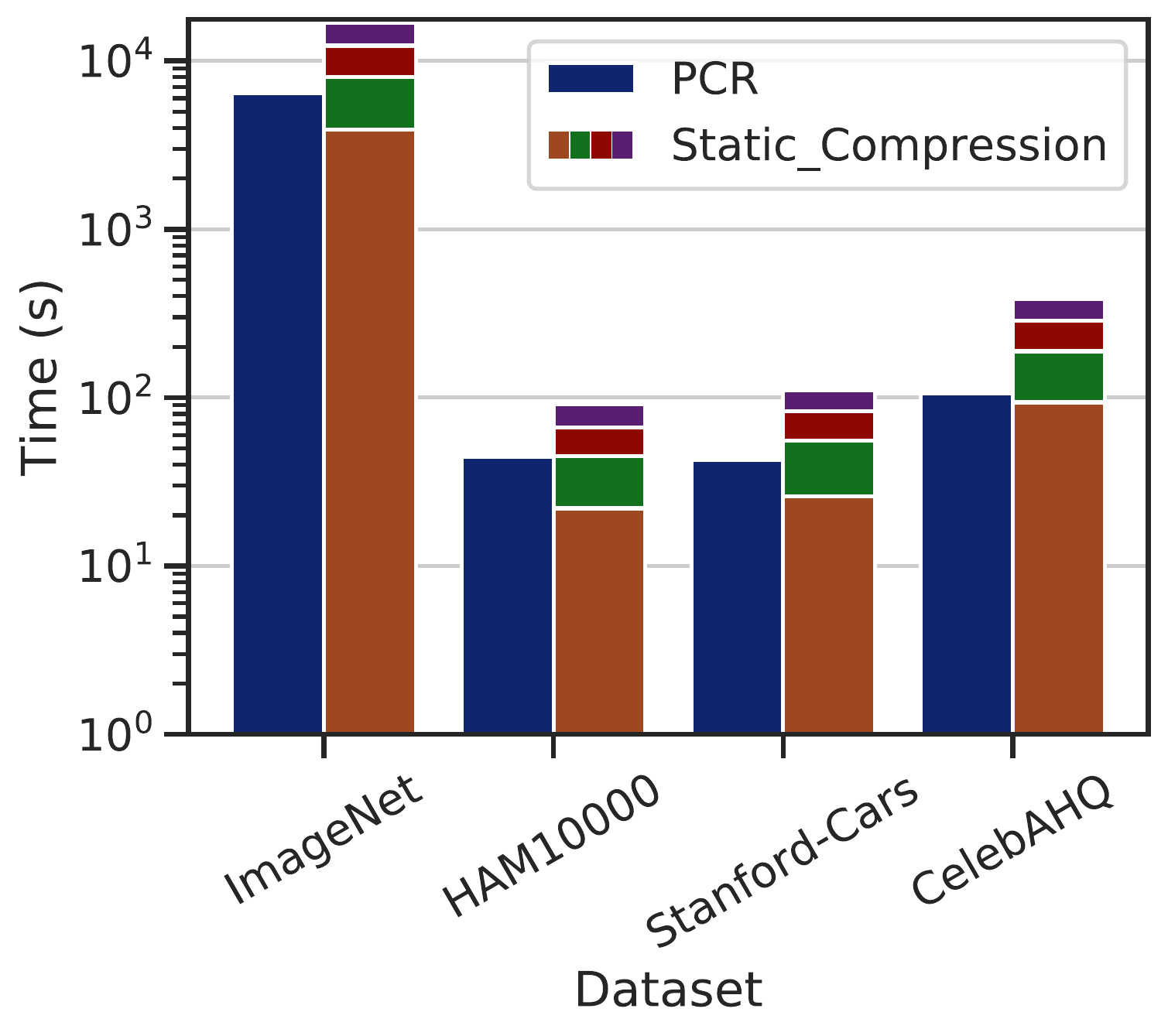}
    \caption{Total Times}
  \end{subfigure}%
  \caption{%
    Encoding times for baseline JPEG re-encoding and the PCR approach.
    The static encodings are 50\%, 75\%, 90\%, and 95\%, and they are stacked in
    that order.
    While the cost to encode PCRs is slightly larger than the cost to encode a
    single baseline record, it is significantly lower than the total cost of
    encoding the dataset at multiple quality levels.
    \textit{Total Time} is the combination of \textit{JPEG Conversion Time} and
    the \textit{Record Creation Time}.
    \textit{JPEG Conversion Time} is the amount of time required to convert the
    JPEG to progressive form or re-encode it to a lower quality JPEG\@.
    \textit{Record Creation Time} is the amount of time required to write the
    images to record format.
  }%
  \label{fig:encode_times}%
\end{figure*}

\textbf{Example Application of PCRs.}
Using the Progressive GAN
repository\footnote{\url{https://github.com/tkarras/progressive_growing_of_gans}}, 4 worker threads, and default settings,
we are able to convert the entire CelebAHQ dataset into TFRecord form in 109 minutes;
this process
generates a total of 118 GiB.
The process is run on a Intel i7-6700K, 16 GiB memory, and a Micron 2TB SSD
(1100 MTFDDAK2T0TBN).
There are 9 records generated in total, consuming 2.9 MiB, 7.4 MiB, 24.7 MiB,
93.8 MiB, 370.3 MiB,
1.5 GiB, 5.9 GiB, 23.6 GiB, and 94.4 GiB, respectively.
The reason for the large space amplification is two-fold: compression
is not used to store the images, and each record corresponds to a different power of two
resolution.
Meanwhile, PCRs take less than 6 minutes to make the conversion, and they only
produce 2.6 GiB.
Using $100\%$ quality JPEG compression resulted in 123 minutes of processing time
and a $4\times$ space amplification.
In this case, the 9 records are of size: 14 MiB, 14.7 MiB, 20 MiB, 45.6 MiB,
132 MiB, 422.8 MiB, 1.4 GiB, 4.4 GiB, and 7.0 GiB\@.
75\% compression took 117 minutes and $1.5\times$ space amplification, and the
process created records of sizes 11.2 MiB, 11.6 MiB,
13.7 MiB, 21.0 MiB, 41.8 MiB, 103.8 MiB, 305.6 MiB, 959.9 MiB, and 2.7 GiB (3.9 GiB total).
Thus, applications resorting to static compression and encoding schemes may
create a $1.5\times$ space amplification in a good case
or $40\times$ space
amplification in a worst case.
PCRs minimize space amplification as they only need one copy of the dataset for
various task requirements.

\textbf{MSSIMs.}
Figure~\ref{fig:PCR_Conversion_MSSIM}
shows the scan MSSIM results, respectively, for all datasets.
MSSIM decreases as the scan gets lower, as expected.

\begin{figure*}
  \centering
  \begin{subfigure}[t]{0.25\textwidth}
    \includegraphics[width=.99\linewidth]{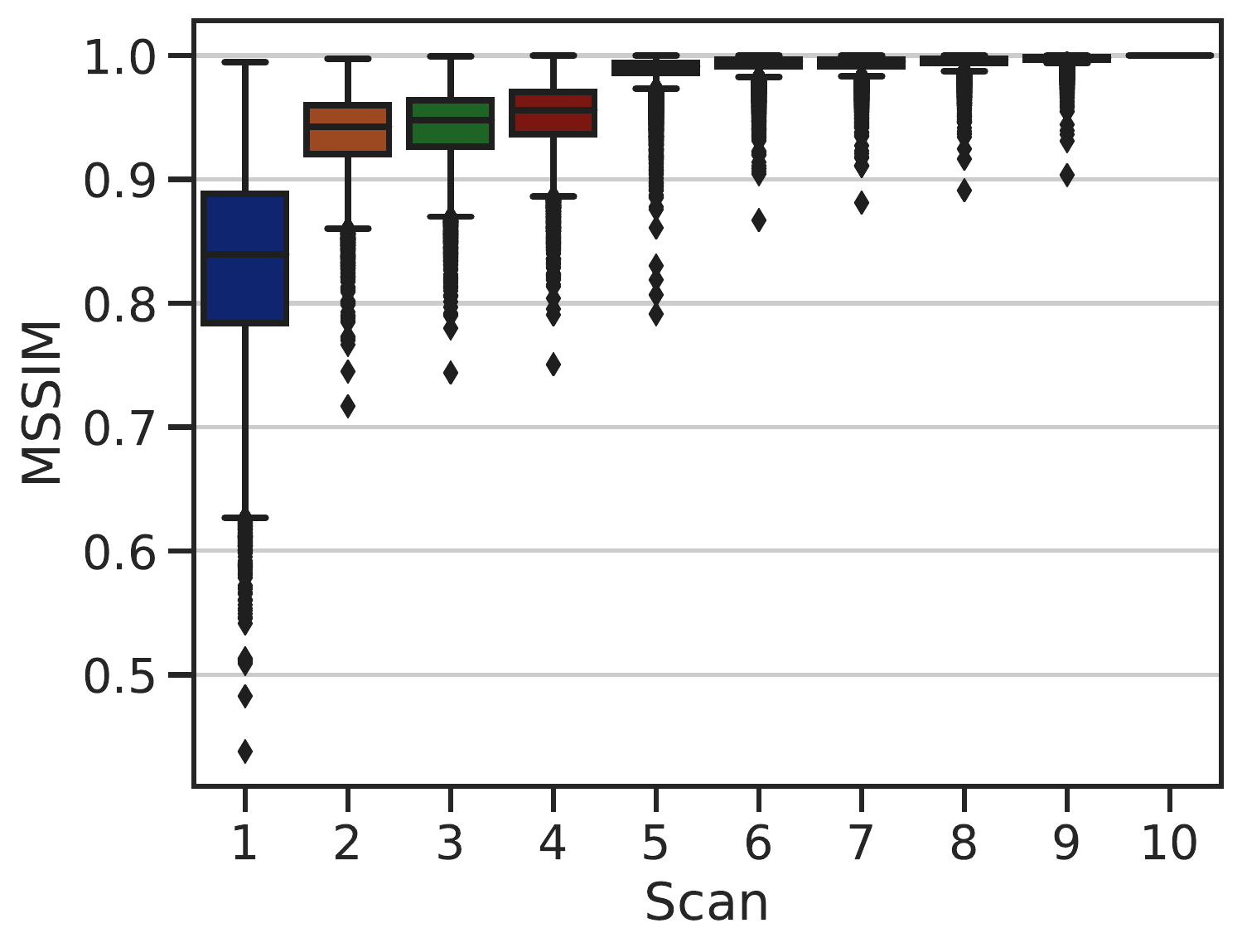}
    \caption{ImageNet}
  \end{subfigure}%
  \begin{subfigure}[t]{0.25\textwidth}
    \includegraphics[width=1.012\linewidth]{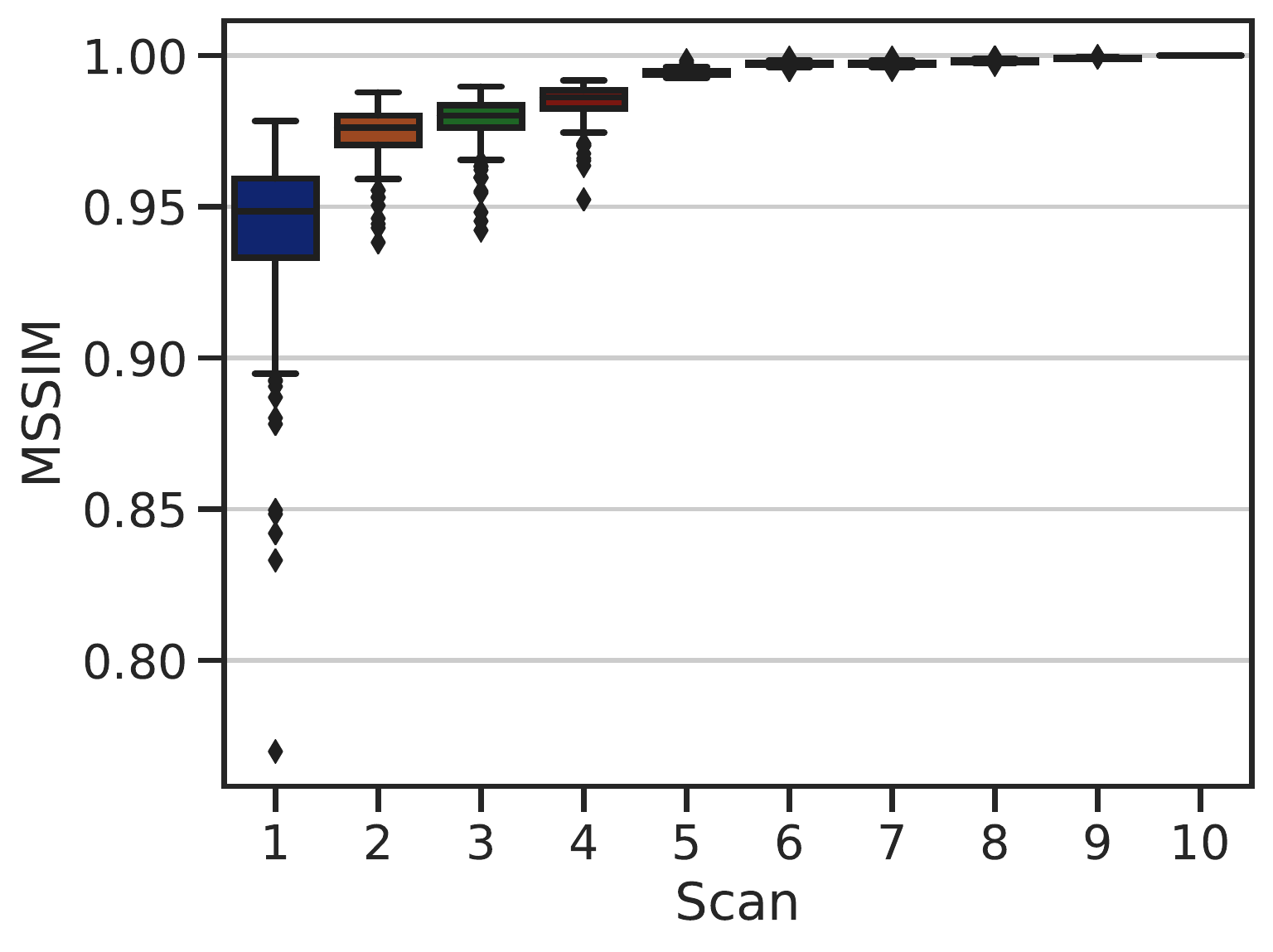}
    \caption{HAM10000}
  \end{subfigure}%
  \begin{subfigure}[t]{0.25\textwidth}
    \includegraphics[width=1.012\linewidth]{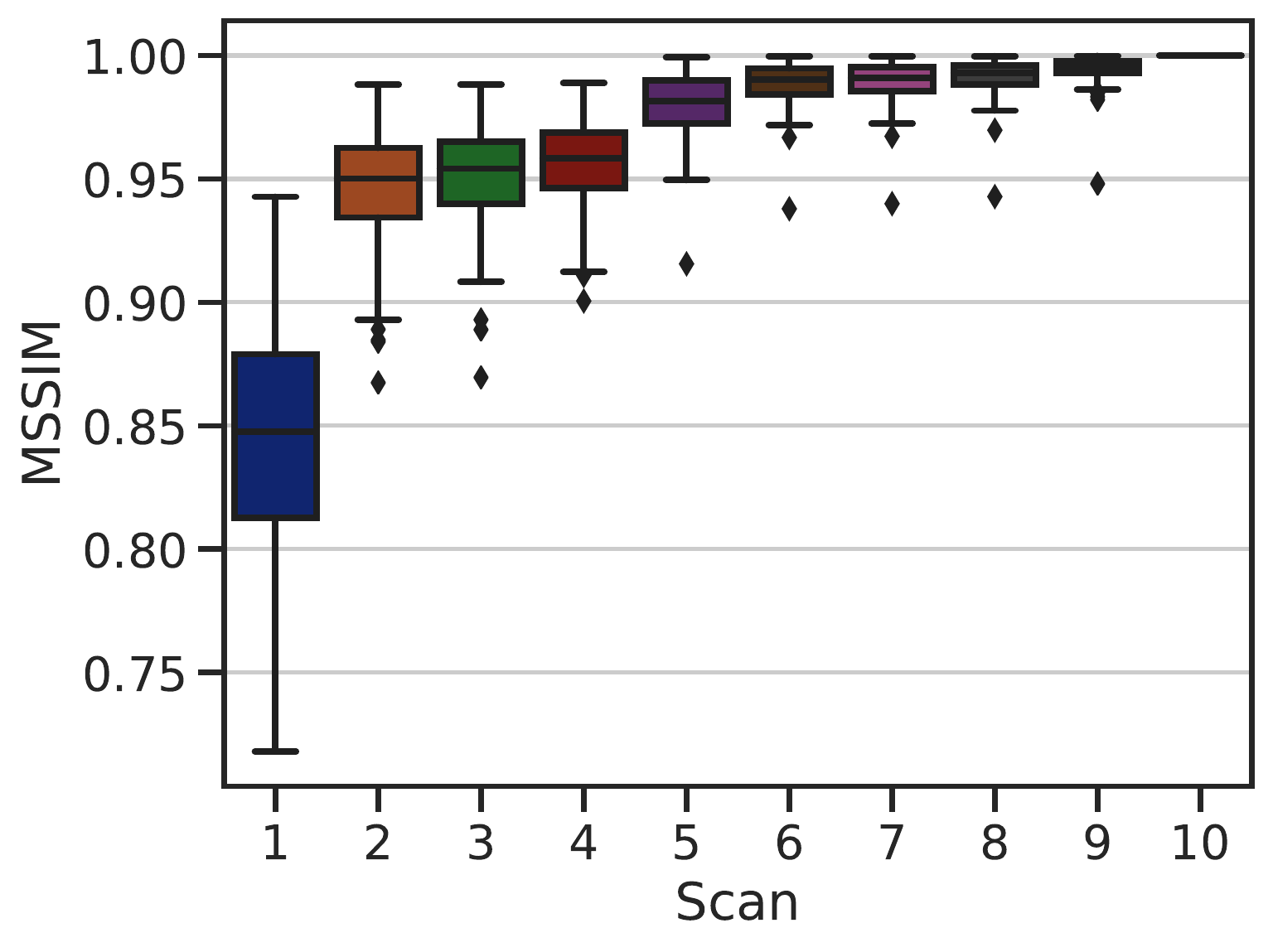}
    \caption{Stanford Cars}
  \end{subfigure}%
  \begin{subfigure}[t]{0.25\textwidth}
    \includegraphics[width=1.013\linewidth]{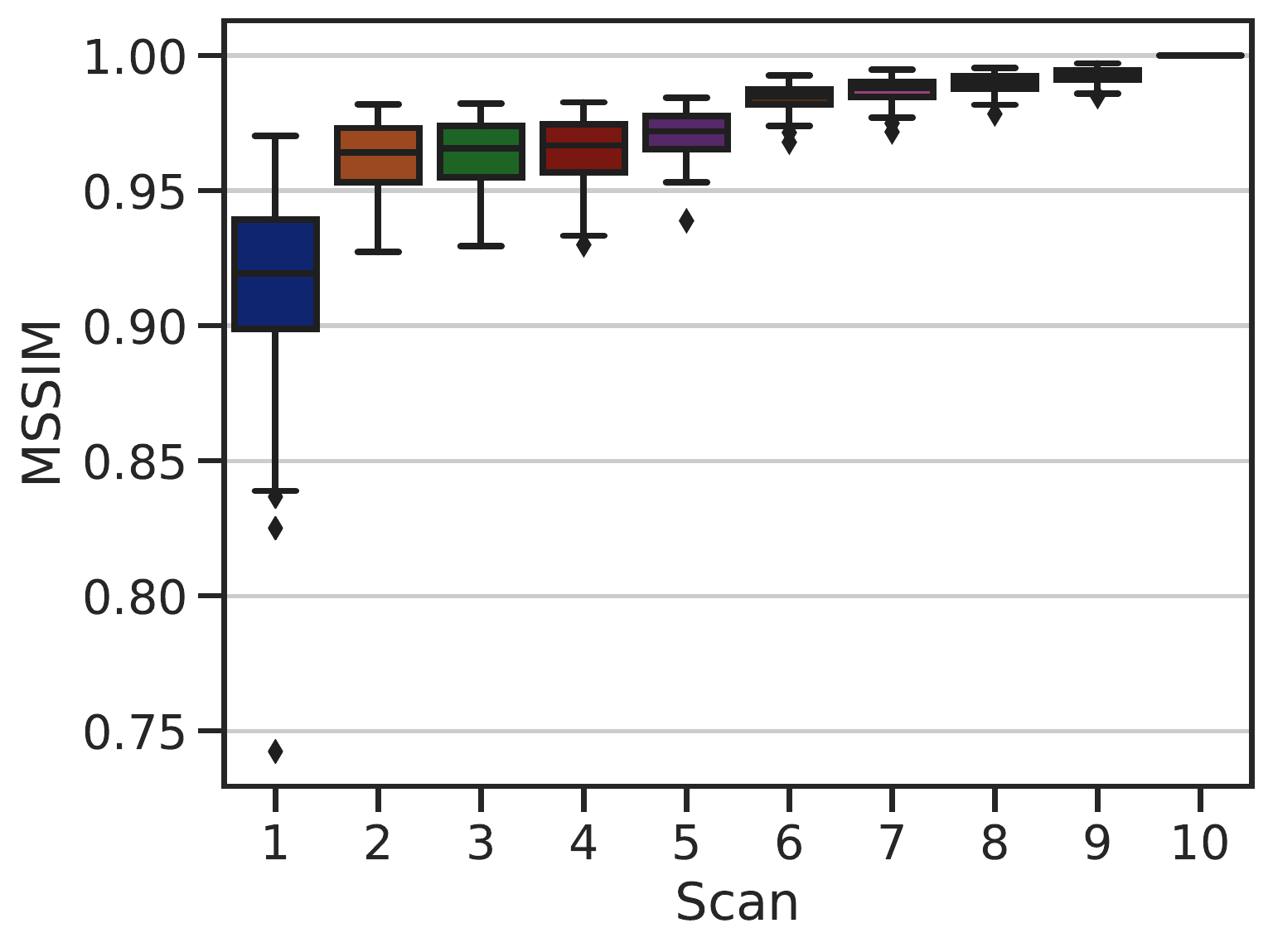}
    \caption{CelebAHQ}
  \end{subfigure}%
  \caption{%
    The reconstruction quality (measured with MSSIM) of using various amounts of
    scans.
    Perfect reconstruction has an MSSIM of $1$.
    Higher scans have diminishing fidelity returns.
    Interquartile ranges are shown.
  }%
  \label{fig:PCR_Conversion_MSSIM}%
  \vspace{-10pt}
\end{figure*}

\subsection{Experiment Setup}%
\label{sec:experiment_setup}
Below we describe details of how the experiments were run, such as hardware
characteristics and software configurations.

\textbf{Benchmark Cluster Speeds.}
As noted in the main text, we utilize a NVIDIA TitanX Graphics Processing Unit
(GPU) on each node for the
model training.
This GPU allows us to train
ResNet-18 at
450 images per second and ShuffleNetv2
at 750 images per second.
With a cached, decoded dataset of $224\times224$ resolution images,
we achieve a cluster-wide
4200 images per second for ResNet-18
and
7000 images per second for ShuffleNetv2.
ImageNet images are around 110kB on average;
with 10 GPUs, the cluster can consume $450\ \si{megabytes/s}$ (ResNet-18) and
$750\
\si{megabytes/s}$ (ShuffleNetv2) of storage system bandwidth.

\textbf{Reader Microbenchmarks.}
To highlight that PCRs can be implemented efficiently, we demonstrate how fast
PCR-encoded images can be read without any further data pipeline processing.
To do so, we instantiate a PCR loader for all scans for progressively encoded
PCR records as well as the single scan of baseline records.
We show the resulting images per second in Figure~\ref{fig:loader_scans_vs_throughput} for
the CelebAHQ dataset.
It's worth noting that regardless of encoding (i.e., progressive JPEG or
baseline JPEG) or scan number, all reads saturate the drive with 8 threads.
The test is run on a Intel i7-6700K, 16 GiB memory, and a Micron 2TB SSD (1100 MTFDDAK2T0TBN).
We use 1000 minibatches per scan to get an accurate measurement and records have 1024 images.
They do this while utilizing less than 1 core of CPU time (roughly
$75\%$ of single thread's system time), since
most of the time is waiting for IO requests to complete.
No reader implementation can go faster than these read rates as the IO device is already a
bottleneck.

We note that, as predicted by Theorem~\ref{thm1}, the speedup relative to the
baseline is simply the ratio of the mean data sizes for the two scan groups,
which can be found in the main text.
As the number of scans is increased, the number of
bytes read per image is increased, and thus the throughput in images per second
is correspondingly decreased.
Baseline JPEG performs within 4\% of scan 10 due to baseline images being within
5\% of the size of progressive images in practice.
Progressively encoded PCRs, baseline encoded PCRs, and TFRecords are all about
90MB\@.
Similarly, all dataset directories are 2.1GB with 24 records.
Thus, as long as bandwidth is fixed (which effectively depends on caching, the parallelism of data reads, and the
underlying storage system), PCRs increase throughput until the workload becomes
compute bound.

\begin{figure*}
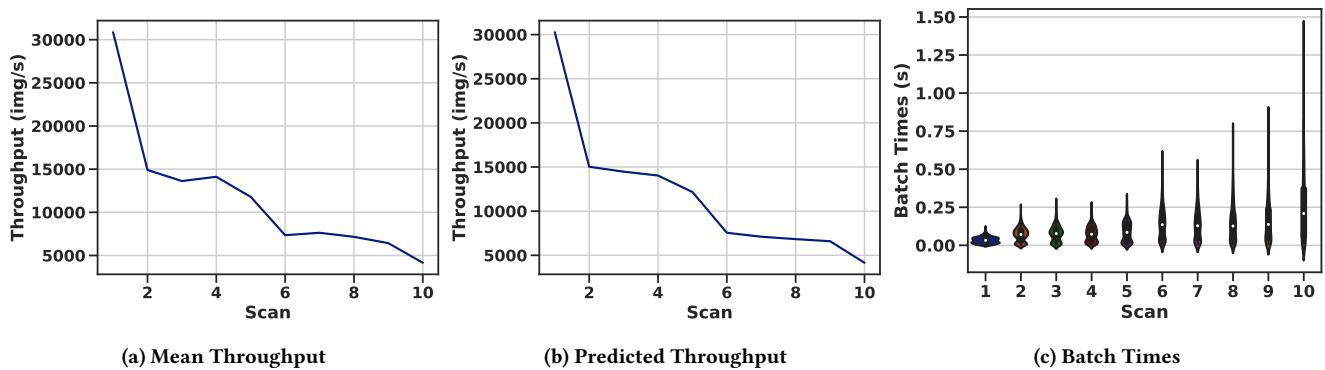

  \centering
  \begin{subfigure}[t]{0.33\textwidth}
    \leftRightCrop{0.0}{test_loader_plt_nscans_vs_throughput.pdf}{1.0}{0.0}
    \caption{Mean Throughput}
  \end{subfigure}%
  \begin{subfigure}[t]{0.33\textwidth}
    \leftRightCrop{0.0}{PCR_Inspector_Sizes_celebahq_rate.pdf}{1.0}{0.0}
    \caption{Predicted Throughput}
  \end{subfigure}%
  \begin{subfigure}[t]{0.33\textwidth}
    \leftRightCrop{0.0}{test_loader_plt_batchi_vs_batchtimes_violin.pdf}{1.0}{0.0}
    \caption{Batch Times}
  \end{subfigure}%
  \caption{%
    \textbf{Left:}
    PCR Loader with 8 threads reading CelebAHQ images from a 400MB/s SSD\@.
    Bandwidth utilization saturates the drive for all scans.
    Baseline encoded JPEG images are read at 4290 images/sec, which is within 4\% of the scan
    10 rate of 4150 images/sec.
    We obtain similar results for other datasets and hardware.
    The main factor in system performance is bandwidth utilization prior to
    decoding, thus reading less data increases throughput in images per second.
    \textbf{Middle:}
    The predicted throughput using the ratios of mean scan sizes to extrapolate
    from scan 10 rates.
    Predictions closely match empirical throughput.
    \textbf{Right:}
    The empirical corresponding batch times for each scan.
    Higher scans cause latency spikes for batches since the drive is saturated
    and batch requests must wait for other requests to finish.
    Latency spikes lead to lower aggregate throughput.
  }%
  \label{fig:loader_scans_vs_throughput}%
\end{figure*}

\begin{figure}
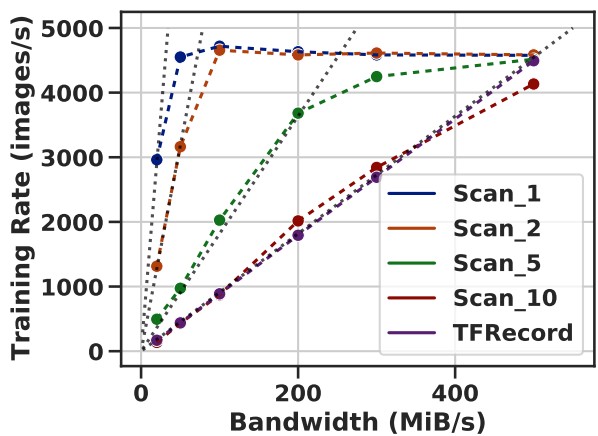

  \begin{center}
  \leftRightCrop{0.0}{shufflenet_monitor_data.pdf}{1.0}{0.0}
  \end{center}
  \caption{%
      The training rate of a 10-node \texttt{TitanX} GPU cluster with a
      ShuffleNet workload using
      PCRs (the scans) and TFRecord.
      The throughput of the training process is dominated by bandwidth until
      the compute limit is reached, which is equivalent to in-memory processing.
      PCRs at scan 10 are approximately the same size as TFRecord,
      and thus have similar performance.
      The training rates predicted from bytes/image calculations are shown, and
      they are a close fit until the compute limit is hit.
    }%
  \vspace{-1pt}%
  \label{fig:predictedThroughput_shufflenet}%
\end{figure}

\subsection{ImageNet ShuffleNet Rates using \texttt{tf.data}}
For completeness, we pair the rates measures using the \texttt{tf.data} loader
on ResNet-18 with those of
Shufflenetv2.
We use the same methodology as in the ResNet-18 Figure, where we
benchmark for 7 minutes to obtain an estimate of the training rate.
Results are shown in
Figure~\ref{fig:predictedThroughput_shufflenet}.
Our experiment setup differs slightly from the others in that we use \texttt{tf.data}
rather than DALI for the data pipeline, and we also do not utilize \texttt{FP16}
training.
The trends are close to the bandwidth bounds predicted by the dataset size.

\subsection{Tuning PCRs: The Scan Group Parameter}%
\label{sec:tuning}%

\subsubsection{Dynamic Tuning}
While PCRs are stable as hyperparameters, it is also possible to tune the
  hyperparameter at runtime.
In particular, we find that a consistently effective method involves measuring the
gradient of the loss with respect to each scan group and comparing that to the
gradient of the loss with respect to the true data.
We choose to use the cosine distance between these vectors
To allow the model to warm up (and get accurate measurements for the scan
groups), training starts at scan 10 with an initial tuning at epoch 5.
The gradient similarity is set to be at least 90\% to accept the scan group.
The result on training rates can be seen in
  Figure~\ref{fig:dynamic_celeba_rates}.
We have used this technique to autotune ImageNet and the other datasets while
retaining accuracy, though ImageNet sees the largest benefit due to its long
runtime.

\begin{figure*}
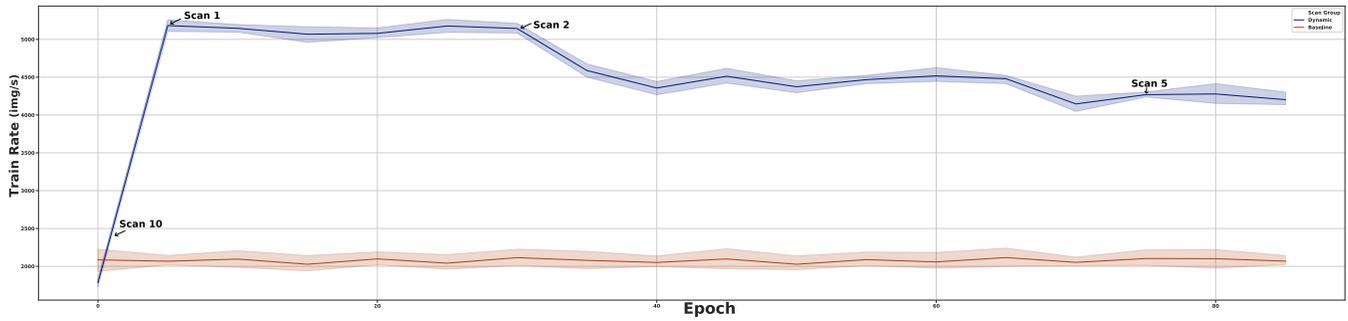

  \centering
  \leftRightCrop{0.0}{celeba_scan_performance_shufflenet_orca_acc_facegrid_epoch_rate_single_5_time_ann.pdf}{1.0}{0.0}
  \caption{%
    Epochs vs.\ training rate (speed in images/second) of a CelebA ShuffleNet run using dynamic tuning
    with cosine distance and the static, baseline training.
    Dynamic training starts at a high scan to initialize training, after which
    scan 1, 2, 5 are used, respectively.
    Static rate remains at scan 10, and thus has a slow rate throughout.
  }%
  \label{fig:dynamic_celeba_rates}%
\end{figure*}

\subsection{Complete Experimental Plots}%
\label{sec:additional_experimental_plots}
Below, we provide additional experiment plots that
were omitted in the main text.
Figure~\ref{fig:scan_performance_resnet18_orca_acc_time} and
Figure~\ref{fig:scan_performance_shufflenet_orca_acc_time} give the accuracy
over time plots for all datasets.
Figure~\ref{fig:scan_performance_resnet18_orca_loss_time}
and
Figure~\ref{fig:scan_performance_shufflenet_orca_loss_time}
contain the loss over time for the ResNet-18 and ShuffleNetv2 experiments shown
in the main text.
It is worth noting that Top-5 accuracies mirror the Top-1 accuracies trends for ImageNet and Cars.

To measure the effect of compression without accounting for time, we show
accuracy vs.\ epoch plots in
Figure~\ref{fig:scan_performance_resnet18_orca_epoch}
and
Figure~\ref{fig:scan_performance_shufflenet_orca_epoch}.
While compression can itself be viewed as a data augmentation (e.g., removing high
frequency features that can possibly cause overfitting), we notice that it does
not usually improve accuracy.
Rather, most of the gains in time-to-accuracy are from faster image rates.

\begin{figure*}
  \centering
  \begin{subfigure}[t]{0.25\textwidth}
  \includegraphics[width=.98\linewidth]{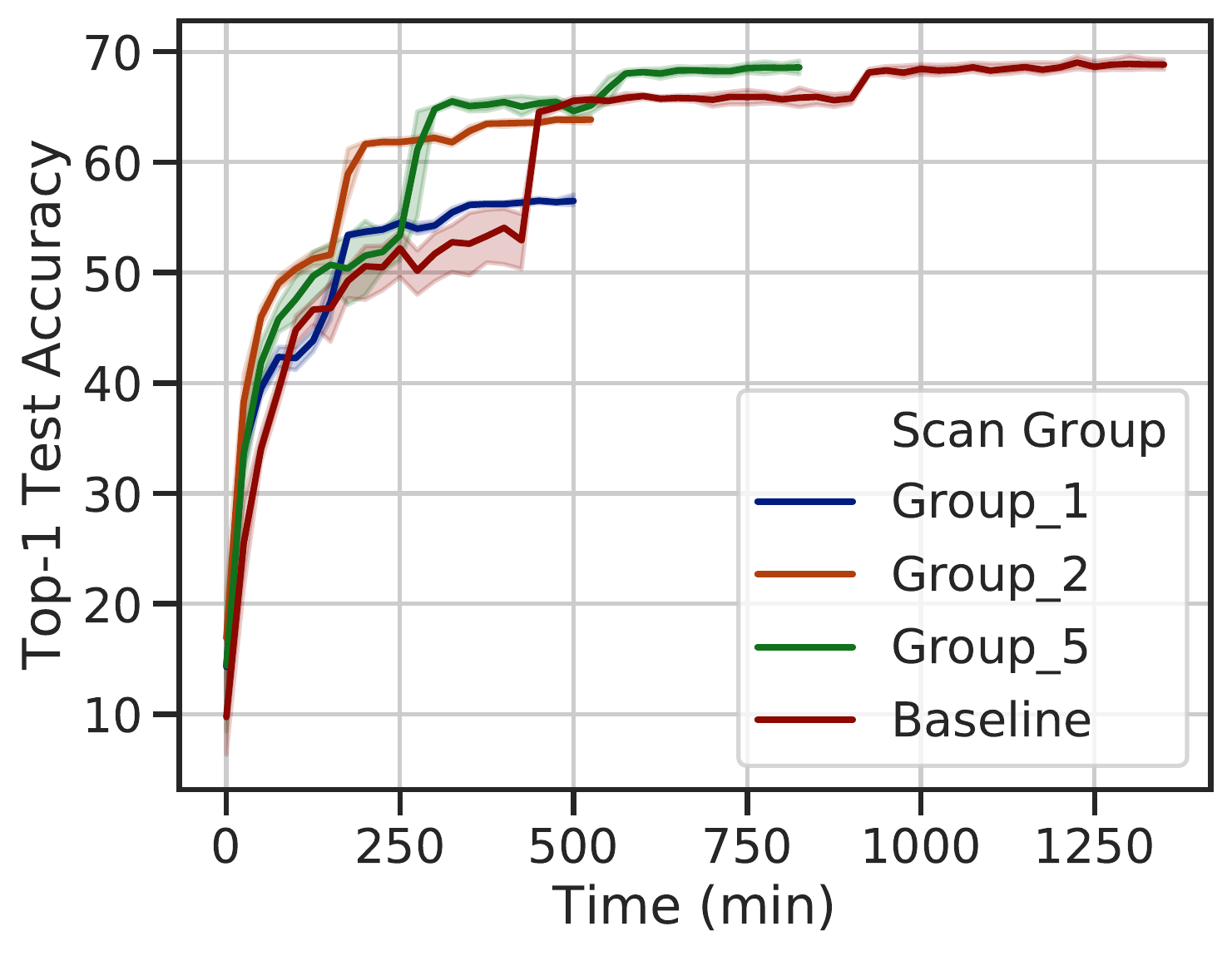}
    \caption{ImageNet}%
  \end{subfigure}%
  \begin{subfigure}[t]{0.25\textwidth}
  \includegraphics[width=.99\linewidth]{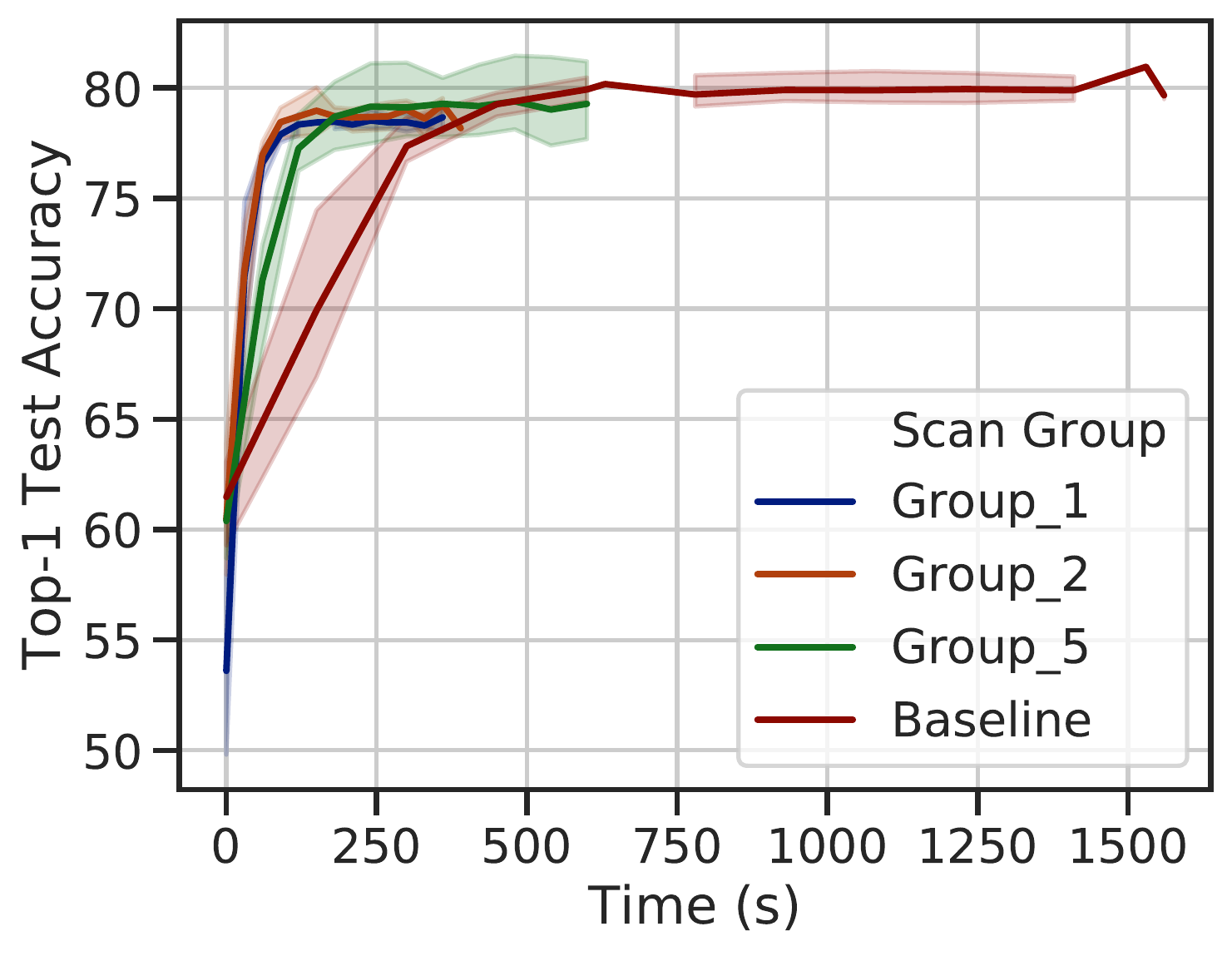}
    \caption{HAM10000}
  \end{subfigure}%
  \begin{subfigure}[t]{0.25\textwidth}
  \includegraphics[width=.99\linewidth]{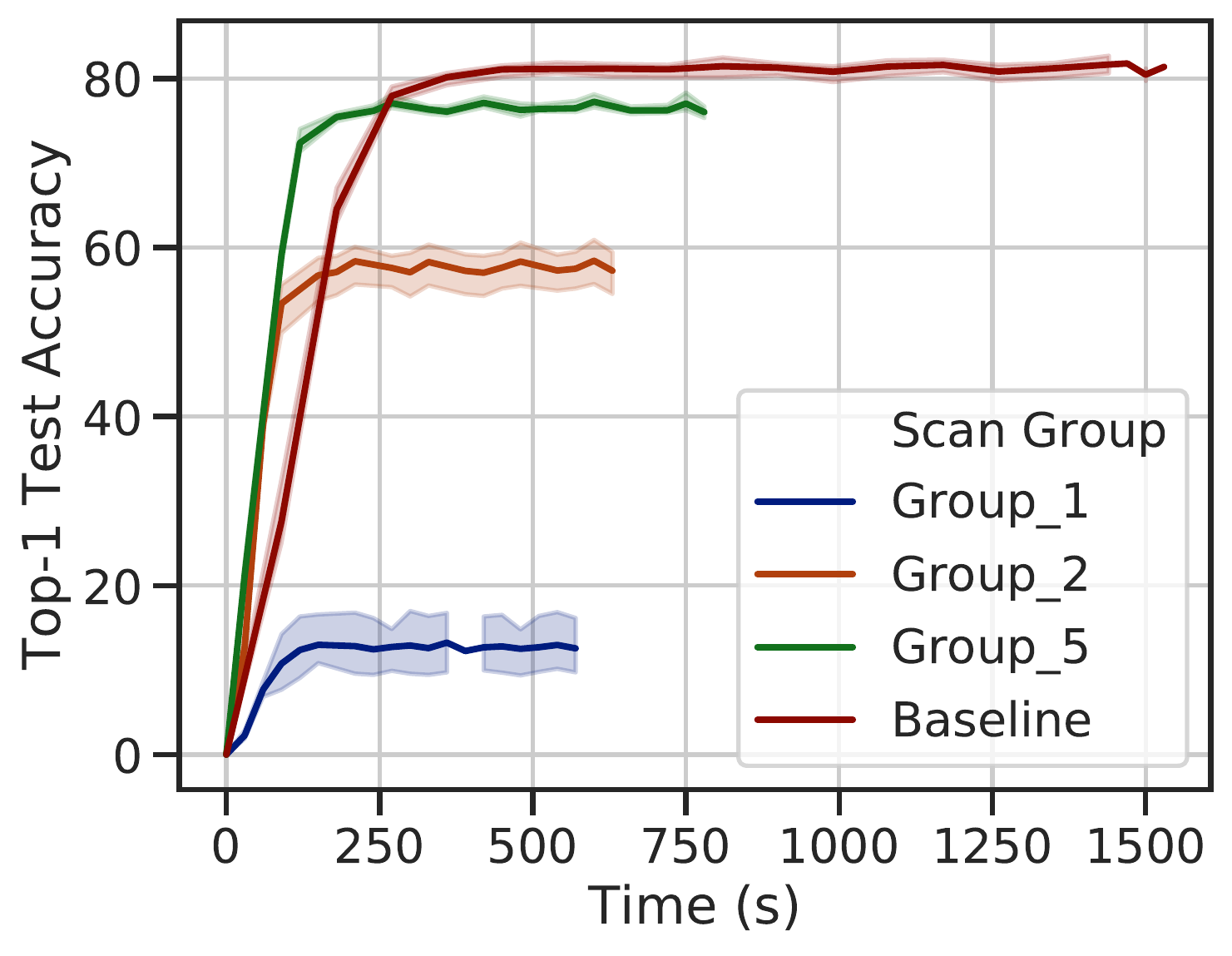}
    \caption{Stanford Cars}
  \end{subfigure}%
  \begin{subfigure}[t]{0.25\textwidth}
  \includegraphics[width=.99\linewidth]{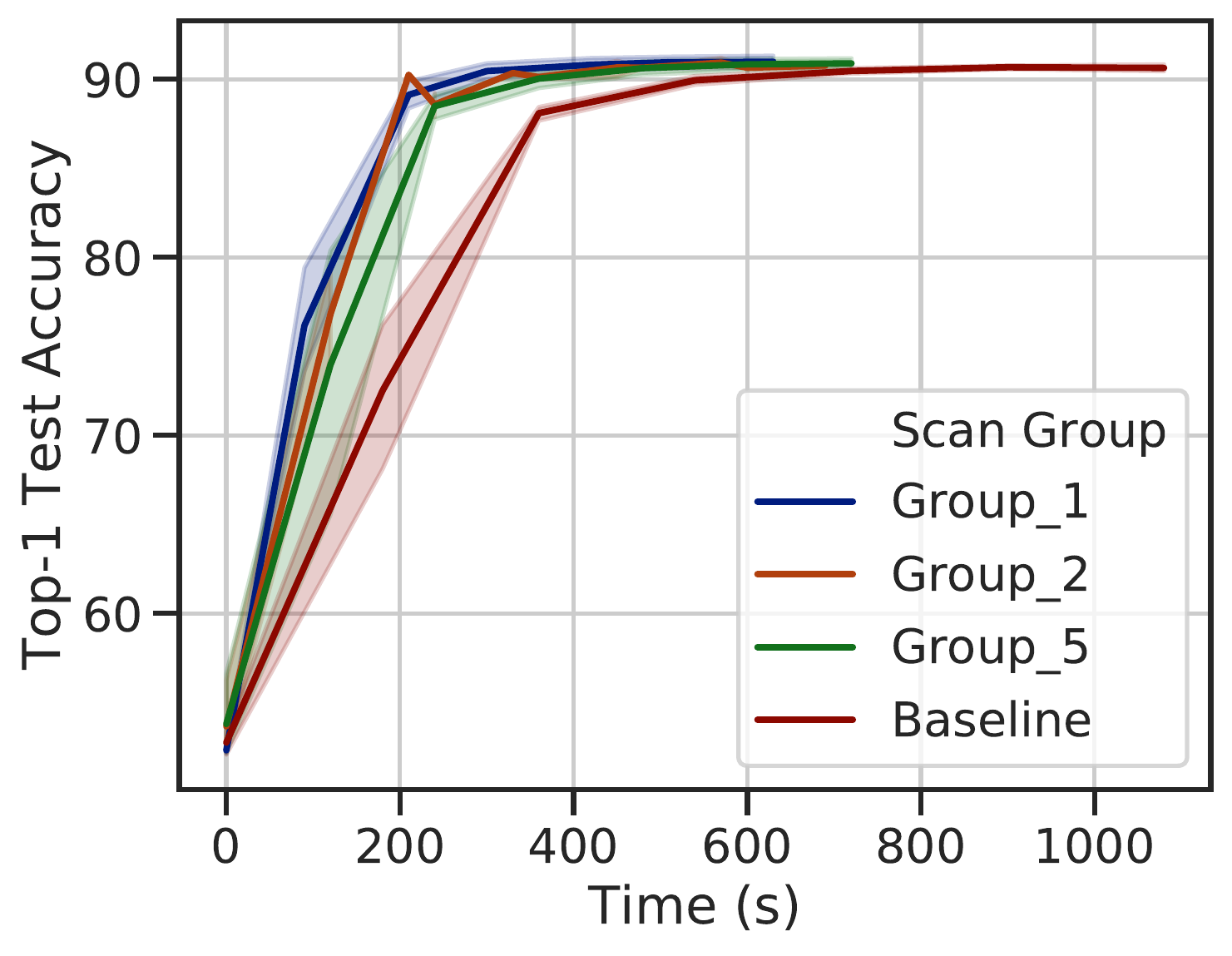}
    \caption{CelebAHQ}
  \end{subfigure}%
  \caption{%
    Top-1 test performance with ResNet18.
    Lower scan groups speed up training by reducing bandwidth.
    Time is the x-axis (seconds) and is relative to first epoch.
    95\% confidence intervals are shown.
    Higher scan groups are less compressed.
  }%
  \label{fig:scan_performance_resnet18_orca_acc_time}%
  \vspace{-10pt}
\end{figure*}

\begin{figure*}
  \centering
  \begin{subfigure}[t]{0.25\textwidth}
  \includegraphics[width=.97\linewidth]{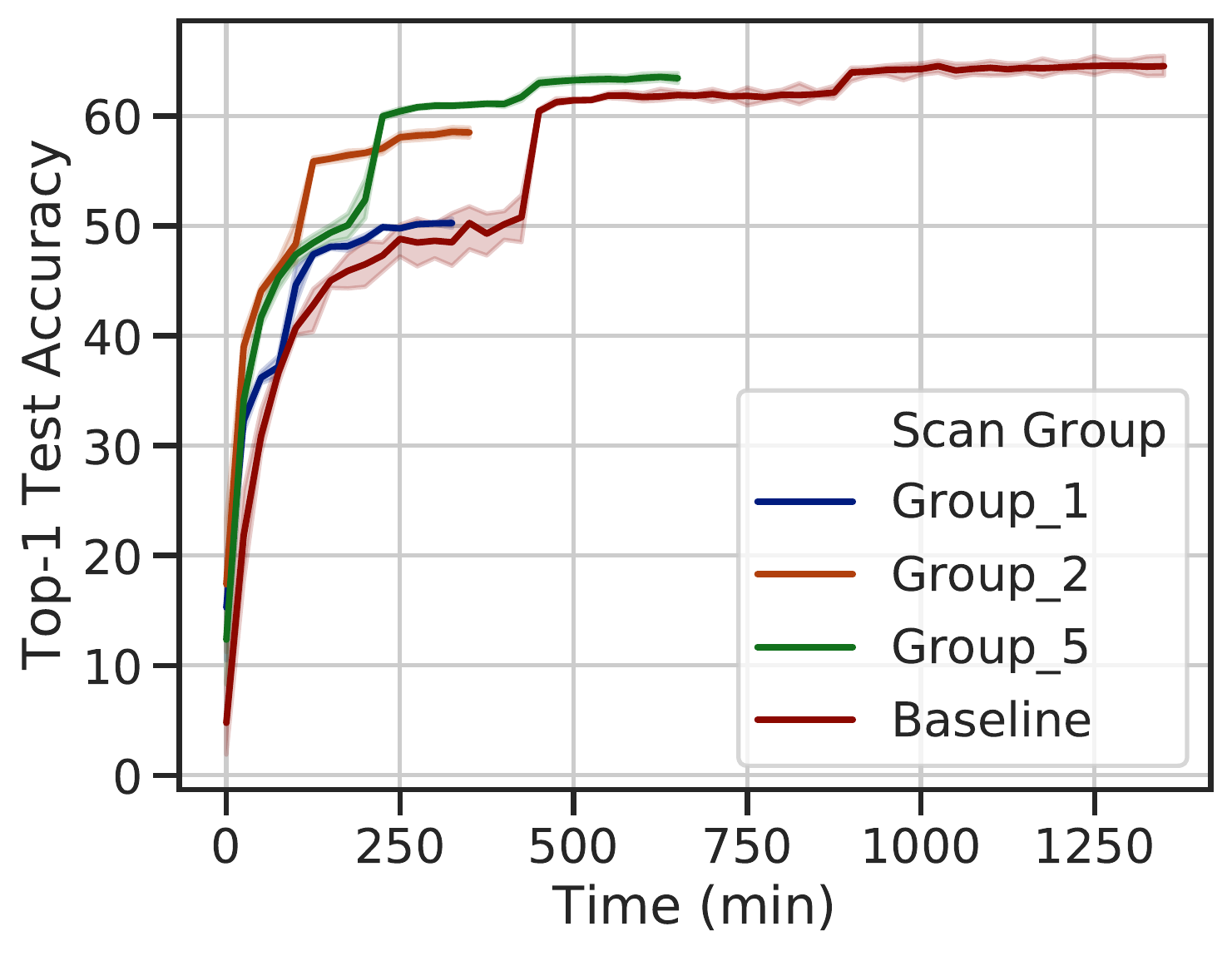}
    \caption{ImageNet}%
  \end{subfigure}%
  \begin{subfigure}[t]{0.25\textwidth}
  \includegraphics[width=1.02\linewidth]{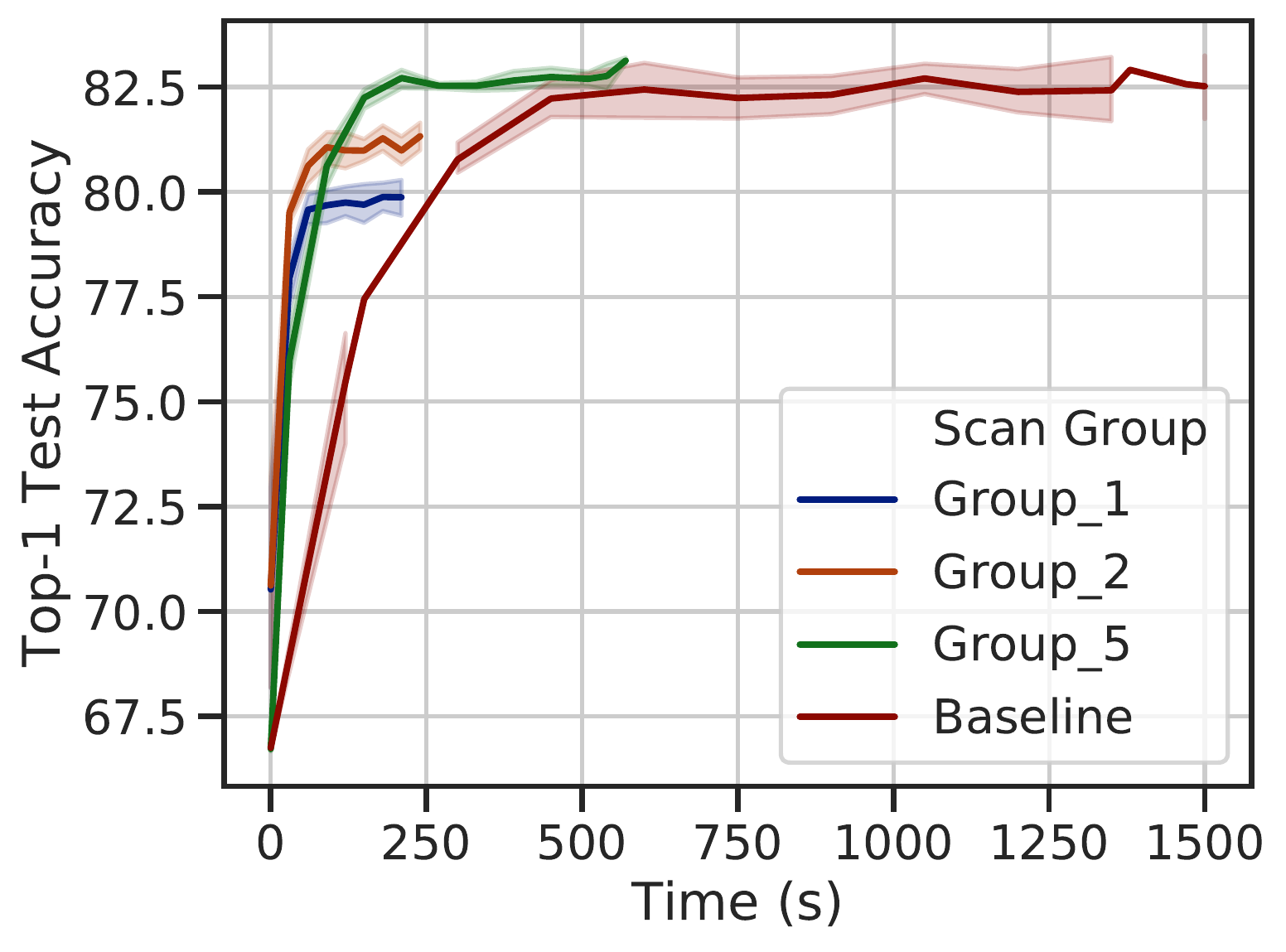}
    \caption{HAM10000}
  \end{subfigure}%
  \begin{subfigure}[t]{0.25\textwidth}
  \includegraphics[width=1.02\linewidth]{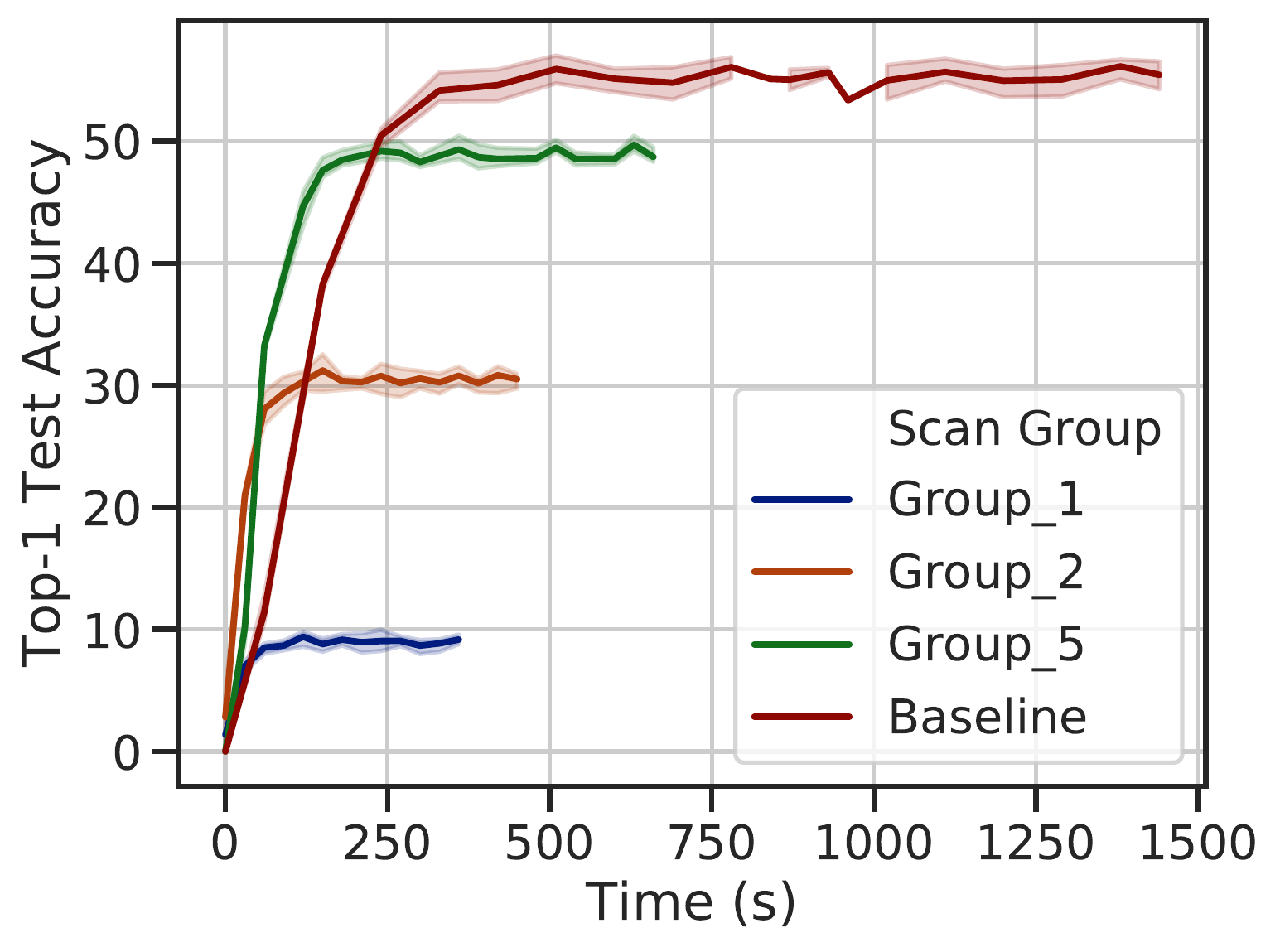}
    \caption{Stanford Cars}
  \end{subfigure}%
  \begin{subfigure}[t]{0.25\textwidth}
  \includegraphics[width=.98\linewidth]{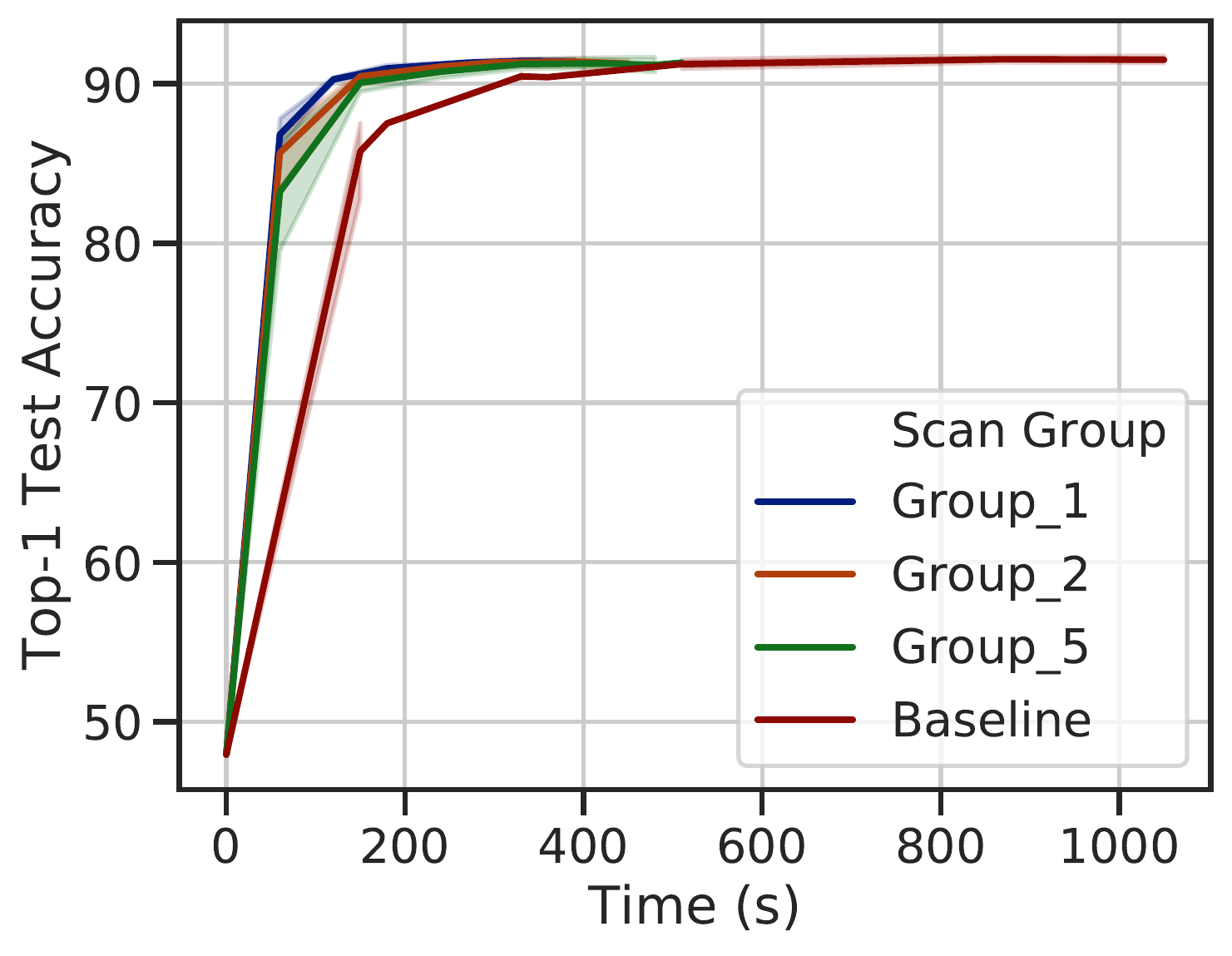}
    \caption{CelebAHQ}
  \end{subfigure}%
  \caption{%
    Top-1 test performance with ShuffleNetv2.
    Lower scan groups speed up training by reducing bandwidth.
    ShuffleNetv2 is more bandwidth bound since it runs faster than ResNet18.
    Time is the x-axis (seconds) and is relative to first epoch.
    95\% confidence intervals are shown.
    Higher scan groups are less compressed.
  }%
  \label{fig:scan_performance_shufflenet_orca_acc_time}%
  \vspace{-10pt}
\end{figure*}

\begin{figure*}
  \centering
  \begin{subfigure}[t]{0.25\textwidth}
    \includegraphics[width=.99\linewidth]{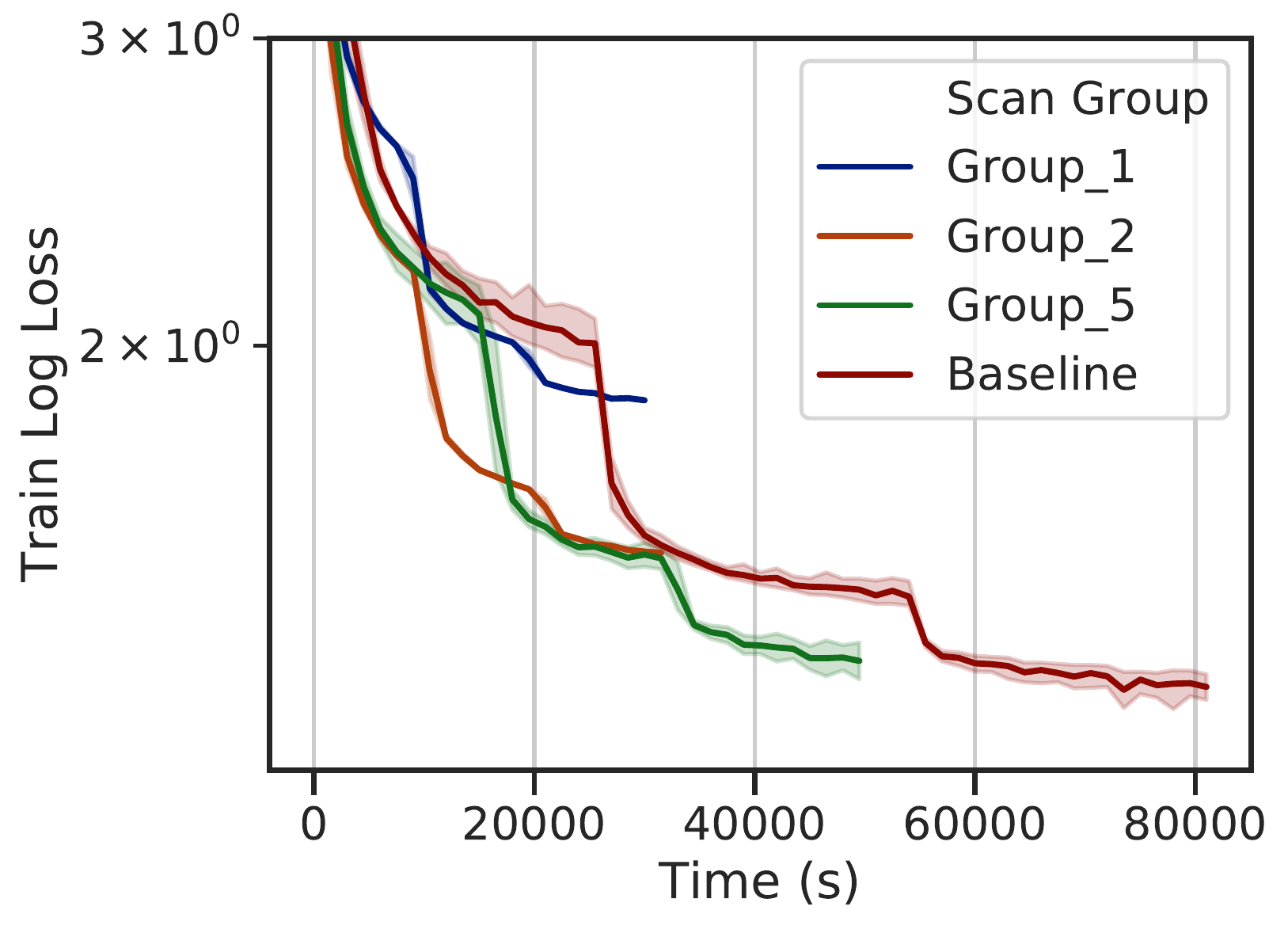}
    \caption{ImageNet}
  \end{subfigure}%
  \begin{subfigure}[t]{0.25\textwidth}
  \includegraphics[width=1.0\linewidth]{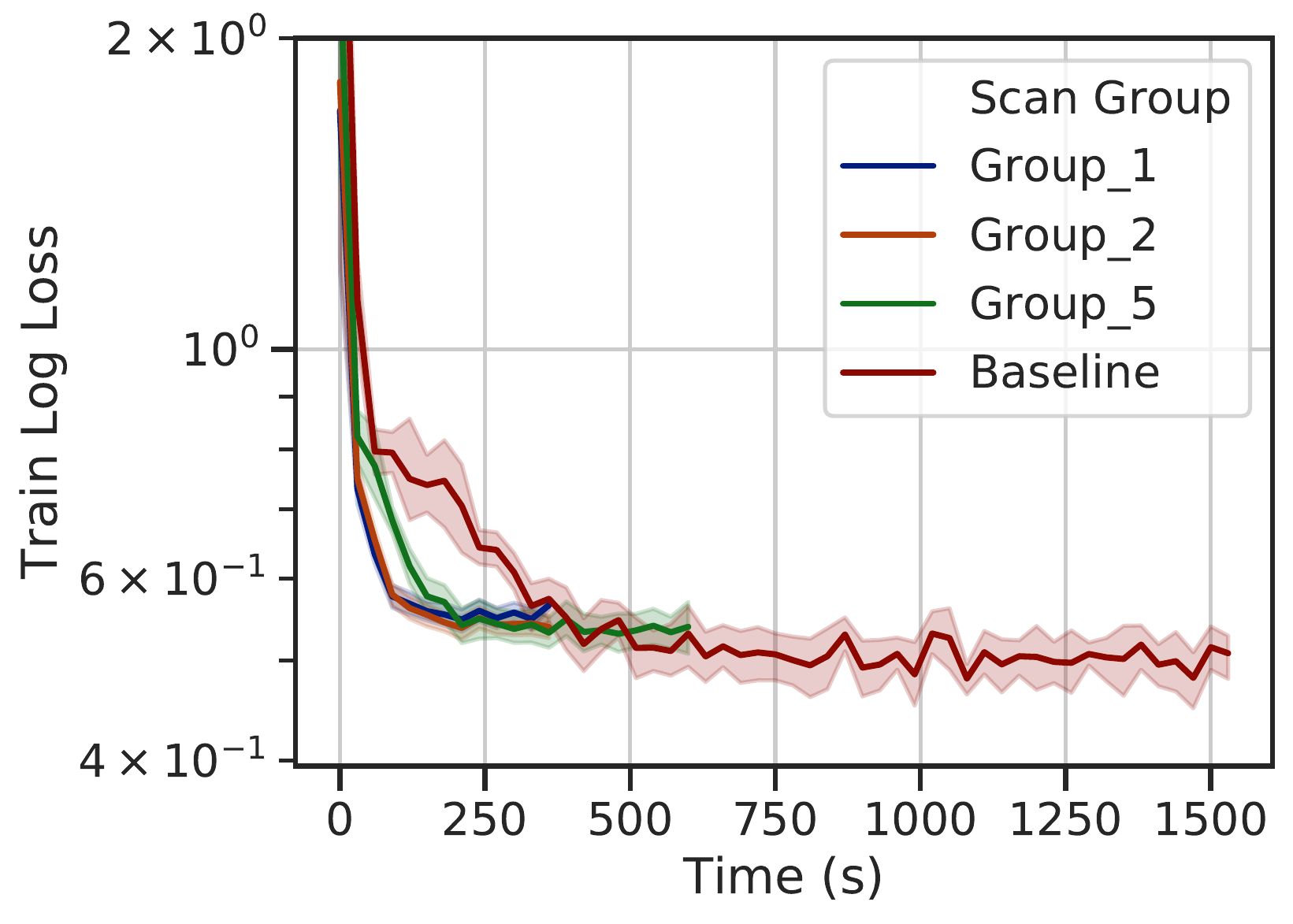}
    \caption{HAM10000}
  \end{subfigure}%
  \begin{subfigure}[t]{0.25\textwidth}
  \includegraphics[width=.99\linewidth]{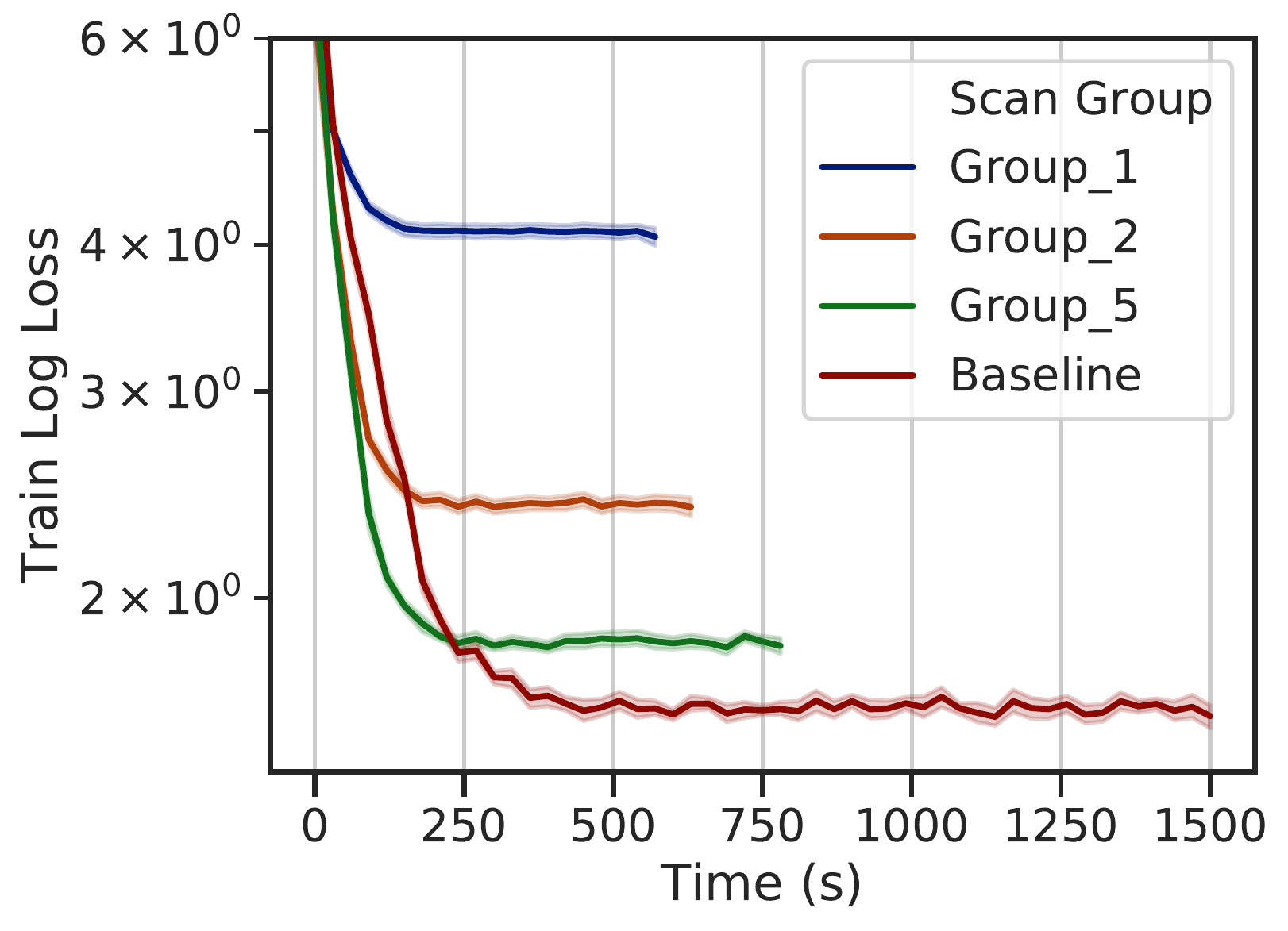}
    \caption{Stanford Cars}
  \end{subfigure}%
  \begin{subfigure}[t]{0.25\textwidth}
  \includegraphics[width=.92\linewidth]{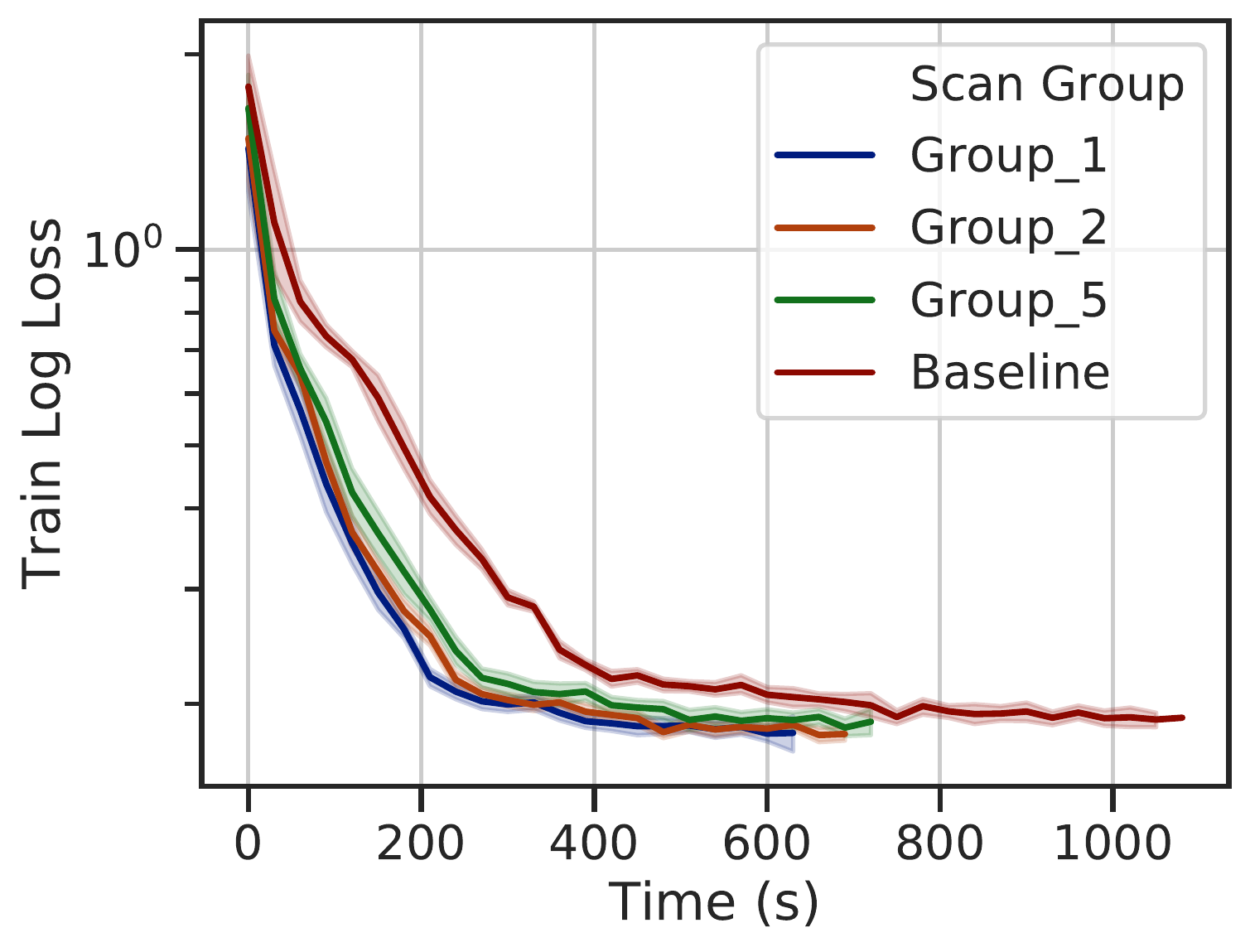}
    \caption{CelebAHQ}
  \end{subfigure}%
  \caption{%
    Training loss with ResNet-18.
    Lower scan groups speed up training, but they may impact loss minimization.
    Time is the x-axis (seconds) and is relative to first epoch.
    95\% confidence intervals are shown.
  }%
  \label{fig:scan_performance_resnet18_orca_loss_time}%
\end{figure*}

\begin{figure*}
  \centering
  \begin{subfigure}[t]{0.25\textwidth}
    \includegraphics[width=.99\linewidth]{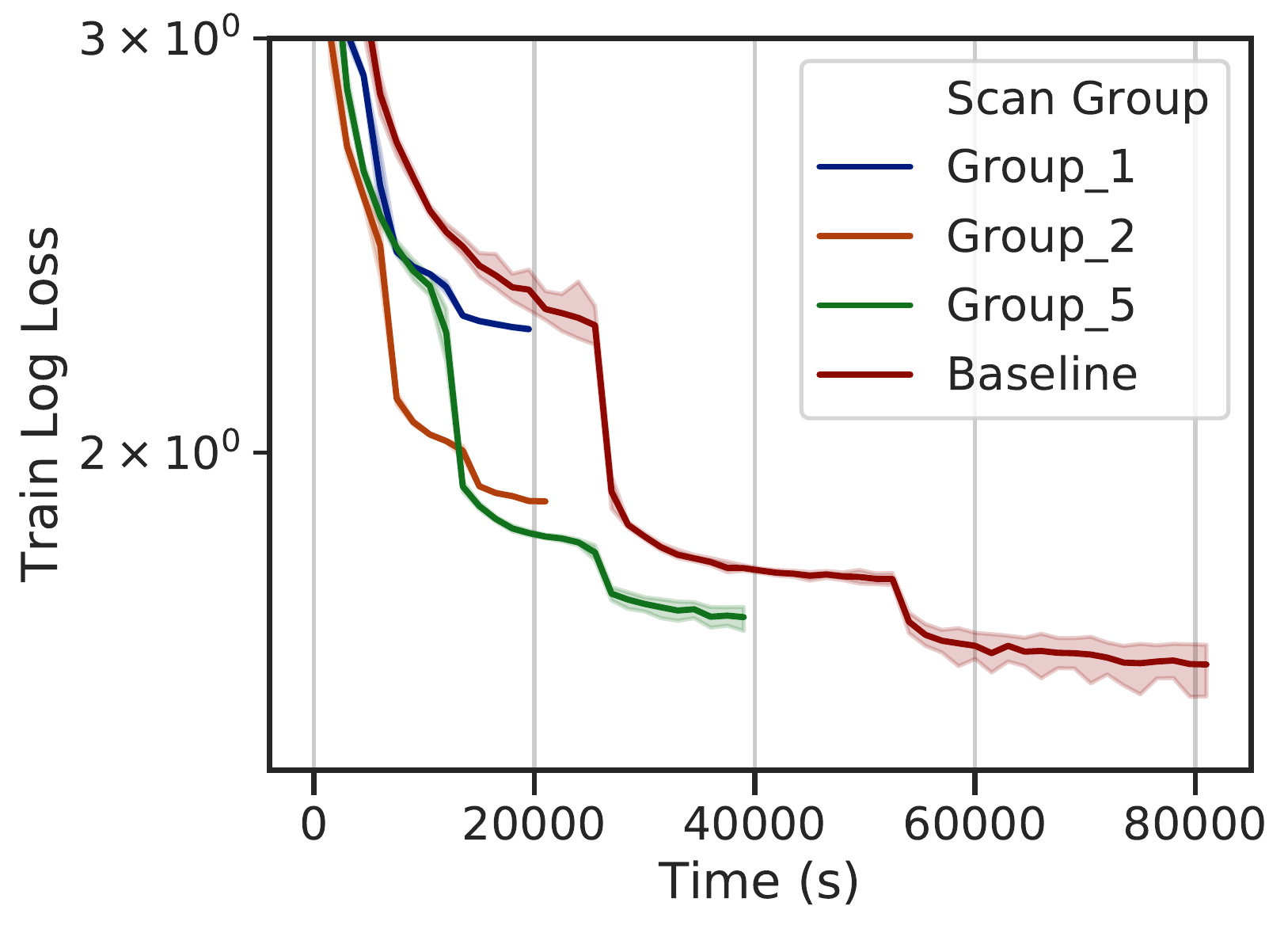}
    \caption{ImageNet}
  \end{subfigure}%
  \begin{subfigure}[t]{0.25\textwidth}
  \includegraphics[width=1.00\linewidth]{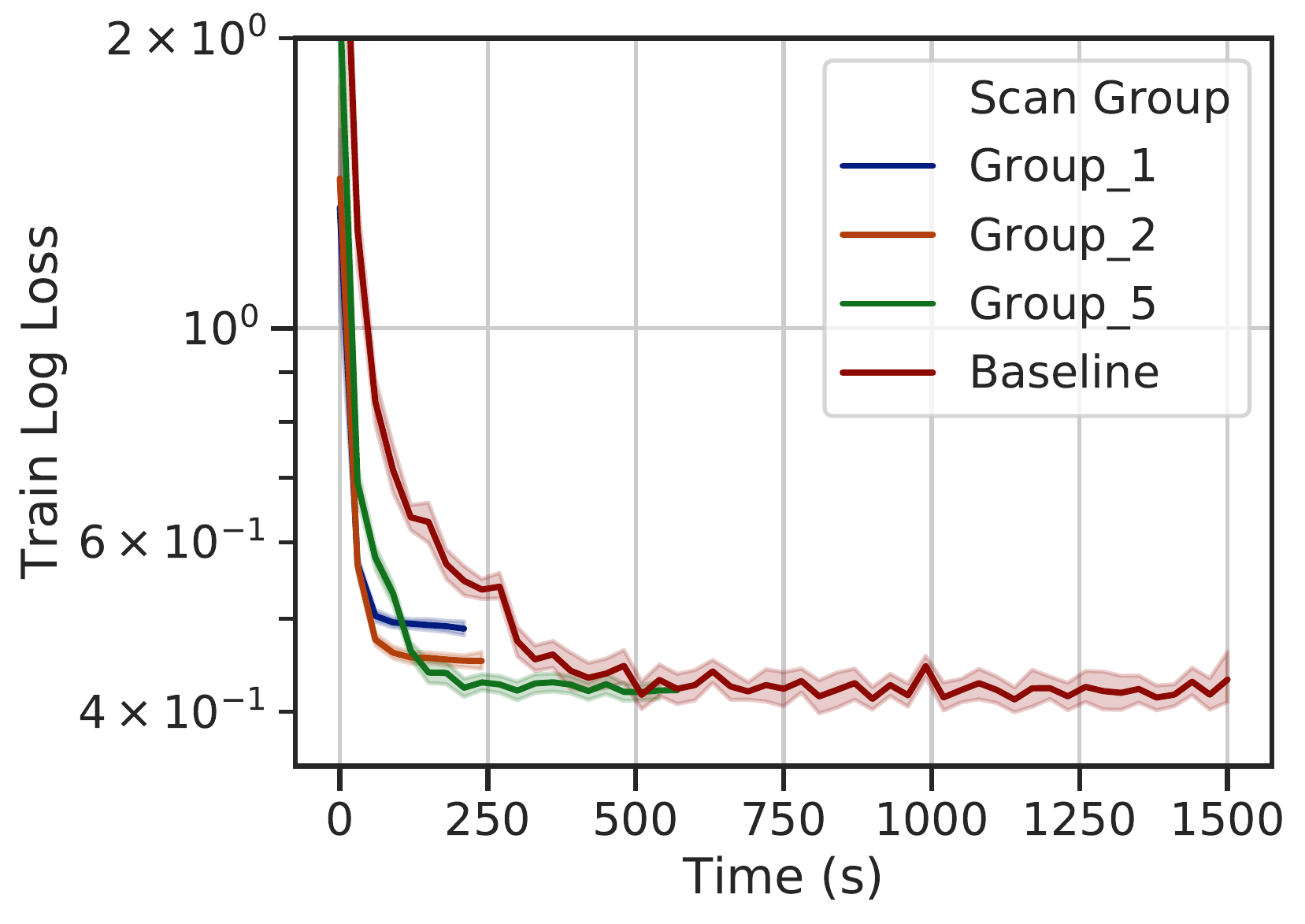}
    \caption{HAM10000}
  \end{subfigure}%
  \begin{subfigure}[t]{0.25\textwidth}
  \includegraphics[width=.99\linewidth]{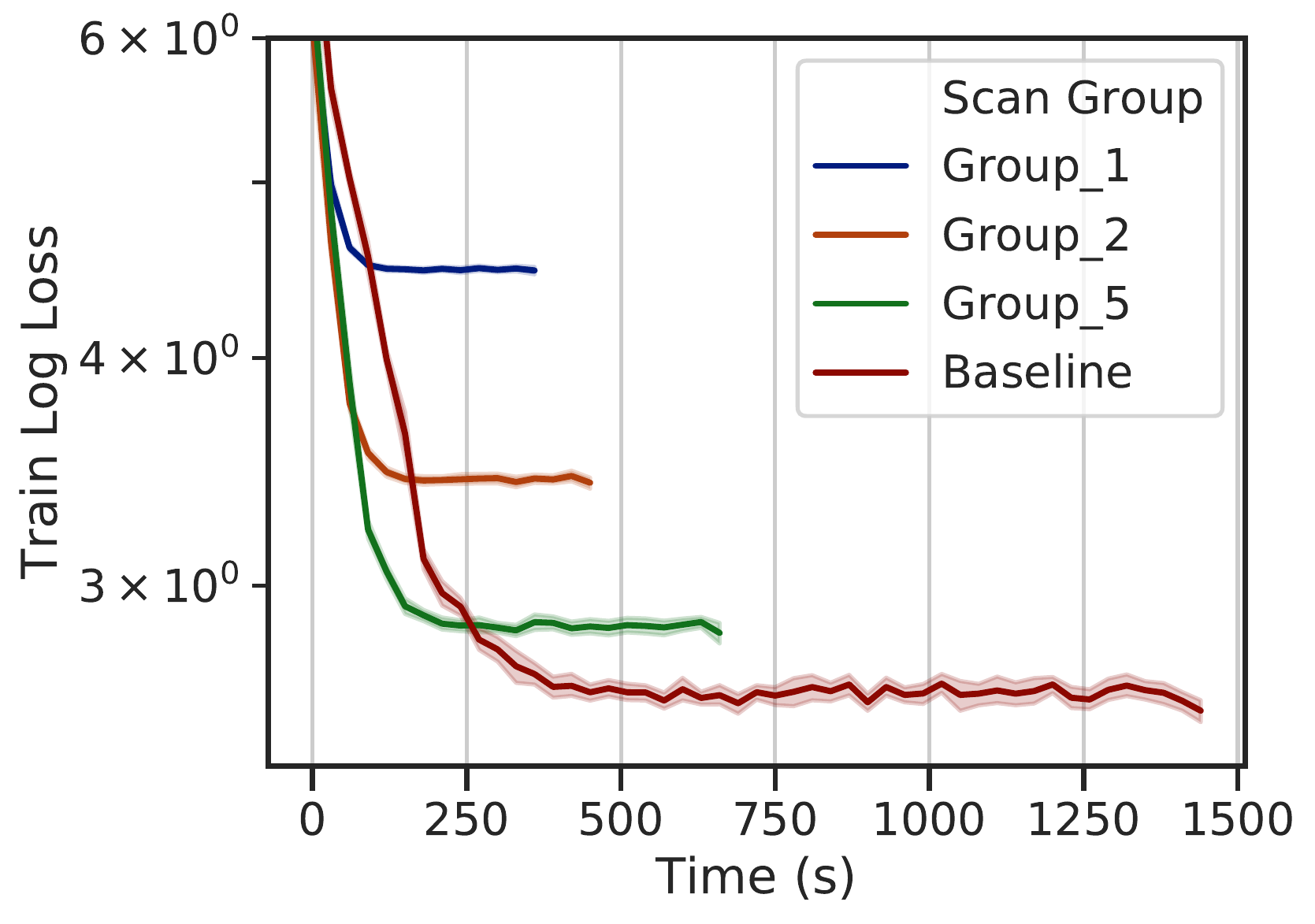}
    \caption{Stanford Cars}
  \end{subfigure}%
  \begin{subfigure}[t]{0.25\textwidth}
  \includegraphics[width=.91\linewidth]{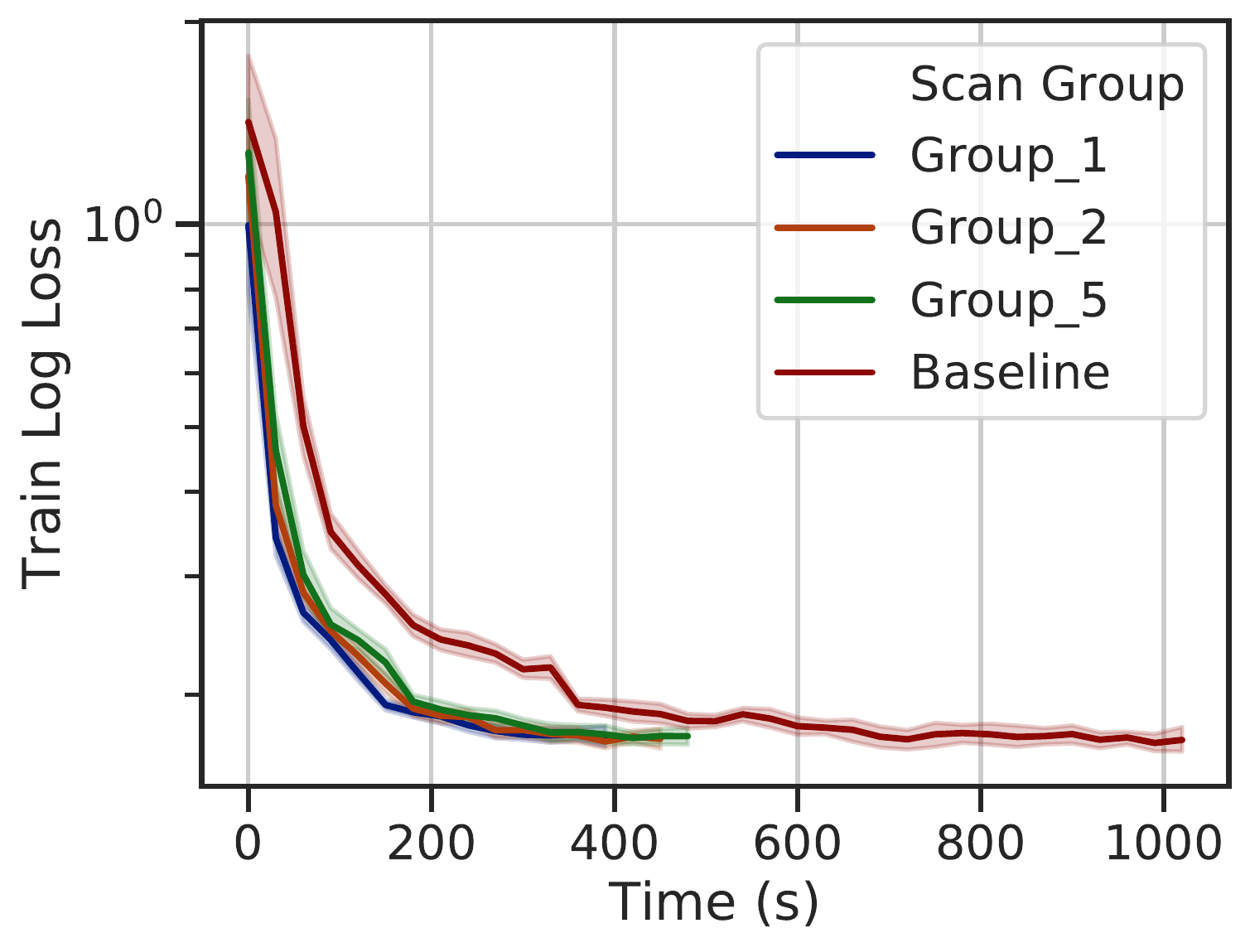}
    \caption{CelebAHQ}
  \end{subfigure}%
  \caption{%
    Training loss with ShuffleNetv2.
    Lower scan groups speed up training, but they may impact loss minimization.
    Time is the x-axis (seconds) and is relative to first epoch.
    95\% confidence intervals are shown.
  }%
  \label{fig:scan_performance_shufflenet_orca_loss_time}%
\end{figure*}

\begin{figure*}
  \centering
  \begin{subfigure}[t]{0.25\textwidth}
    \includegraphics[width=.99\linewidth]{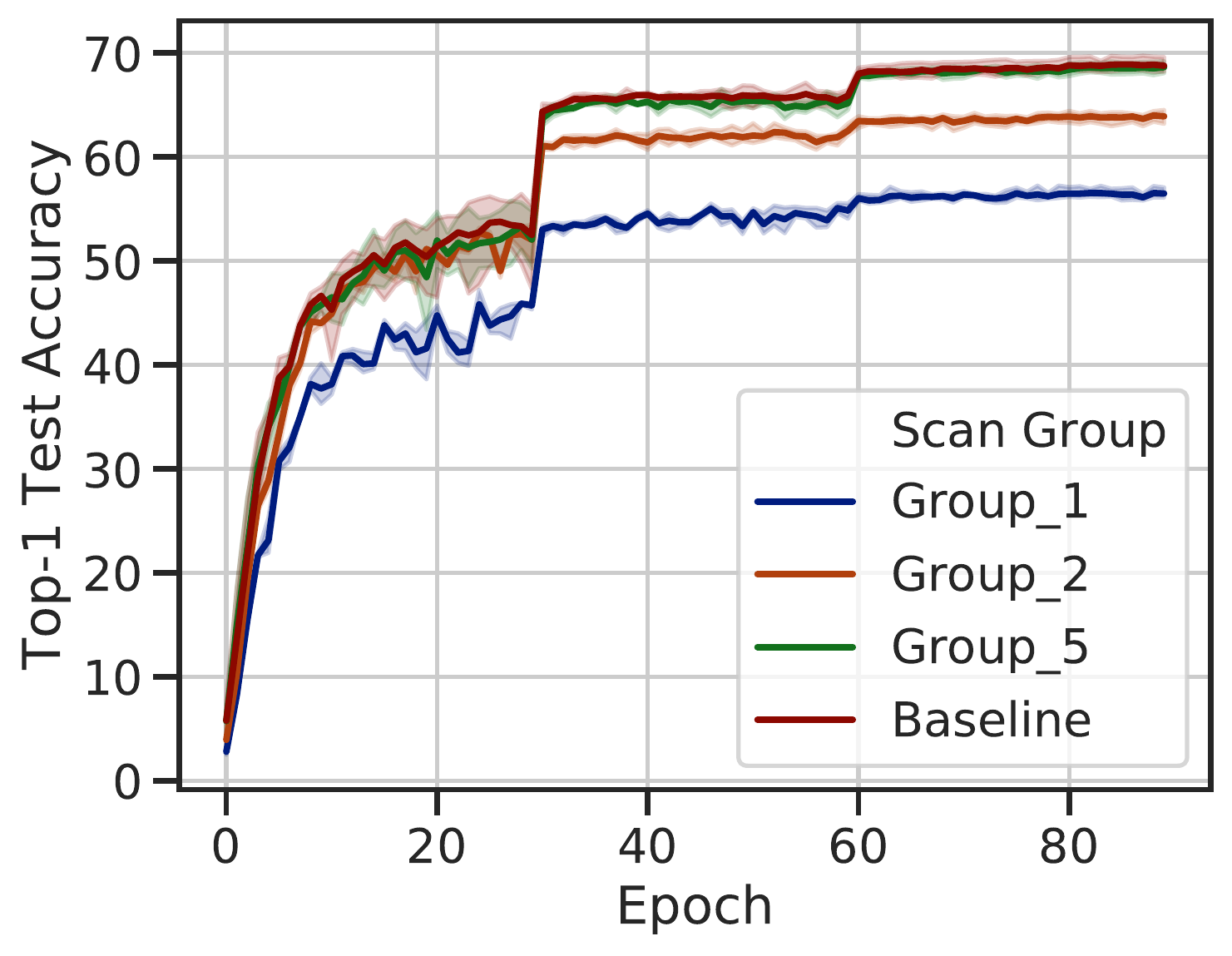}
    \caption{ImageNet}
  \end{subfigure}%
  \begin{subfigure}[t]{0.25\textwidth}
  \includegraphics[width=.99\linewidth]{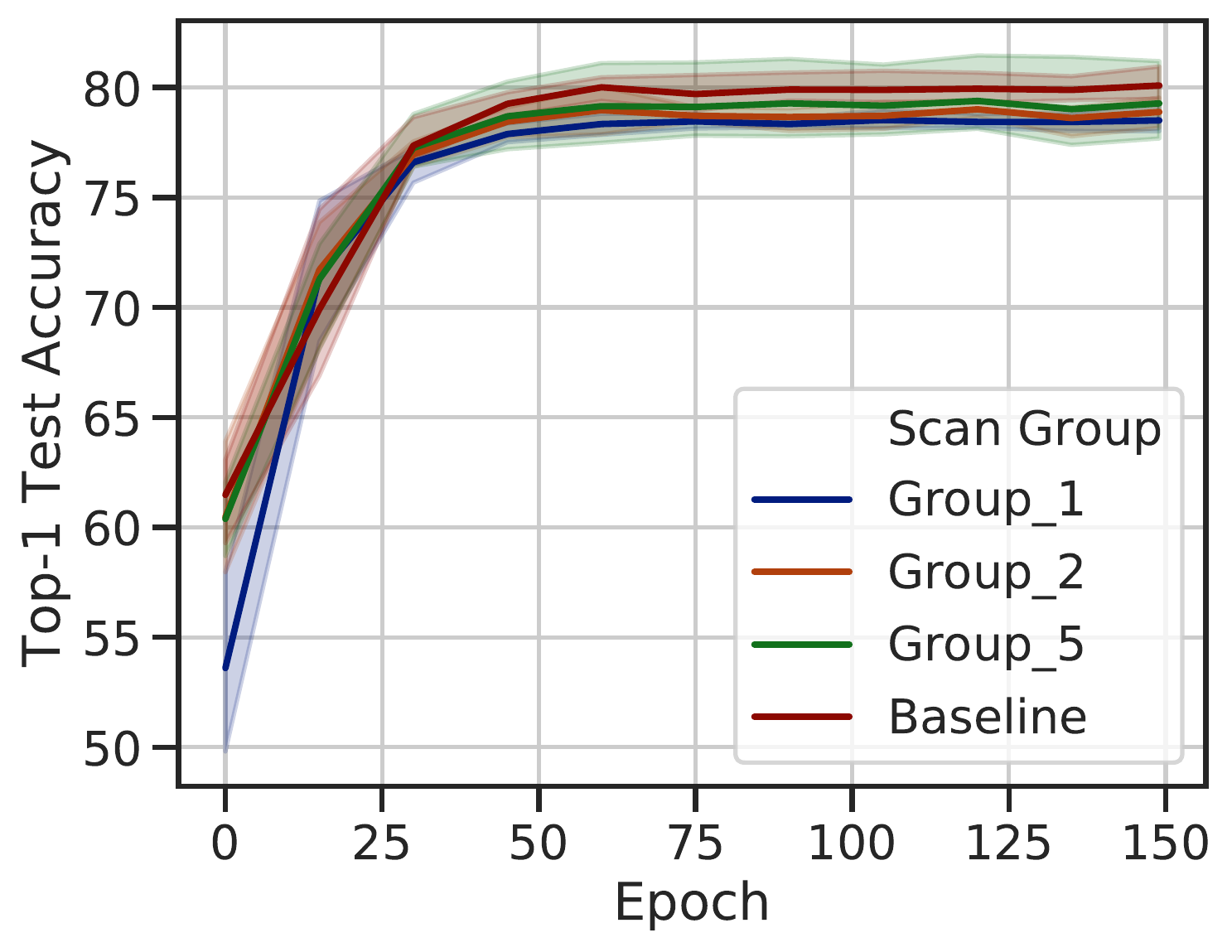}
    \caption{HAM10000}
  \end{subfigure}%
  \begin{subfigure}[t]{0.25\textwidth}
  \includegraphics[width=.99\linewidth]{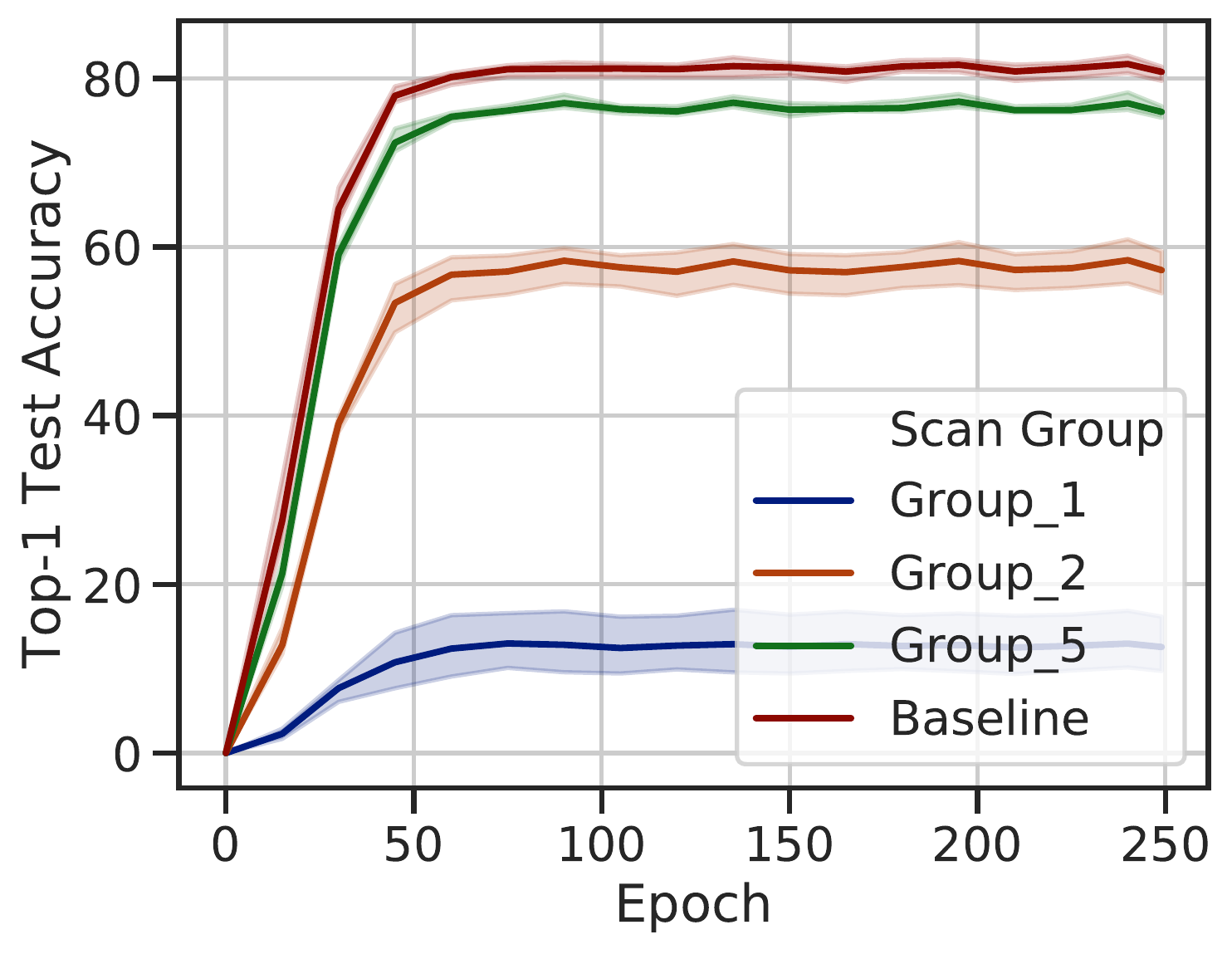}
    \caption{Stanford Cars}
  \end{subfigure}%
  \begin{subfigure}[t]{0.25\textwidth}
  \includegraphics[width=.99\linewidth]{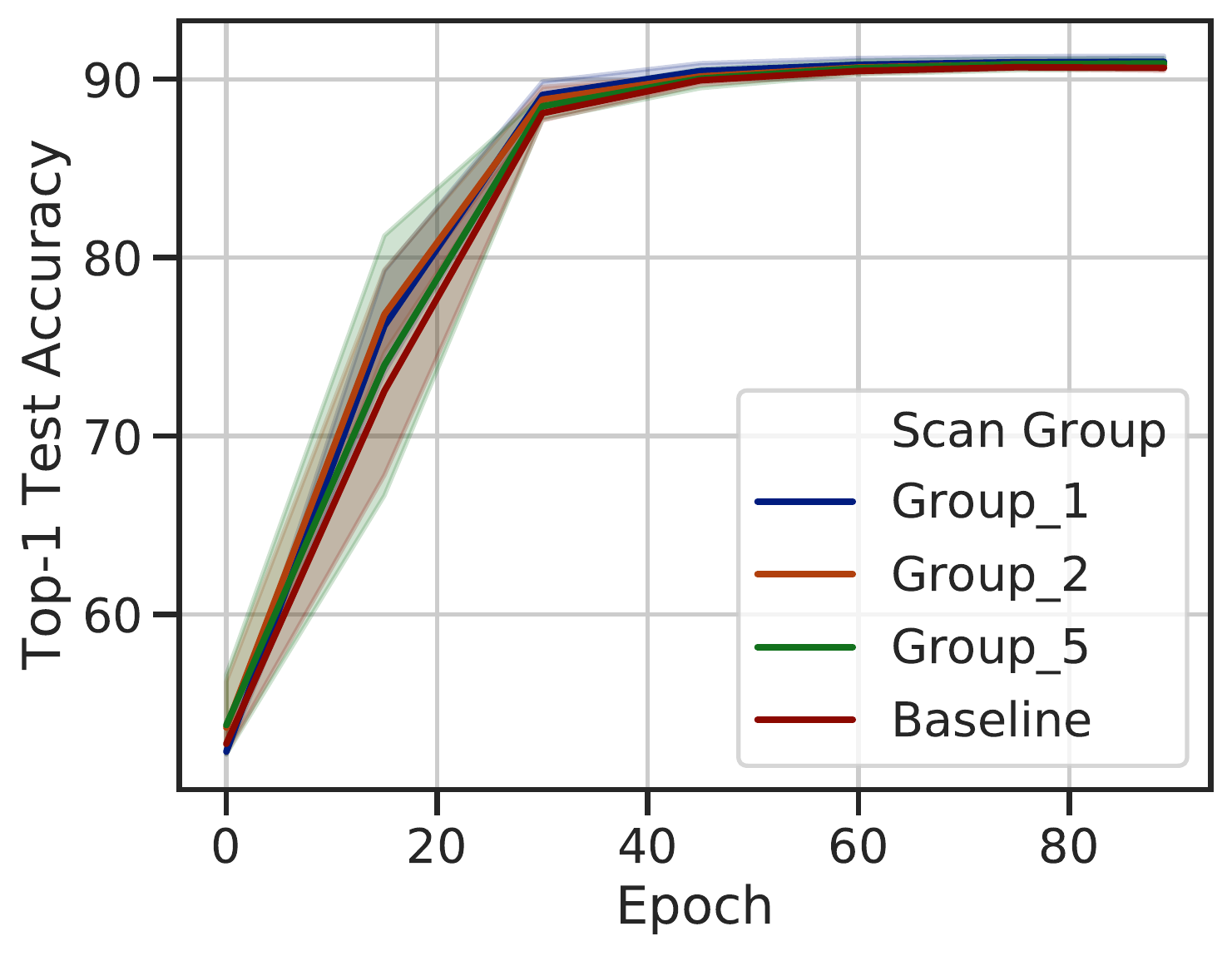}
    \caption{CelebAHQ}
  \end{subfigure}%
  \caption{%
    Testing accuracy with ResNet-18.
    Lower scan groups don't improve accuracy (e.g., if compression was a
    regularizer).
    Epochs are the x-axis.
    95\% confidence intervals are shown.
  }%
  \label{fig:scan_performance_resnet18_orca_epoch}%
\end{figure*}

\begin{figure*}
  \centering
  \begin{subfigure}[t]{0.25\textwidth}
  \includegraphics[width=.975\linewidth]{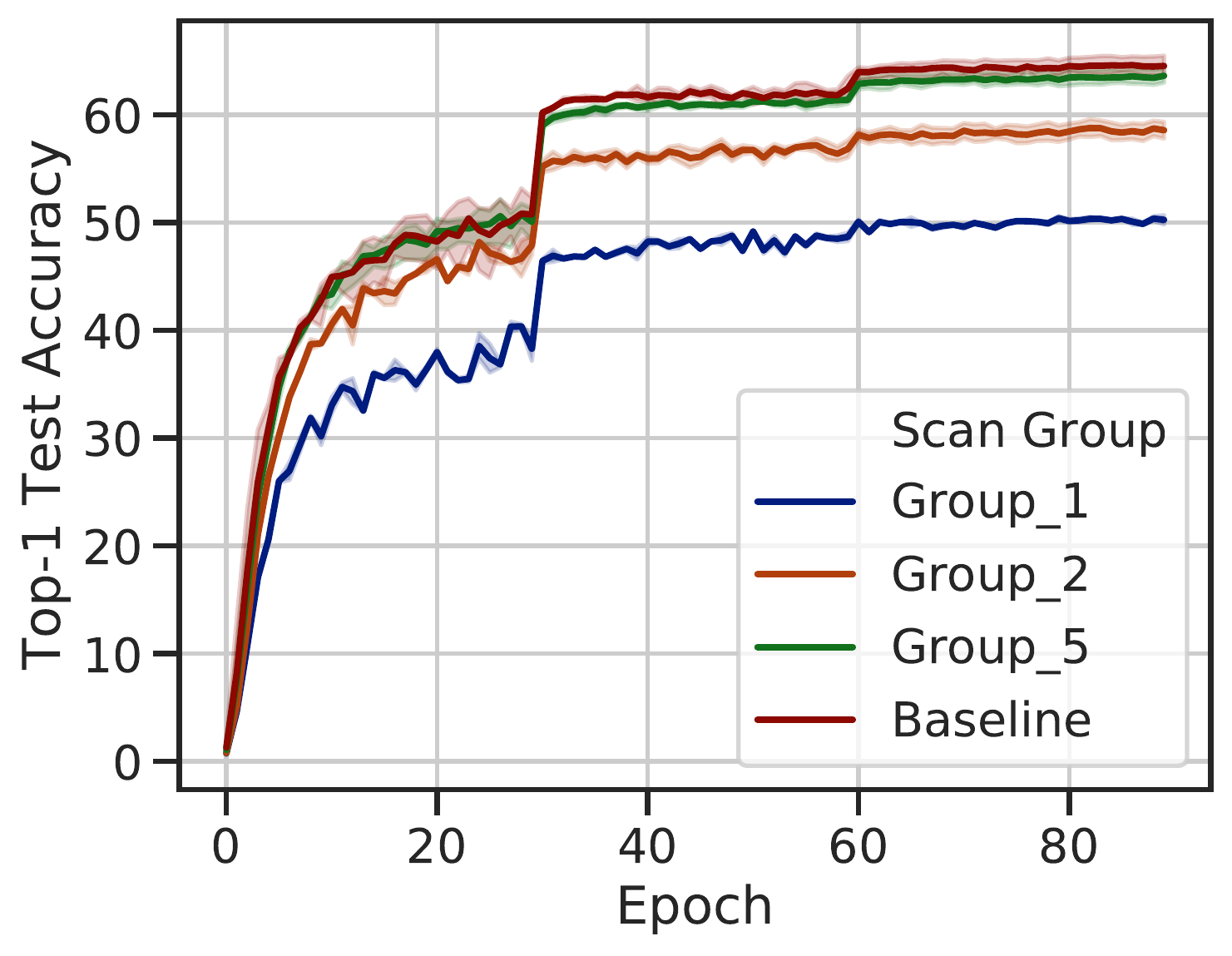}
    \caption{ImageNet}
  \end{subfigure}%
  \begin{subfigure}[t]{0.25\textwidth}
  \includegraphics[width=1.01\linewidth]{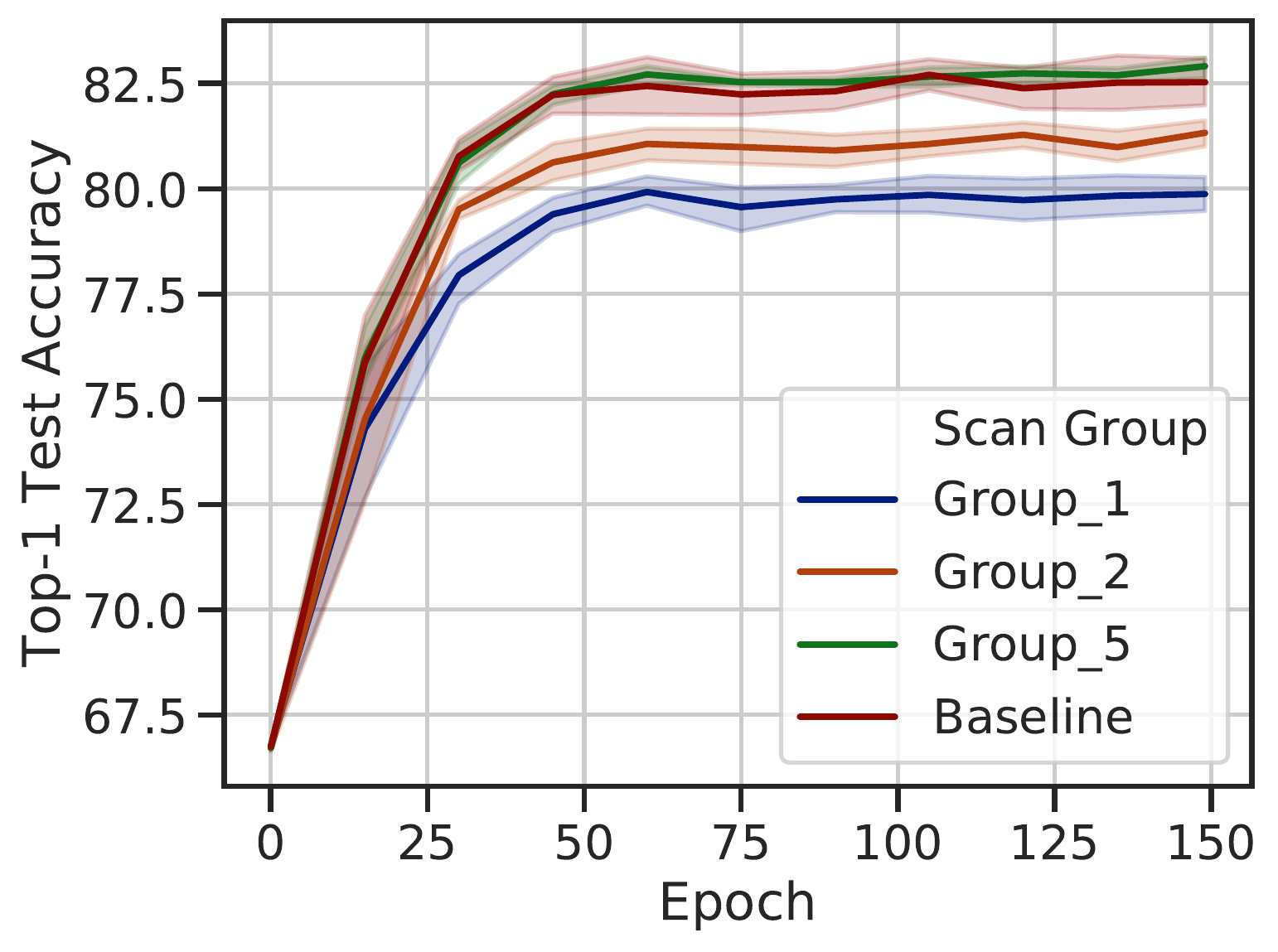}
    \caption{HAM10000}
  \end{subfigure}%
  \begin{subfigure}[t]{0.25\textwidth}
  \includegraphics[width=.98\linewidth]{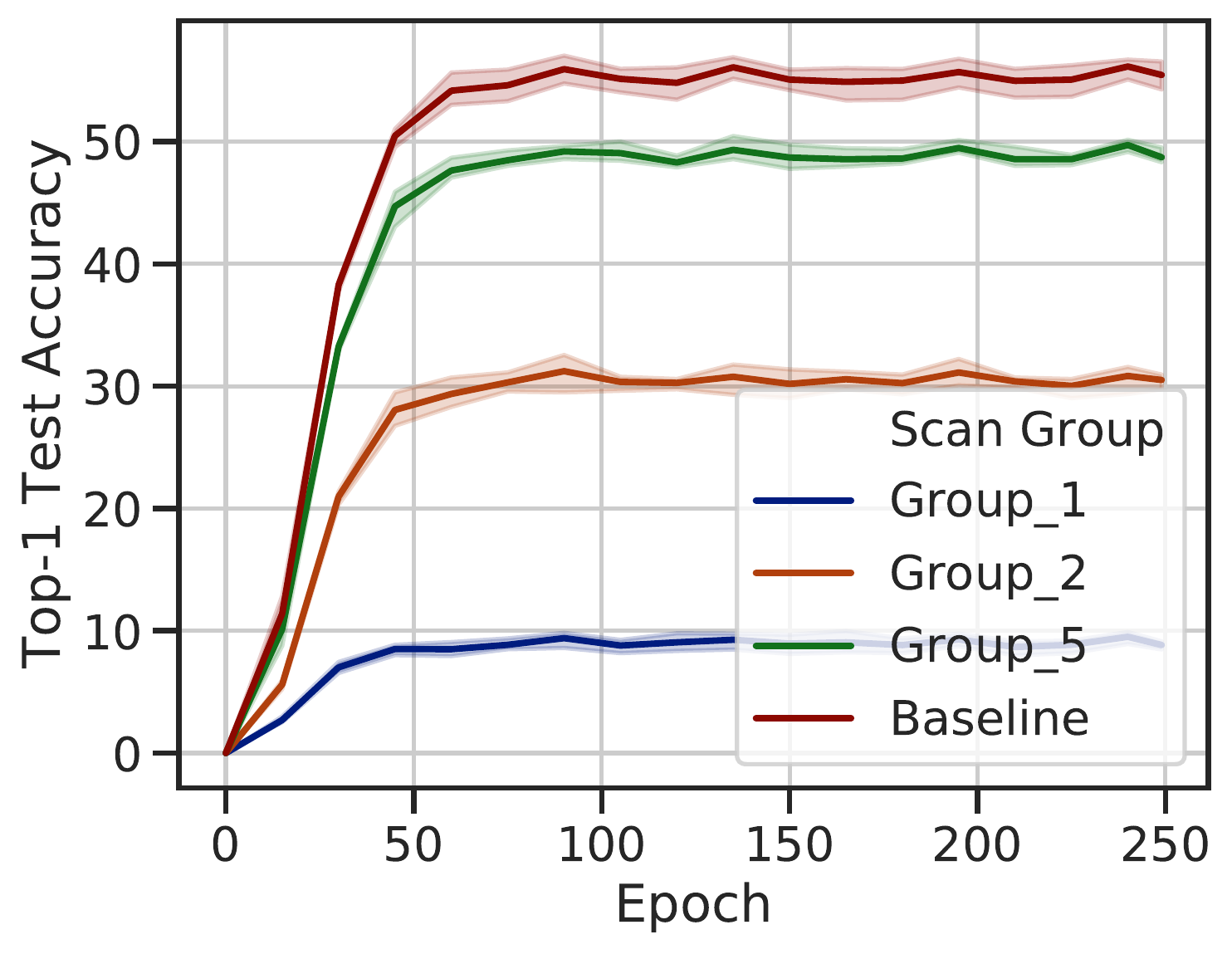}
    \caption{Stanford Cars}
  \end{subfigure}%
  \begin{subfigure}[t]{0.25\textwidth}
  \includegraphics[width=.985\linewidth]{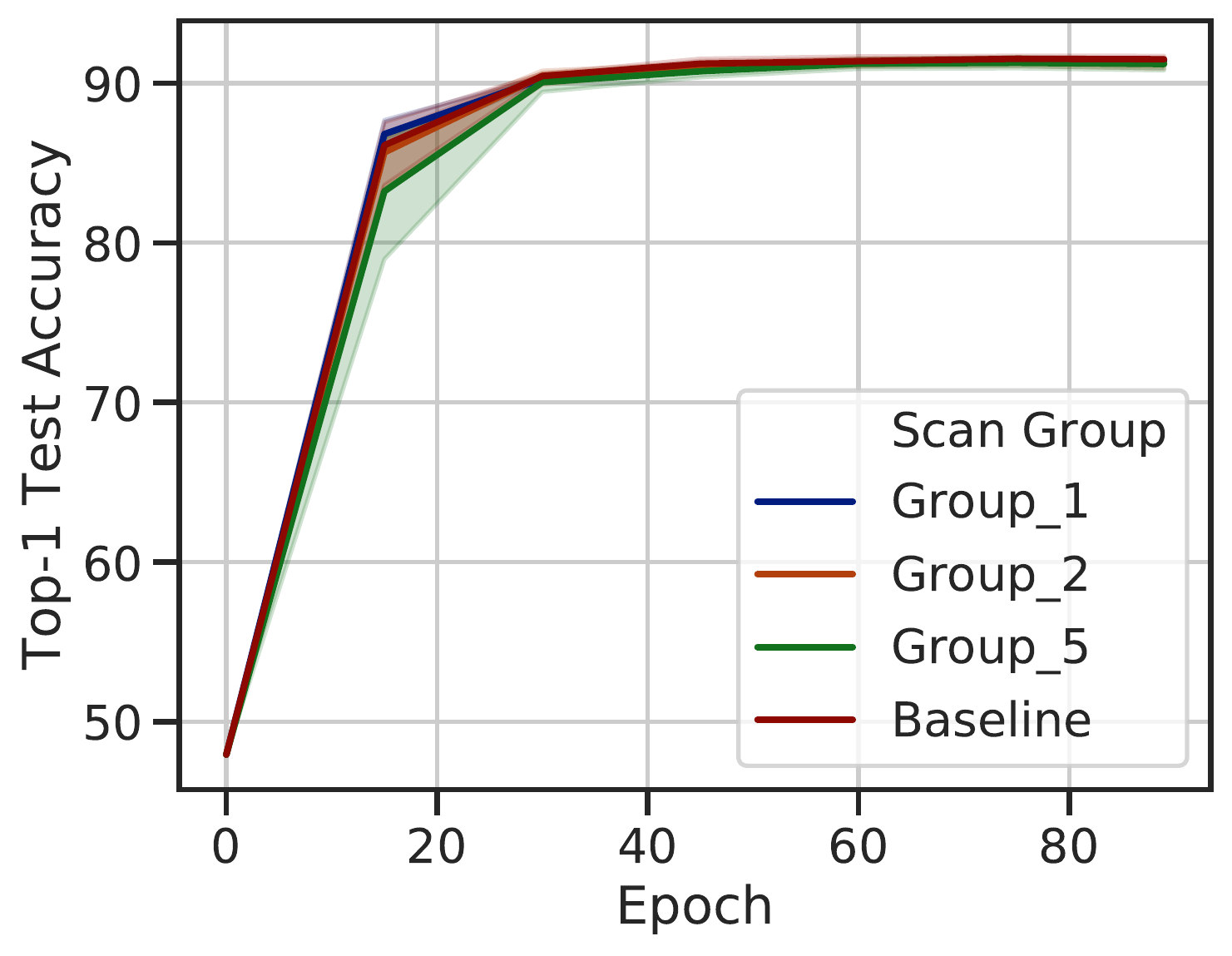}
    \caption{CelebAHQ}
  \end{subfigure}%
  \caption{%
    Testing accuracy with ShuffleNetv2.
    Lower scan groups don't improve accuracy (e.g., if compression was a
    regularizer).
    Epochs are the x-axis.
    95\% confidence intervals are shown.
  }%
  \label{fig:scan_performance_shufflenet_orca_epoch}%
\end{figure*}

\textbf{Coarse Grained vs.\ Fine Grained Cars Experiments.}
We provide the accuracy figures for reduced label sets in
Figure~\ref{fig:coarse_make_cars_performance_resnet18_orca_acc_time_resnet}
and
Figure~\ref{fig:coarse_make_cars_performance_resnet18_orca_acc_time_shufflenet}.
Make-only has 22 classes.

\begin{figure*}
  \centering
  \begin{subfigure}[t]{0.33\textwidth}
    \leftRightCrop{0.0}{cars_scan_performance_resnet18_orca_acc_time_app.pdf}{1.0}{0.0}
    \caption{Original Multiclass}
  \end{subfigure}%
  \begin{subfigure}[t]{0.33\textwidth}
    \leftRightCrop{0.0}{coarse_make_cars_scan_performance_resnet18_orca_acc_time.pdf}{1.0}{0.0}
    \caption{Make-Only}
  \end{subfigure}%
  \begin{subfigure}[t]{0.33\linewidth}
    \leftRightCrop{0.0}{coarse_binary_cars_scan_performance_resnet18_orca_acc_time_app.pdf}{1.0}{0.0}
    \caption{Binary Is-Corvette}
  \end{subfigure}%
  \vspace{-1pt}%
  \caption{%
    Test accuracy with ResNet-18 
    on a coarser version of the Stanford Cars dataset.
    The full range of classes is used for \textit{Baseline} (i.e, car make,
    model, and year create a unique class),
    only car make is used for \textit{Make-Only},
    and a binary classification task of Corvette detection is used for
    \textit{Is-Corvette}.
    The gap between scan groups closes as the task is made more simple.
    Time is the x-axis (seconds) and is relative to first epoch.
    95\% confidence intervals are shown.
  }%
  \label{fig:coarse_make_cars_performance_resnet18_orca_acc_time_resnet}%
  \vspace{-5pt}%
\end{figure*}

\begin{figure*}[h]
  \begin{subfigure}[t]{0.33\textwidth}
  \includegraphics[width=1.01\linewidth]{cars_scan_performance_shufflenet_orca_acc_time.pdf}
    \caption{Original Multiclass}
  \end{subfigure}%
  \begin{subfigure}[t]{0.33\textwidth}
    \includegraphics[width=.993\linewidth]{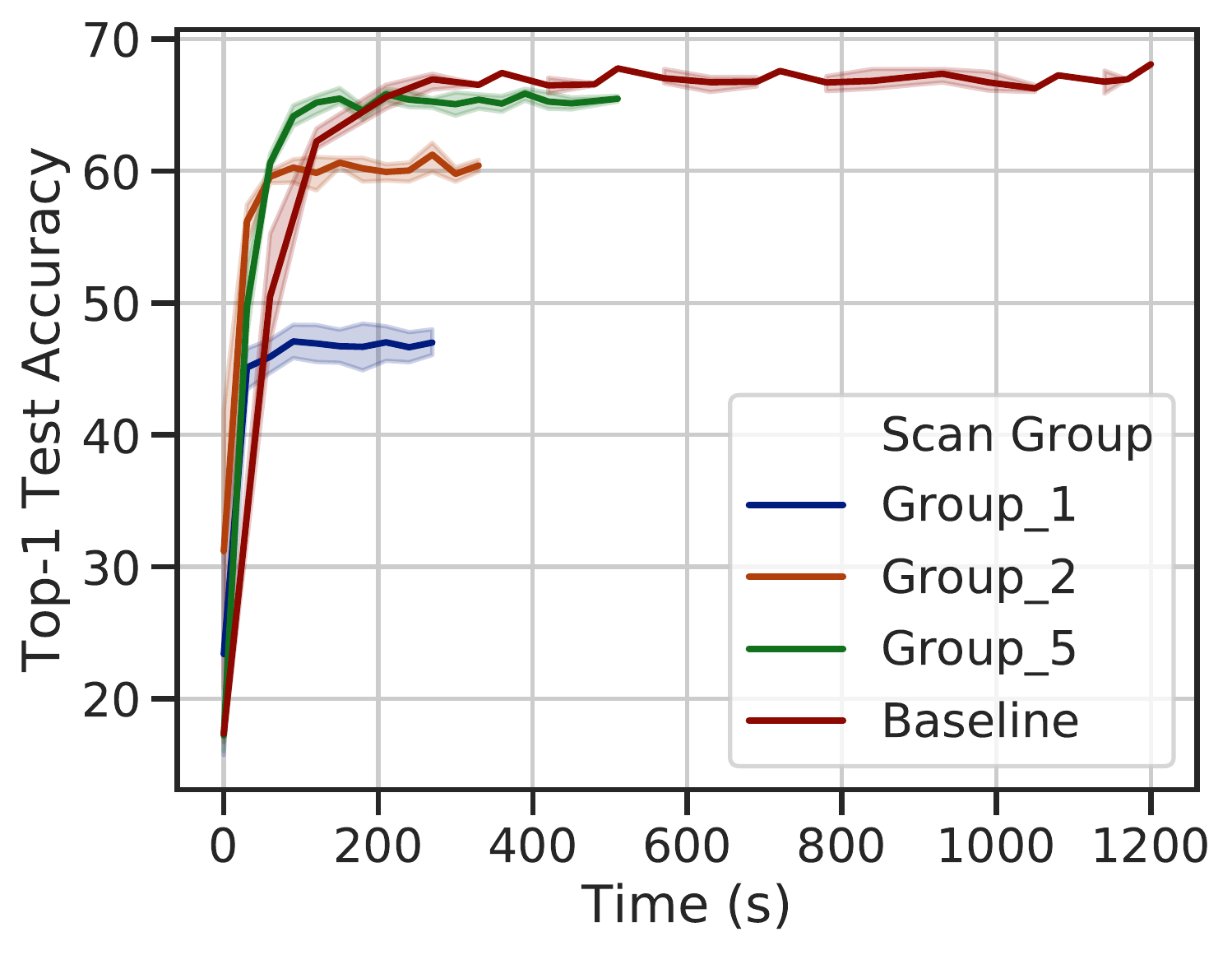}
    \caption{Make-Only}
  \end{subfigure}%
  \begin{subfigure}[t]{0.33\textwidth}
    \includegraphics[width=1.015\linewidth]{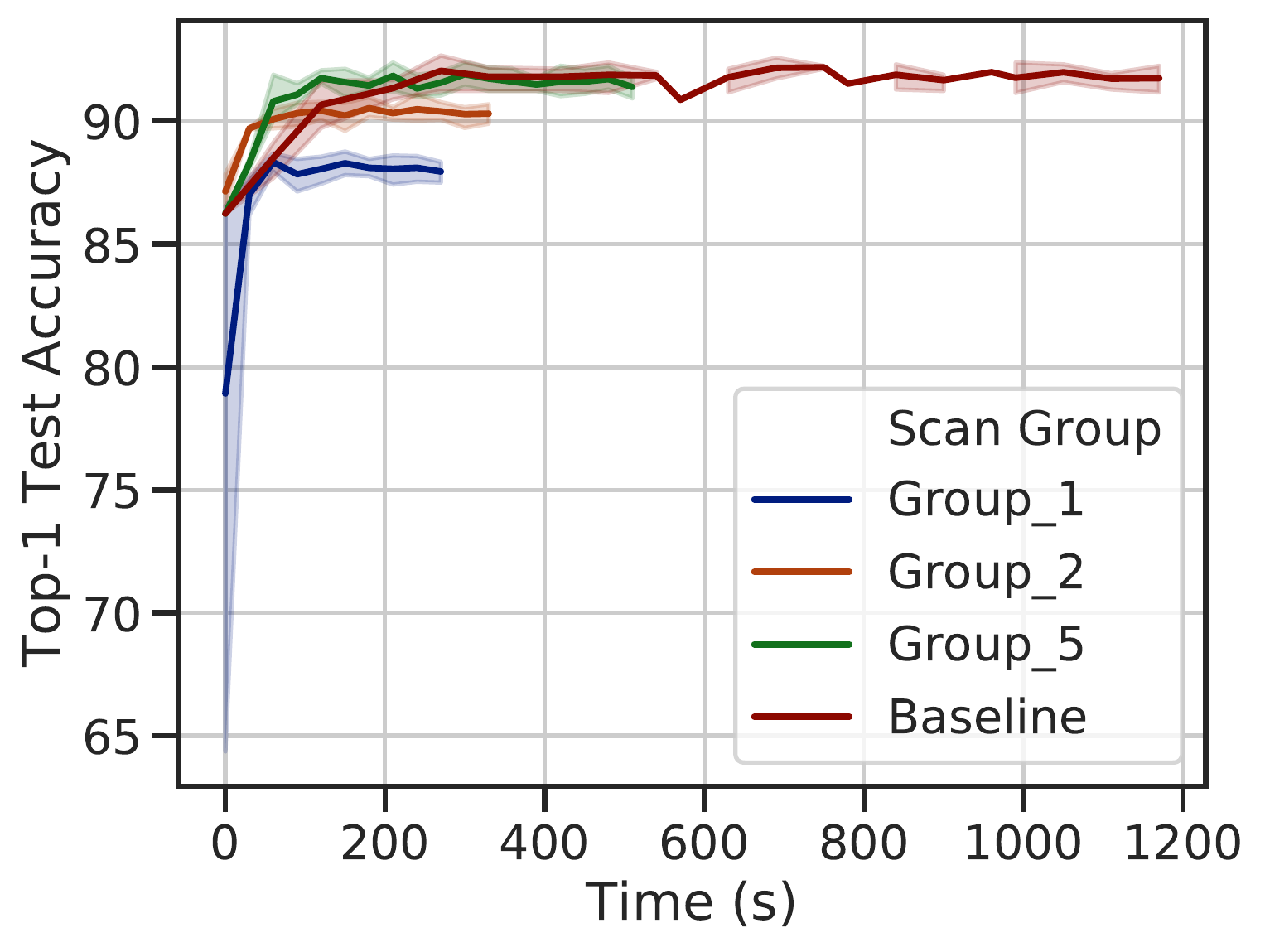}
    \caption{Binary Is-Corvette}
  \end{subfigure}%
  \caption{%
    Training accuracy with
    ShuffleNetv2
    on a coarser version of the Stanford Cars dataset.
    The full range of classes is used for \textit{Baseline} (i.e, car make,
    model, and year create a unique class),
    only car make is used for \textit{Make-Only},
    and a binary classification task of Corvette detection is used for
    \textit{Is-Corvette}.
    The gap between scan groups closes as the task is made more simple.
    Time is the x-axis (seconds) and is relative to first epoch.
    95\% confidence intervals are shown.
  }%
  \label{fig:coarse_make_cars_performance_resnet18_orca_acc_time_shufflenet}%
\end{figure*}

\subsection{Image Examples by Scan}%
\label{sec:examples}
We provide image examples from each dataset that illustrate each scan
group in Figure~\ref{fig:jpeg_scans_big_all}.
Reading more scans, and, thus, data, from a progressive image results in higher
fidelity images, but there are diminishing returns.
Images can use a remarkably low amount of scan groups without impacting visual
quality, which manifests in bandwidth savings if used accordingly.
\addtocounter{figure}{1}%
\begin{figure*}
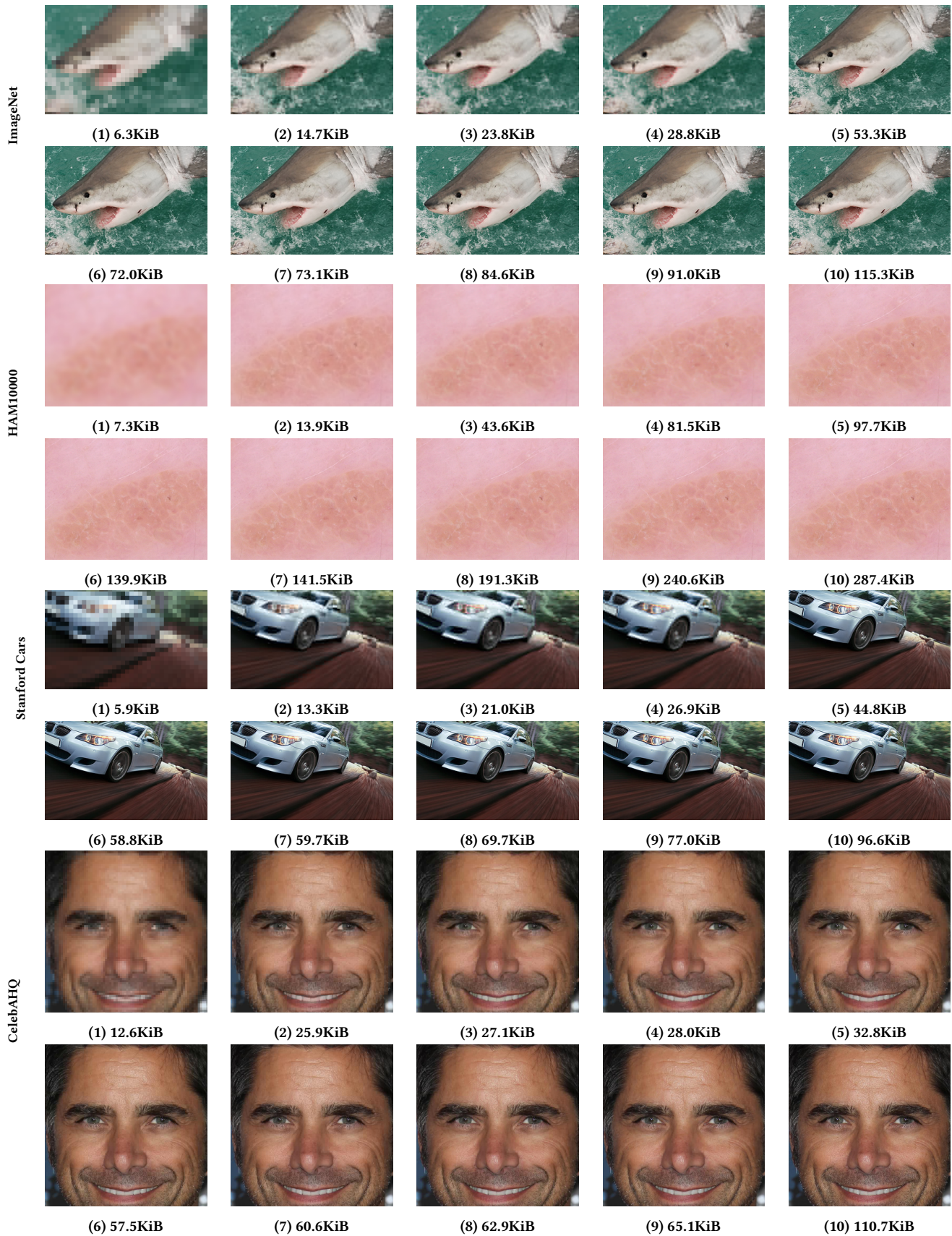

  \setlength\tabcolsep{1pt}
  \settowidth\rotheadsize{ImageNet}
  \renewcommand\thesubfigure{\arabic{subfigure}}
  \setcounter{subfigure}{0}%
  \begin{tabularx}{\linewidth}{l XXX}%
    \rothead{\textbf{\mbox{ImageNet}}} &
  \begin{subfigure}[t]{0.2\hsize}
    \centerCropEx{n01484850_4733_1.jpg}
    \caption{6.3KiB}
  \end{subfigure}%
  \begin{subfigure}[t]{0.2\hsize}
    \centerCropEx{n01484850_4733_2.jpg}
    \caption{14.7KiB}
  \end{subfigure}%
  \begin{subfigure}[t]{0.2\hsize}
    \centerCropEx{n01484850_4733_3.jpg}
    \caption{23.8KiB}
  \end{subfigure}%
  \begin{subfigure}[t]{0.2\hsize}
    \centerCropEx{n01484850_4733_4.jpg}
    \caption{28.8KiB}
  \end{subfigure}%
  \begin{subfigure}[t]{0.2\hsize}
    \centerCropEx{n01484850_4733_5.jpg}
    \caption{53.3KiB}
  \end{subfigure}%
  \newline
  \begin{subfigure}[t]{0.2\hsize}
    \centerCropEx{n01484850_4733_6.jpg}
    \caption{72.0KiB}
  \end{subfigure}%
  \begin{subfigure}[t]{0.2\hsize}
    \centerCropEx{n01484850_4733_7.jpg}
    \caption{73.1KiB}
  \end{subfigure}%
  \begin{subfigure}[t]{0.2\hsize}
    \centerCropEx{n01484850_4733_8.jpg}
    \caption{84.6KiB}
  \end{subfigure}%
  \begin{subfigure}[t]{0.2\hsize}
    \centerCropEx{n01484850_4733_9.jpg}
    \caption{91.0KiB}
  \end{subfigure}%
  \begin{subfigure}[t]{0.2\hsize}
    \centerCropEx{n01484850_4733_10.jpg}
    \caption{115.3KiB}
  \end{subfigure} \\
  \setcounter{subfigure}{0}%
    \rothead{\textbf{HAM10000}} &
  \begin{subfigure}[t]{0.2\hsize}
    \centerCropEx{ISIC_0024643_001.jpg}
    \caption{7.3KiB}
  \end{subfigure}%
  \begin{subfigure}[t]{0.2\hsize}
    \centerCropEx{ISIC_0024643_002.jpg}
    \caption{13.9KiB}
  \end{subfigure}%
  \begin{subfigure}[t]{0.2\hsize}
    \centerCropEx{ISIC_0024643_003.jpg}
    \caption{43.6KiB}
  \end{subfigure}%
  \begin{subfigure}[t]{0.2\hsize}
    \centerCropEx{ISIC_0024643_004.jpg}
    \caption{81.5KiB}
  \end{subfigure}%
  \begin{subfigure}[t]{0.2\hsize}
    \centerCropEx{ISIC_0024643_005.jpg}
    \caption{97.7KiB}
  \end{subfigure}%
  \newline
  \begin{subfigure}[t]{0.2\hsize}
    \centerCropEx{ISIC_0024643_006.jpg}
    \caption{139.9KiB}
  \end{subfigure}%
  \begin{subfigure}[t]{0.2\hsize}
    \centerCropEx{ISIC_0024643_007.jpg}
    \caption{141.5KiB}
  \end{subfigure}%
  \begin{subfigure}[t]{0.2\hsize}
    \centerCropEx{ISIC_0024643_008.jpg}
    \caption{191.3KiB}
  \end{subfigure}%
  \begin{subfigure}[t]{0.2\hsize}
    \centerCropEx{ISIC_0024643_009.jpg}
    \caption{240.6KiB}
  \end{subfigure}%
  \begin{subfigure}[t]{0.2\hsize}
    \centerCropEx{ISIC_0024643_010.jpg}
    \caption{287.4KiB}
  \end{subfigure} \\
  \setcounter{subfigure}{0}%
  \rothead{\textbf{\mbox{Stanford Cars}}} &
  \begin{subfigure}[t]{0.2\hsize}
    \centerCropEx{01854_1.jpg}
    \caption{5.9KiB}
  \end{subfigure}%
  \begin{subfigure}[t]{0.2\hsize}
    \centerCropEx{01854_2.jpg}
    \caption{13.3KiB}
  \end{subfigure}%
  \begin{subfigure}[t]{0.2\hsize}
    \centerCropEx{01854_3.jpg}
    \caption{21.0KiB}
  \end{subfigure}%
  \begin{subfigure}[t]{0.2\hsize}
    \centerCropEx{01854_4.jpg}
    \caption{26.9KiB}
  \end{subfigure}%
  \begin{subfigure}[t]{0.2\hsize}
    \centerCropEx{01854_5.jpg}
    \caption{44.8KiB}
  \end{subfigure}%
  \newline
  \begin{subfigure}[t]{0.2\hsize}
    \centerCropEx{01854_6.jpg}
    \caption{58.8KiB}
  \end{subfigure}%
  \begin{subfigure}[t]{0.2\hsize}
    \centerCropEx{01854_7.jpg}
    \caption{59.7KiB}
  \end{subfigure}%
  \begin{subfigure}[t]{0.2\hsize}
    \centerCropEx{01854_8.jpg}
    \caption{69.7KiB}
  \end{subfigure}%
  \begin{subfigure}[t]{0.2\hsize}
    \centerCropEx{01854_9.jpg}
    \caption{77.0KiB}
  \end{subfigure}%
  \begin{subfigure}[t]{0.2\hsize}
    \centerCropEx{01854_10.jpg}
    \caption{96.6KiB}
  \end{subfigure} \\
  \setcounter{subfigure}{0}%
    \rothead{\textbf{\mbox{CelebAHQ}}} &
  \begin{subfigure}[t]{0.2\hsize}
    \centerCropEx{01039_1.jpg}
    \caption{12.6KiB}
  \end{subfigure}%
  \begin{subfigure}[t]{0.2\hsize}
    \centerCropEx{01039_2.jpg}
    \caption{25.9KiB}
  \end{subfigure}%
  \begin{subfigure}[t]{0.2\hsize}
    \centerCropEx{01039_3.jpg}
    \caption{27.1KiB}
  \end{subfigure}%
  \begin{subfigure}[t]{0.2\hsize}
    \centerCropEx{01039_4.jpg}
    \caption{28.0KiB}
  \end{subfigure}%
  \begin{subfigure}[t]{0.2\hsize}
    \centerCropEx{01039_5.jpg}
    \caption{32.8KiB}
  \end{subfigure}%
  \newline
  \begin{subfigure}[t]{0.2\hsize}
    \centerCropEx{01039_6.jpg}
    \caption{57.5KiB}
  \end{subfigure}%
  \begin{subfigure}[t]{0.2\hsize}
    \centerCropEx{01039_7.jpg}
    \caption{60.6KiB}
  \end{subfigure}%
  \begin{subfigure}[t]{0.2\hsize}
    \centerCropEx{01039_8.jpg}
    \caption{62.9KiB}
  \end{subfigure}%
  \begin{subfigure}[t]{0.2\hsize}
    \centerCropEx{01039_9.jpg}
    \caption{65.1KiB}
  \end{subfigure}%
  \begin{subfigure}[t]{0.2\hsize}
    \centerCropEx{01039_10.jpg}
    \caption{110.7KiB}
  \end{subfigure}%
  \end{tabularx}
  \caption{%
    Examples of scans with the corresponding file size.
    Images are center cropped for demonstration.
    The amount of scans needed to hit an acceptable level of fidelity is small.
    Having a larger final size
    results in more
    savings for earlier scans.
  }%
  \label{fig:jpeg_scans_big_all}%
\end{figure*}

\end{document}